\documentclass[a4paper,11pt]{article}
\pdfoutput=1
\usepackage{a4wide}
\usepackage{amsmath,amsfonts,amssymb,amsthm}
\usepackage[colorlinks,citecolor=blue]{hyperref}
\usepackage{enumitem}
\usepackage{bbm}
\usepackage{natbib}
\usepackage{microtype}
\usepackage[title,titletoc,toc]{appendix}
\usepackage{mathrsfs}
\setcitestyle{square}

\newtheorem{theorem}{Theorem}[section]
\newtheorem{lemma}[theorem]{Lemma}

\theoremstyle{definition}
\newtheorem{definition}[theorem]{Definition}

\newtheorem{remark}[theorem]{Remark}

\numberwithin{equation}{section}
\numberwithin{table}{section}

\def\ZZ{\mathbb{{Z}}}
\def\RR{\mathbb{{R}}}
\def\NN{\mathbb{{N}}}

\def\PP{\mathbb{{P}}}
\def\BB{\mathbb{{B}}}
\def\DD{\mathbb{{D}}}

\begin{document}
\title{Error bounds in Sobolev norms for approximations with norm constrained ReLU neural networks}
\author{
Xianjun Li \thanks{School of Statistics and Data Science, Capital University of Economics and Business, Beijing, China. E-mail: lixianjun@cueb.edu.cn.}
\and
Yunfei Yang \thanks{School of Mathematics (Zhuhai) and Guangdong Province Key Laboratory of Computational Science, Sun Yat-sen University, Zhuhai, China. Corresponding author, E-mail: yangyunfei@mail.sysu.edu.cn.}
}
\date{}
\maketitle

\begin{abstract}
Recent studies have shown that smooth functions can be well approximated by ReLU neural networks with path norm constraint on the weights. We extend these results from uniform approximation to approximation in Sobolev norm. Specifically, we analyze how well Sobolev functions in $W^{n,p}$ can be approximated by neural networks with width $W$, depth $L$ and path norm bounded by $K$, when the approximation error is measured in the $W^{1,p}$-norm. For shallow networks with depth $L=1$, we derive the approximation error bound $\mathcal{O}(\max\{W^{-(n-1)/d}, K^{-(n-1)/(s-n)}\})$, when the smoothness index satisfies $n<s=(d+3)/2$ and the input is $d$-dimensional. For deep networks, we remove the restriction on the smoothness by showing that the approximation bound $\mathcal{O}(K^{-(n-1)/(d+d/p+1)})$ holds if the width $W$ and depth $L$ are sufficiently large.

\smallskip
\noindent \textbf{Keywords:} neural networks, Sobolev spaces, approximation rates, weight constraints
\end{abstract}

\section{Introduction}

Neural networks have become fundamental tools in modern machine learning, with wide-ranging applications in
including image processing \citep{krizhevsky2017imagenet,lecun2015deep}, speech recognition \citep{dahl2012context,hinton2012deep},
and natural language processing \citep{young2018recent}. The approximation capacity of neural networks has been a central research theme for decades, from  universal approximation properties of shallow networks \citep{cybenko1989approximation, hornik1991approximation, pinkus1999approximation} to  approximation rates for deep networks across various function classes \citep{yarotsky2017error, yarotsky2018optimal, petersen2018optimal, yarotsky2020phase, shen2020deep,lu2021deep,montanelli2021deep}.

In practice, however, neural networks are trained with regularization techniques that impose Lipschitz constraints or bounds on the weight norms, which control generalization and robustness \citep{bartlett1998sample, bartlett2017spectrally, neyshabur2015norm, golowich2020size,cisse2017parseval}. These regularization methods often impose explicit or implicit constraints on some norms of the weights, which substantially limit the model's expressivity. Indeed, as shown in \citep{huster2019limitations}, ReLU neural networks with certain weight constraints cannot represent simple functions such as the absolute value. This limitation is particularly critical when the goal is not merely to approximate a function, but also its derivatives, which is a requirement that arises naturally in the context of solving partial differential equations (PDEs) by deep learning-based methods \citep{yu2018deep,sirignano2018dgm,kutyniok2022theoretical,lagaris1998artificial,lu2022apriori,siegel2023greedy}.
This raises the question of how norm constraints on the weights affect the approximation capacity of neural networks in Sobolev norms.

To contextualize our work, we briefly review the relevant literature. Extensive results have established approximation rates for both shallow and deep neural networks in terms of the number of parameters (weights or neurons) or network size (e.g., width or depth), with error measured in various function-space norms \citep{devore1996approximation, petrushev1998approximation, yarotsky2017error, guhring2020error, lu2021deep, shen2020deep, shen2022optimal, yang2023nearlyop, yang2023nearly, siegel2023optimal, yang2025onoptimal}. However, these results do not address the effect of explicit norm constraints on the network weights themselves.
For shallow networks, approximation results under various  norm constraints have been explored from different perspectives,
including  \citep{bach2017breaking, weinan2019priori, weinan2022barron, liu2024learning,siegel2024sharp, siegel2025optimal, mao2026approximation,li2026shallow}. In particular, \citet{yang2025optimal} derived a rate $\mathcal{O}(\max\{W^{-n/d}, K^{-2n/(d+3-2n)}\})$ for smooth functions in the $L^\infty$-norm for shallow ReLU networks with width $W$ and a path norm constraint $K$.
For deep networks, approximation bounds under explicit norm constraints have also been established in \citep{jiao2023approximation, maiale2025approximation, huang2026learning}, also primarily in the $L^\infty$-norm.
A separate line of work has considered bounds on individual weight magnitudes \citep{petersen2018optimal,boelcskei2019optimal,schmidthieber2021kolmogorov,elbrachter2021deep, guhring2021approximation, de2021approximation, belomestny2023simultaneous}.  To the best of our knowledge, the case of approximation in Sobolev norms for norm constrained networks remains open.

In this paper, we address this gap by deriving approximation rates for both shallow and deep norm constrained ReLU neural networks. Specifically, for functions in the unit ball of the Sobolev space  $W^{n,p}$ with smoothness index $n\in\mathbb{N}_{\ge 2}$, we obtain rates on the unit ball $\mathbb{B}^d$ for shallow norm constrained networks and on the unit cube $(0,1)^d$ for deep norm constrained  networks, with the error measured in the $W^{1,p}$-norm.
To be concrete, let $\phi : \mathbb{R}^d \to \mathbb{R}$ be a function computed by a ReLU neural network with width $W$ and depth $L$. In the $\ell$-th layer, the neural network computes an affine transformation $T_\ell(x) = A_\ell x + b_\ell$ and then applies the ReLU activation function element-wise (no activation in the output layer). For any $K\geq 0$, we define the norm constrained neural network $\mathcal{NN}(W, L, K)$ as the set of functions $\phi$ that satisfy the path norm constraint on the weights
$\|A_L\| \prod_{\ell=0}^{L-1} \max\left\{\|(A_\ell, b_\ell)\|, 1 \right\} \le K$.
Our main contributions are summarized as follows.
\begin{enumerate}[label=\textnormal{(\arabic*)}]
\item  For shallow norm constrained networks $\mathcal{NN}(W,1,K)$ with $L=1$ and bounded inner weights, we obtain the following  approximation
rates:
\[
\sup_{\|f\|_{W^{n,p}(\mathbb{B}^d)}\leq 1} \inf_{f_W \in \mathcal{NN}(W,1,K)} \| f - f_W \|_{W^{1,p}(\mathbb{B}^d)}
\le C \max\left\{ W^{-\frac{n-1}{d}}, K^{-\frac{n-1}{s-n}} \right\},
\]
where $d,n\in\mathbb{N}_{\ge 2}$, with $n < s= (d+3)/2$, and $2\le p\le\infty$, for some constant $C>0$.
This result extends the $L^\infty$-norm results of \citep{yang2025optimal} for shallow norm constrained ReLU networks to the $W^{1,p}$-norm setting, and also extends the $L^p$-norm results of \citep{mao2026approximation} to the $W^{1,p}$-norm by explicitly incorporating the path norm constraint.

\item For deep norm constrained networks $\mathcal{NN}(W,L,K)$, provided $W\geq CK^{(2d+n)/(d+d/p+1)}$ and $L\geq\lceil \log_{2}(d+n-1)\rceil+1$, we obtain an approximation rate that depends only on the path norm constraint $K$ (with $K\geq C$):
\[
\sup_{\|f\|_{W^{n,p}((0,1)^d)}\leq 1} \inf_{\phi\in\mathcal{NN}(W,L,K)} \| f - \phi \|_{W^{1,p}((0,1)^d)}
\le C K^{-\frac{n-1}{d + d/p + 1}},
\]
where $d\in\NN$, $n\in\NN_{\ge 2}$, $1\le p\le\infty$, for some constant $C>0$.
This rate is independent of the width and depth, making it suitable for over-parameterized networks, where the number of parameters exceeds the sample size.
This result extends the $L^\infty$-norm results of \citep{jiao2023approximation} for deep norm constrained ReLU networks to the $W^{1,p}$-norm setting.
\end{enumerate}

The proofs for shallow and deep norm constrained networks follow different constructions. For shallow norm constrained networks, we follow \citep{mao2026approximation} to construct an approximation of
$f$ via localized convolution sums, which is then realized by shallow norm-constrained neural networks.
Our approach is also motivated by the ideas in \citep{yang2025optimal,siegel2025optimal}.
For deep norm constrained networks,
we construct localized polynomial approximations
of  $f$  via a localized Taylor expansion and partitions of unity,
and then realize these approximations  by deep norm constrained networks, following the constructions in \citep{jiao2023approximation, guhring2020error,guhring2021approximation,
lu2021deep, hon2022simultaneous, jiao2025drm, jiao2024convergence}.
The main  technical challenge in both cases is to control the $W^{1,p}$-norm of the error.

The rest of the paper is organized as follows. In Section \ref{ApproximashallowReLU}, we introduce the neural networks with norm constraints and the Sobolev spaces used in this paper, and present our main approximation results for these networks.
The proofs of Theorem \ref{Soboappmainresult} and Theorem \ref{upperboundapp}
are given in Sections \ref{Proofofresult1} and \ref{ApproximadeepReLU}, respectively.
Section \ref{Conclusions} concludes this paper with a short
discussion. Finally, some useful auxiliary definitions and lemmas are provided in the Appendix.

\subsection{Notation}
Let us summarize all basic notations used throughout this paper.

The sets of natural numbers and real
numbers are denoted by $\NN$ and $\RR$, respectively. Furthermore, $\NN_{0}=\{0\}\cup\NN$ denotes the set
of non-negative integers.
For $k\in\NN_{0}$ we define $\NN_{\geq k} :=\{k, k+1, \ldots\}$.
For a set $A$
we denote its cardinality by $|A|\in\NN\cup\{\infty\}$.
If $x\in\RR$, then we write $\lceil x\rceil:= \min\{k\in\ZZ: k\geq x\}$,
where $\ZZ$ is the set of integers.

If $d\in\NN$  and $\|\cdot\|$ is a norm on $\RR^{d}$, then we denote for $x\in\RR^{d}$ and $r>0$ by $B_{r,\|\cdot\|}(x)$
the open ball around $x$ in $\RR^{d}$ with radius $r$, where the distance is measured in $\|\cdot\|$. By $\|x\|_{1}, \|x\|_{2} $ and
$\|x\|_{\infty}$
we denote the $\ell_{1}$-norm, the Euclidean norm and the maximum norm of $x$, respectively.

If $f : X\rightarrow Y$ and $g : Y\rightarrow Z$ are two functions, then we write
$g\circ f : X \rightarrow Z$ for their composition.
We use the usual multi-index notation, i.e., for $\alpha=(\alpha_{1},\alpha_{2},\ldots,\alpha_{d})\in\NN_{0}^{d}$
we write
$\|\alpha\|_{1}=\alpha_{1}+\ldots+\alpha_{d}$. Moreover, if $x = (x_{1},\ldots, x_{d}) \in \RR^{d}$, then we have
$$x^{\alpha} :=\prod_{i=1}^{d}x_{i}^{\alpha_{i}}.$$

Let $\Omega\subset\RR^{d}$ be open.
For a function $f : \Omega\rightarrow\RR$, we denote by
$$D^{\alpha}f :=\frac{\partial^{\|\alpha\|_{1}}f}{\partial x_{1}^{\alpha_{1}}\partial x_{2}^{\alpha_{2}}\cdots \partial x_{d}^{\alpha_{d}}}$$
its (weak or classical) derivative of order $\alpha$.
We denote by $L^{p}(\Omega)$, $1 \leq p \leq \infty$, the standard Lebesgue spaces, and by $C_c^\infty(\mathbb{R}^d)$ the space of smooth compactly supported functions on $\mathbb{R}^d$, i.e.,
\[
C_c^\infty(\mathbb{R}^d) := \{ f \in C^\infty(\mathbb{R}^d) \mid \operatorname{supp} f \subset\subset \mathbb{R}^d \}.
\]

We use $X \lesssim Y$ or $Y \gtrsim X$ to denote the statement that $X \le C Y$ for some constant $C > 0$. We denote $X \asymp Y$ when $X \lesssim Y \lesssim X$.

For any $\epsilon>0$, we define the open and closed balls of radius $\epsilon$ as
\[
\mathbb{B}_\epsilon^d := \{x\in\mathbb{R}^d : \|x\|_2 < \epsilon\}\quad \text{and}\quad
\overline{\mathbb{B}_\epsilon^d} := \{x\in\mathbb{R}^d : \|x\|_2 \le \epsilon\}.
\]
When $\epsilon=1$, we simply write $\mathbb{B}^d$ and $\overline{\mathbb{B}^d}$ for the open and closed unit balls, respectively. For example, $\mathbb{B}_2^d$ and $\overline{\mathbb{B}_2^d}$ denote the open and closed balls of radius $2$.
We denote by $\mathbb{S}^{d-1}$ the unit sphere in $\mathbb{R}^{d}$, i.e., $\mathbb{S}^{d-1} := \{x\in\mathbb{R}^{d} : \|x\|_2 = 1\}$.

We remark that throughout this paper, we will use $C$ to denote a generic
constant which may change from line to line. We will indicate the dependence of this constant whenever it must be clarified.

\section{Approximation rates for norm constrained  neural networks}\label{ApproximashallowReLU}
In this section, we first introduce the basic terminology of neural networks with norm constraints that will be used
in this paper, and adopt the corresponding notation from \citep{jiao2023approximation, mao2026approximation, siegel2025optimal}.

Let $L,d,m, N_1, \ldots, N_L \in \NN$. We consider the function $\phi : \RR^d \rightarrow\RR^m$ that can be parameterized by a ReLU neural network of the form
\begin{equation}\label{definenet}
\begin{aligned}
\phi_0(x) &= x, \\
\phi_{\ell+1}(x) &= \sigma(A_\ell \phi_\ell(x) + b_\ell), \quad \ell = 0, \ldots, L-1, \\
\phi(x) &= A_L \phi_L(x),
\end{aligned}
\end{equation}
where $A_\ell \in \RR^{N_{\ell+1} \times N_\ell}$, $b_\ell \in \RR^{N_{\ell+1}}$ with $N_0 = d$ and $N_{L+1} = m$. The activation function $\sigma(x) := \max\{0,x\}$ is the Rectified Linear Unit function (ReLU) and it is applied element-wise. The numbers $W := \max\{N_1, \ldots, N_L\}$ and $L$ are called the width and depth of the neural network, respectively.
We denote by $\mathcal{NN}_{d,m}(W,L)$ the set of functions that can be parameterized by ReLU neural networks with width $W$ and depth $L$. When the input dimension $d$ and output dimension $m$ are clear from the context, we simply denote it by $\mathcal{NN}(W,L)$.
For any $K\geq 0$, we define the norm constrained neural network $\mathcal{NN}(W, L, K)$ as the set of functions $\phi\in \mathcal{NN}(W, L)$ of the form (\ref{definenet}) that satisfy the following path norm constraint on the weights
\begin{equation}\label{definenet4}
\|A_L\| \prod_{\ell=0}^{L-1} \max\left\{\|(A_\ell, b_\ell)\|, 1 \right\} \le K,
\end{equation}
where $(A_\ell, b_\ell)$ denotes the matrix obtained by appending $b_\ell$ as an extra column to $A_\ell$, and $\|A\|$ is some norm of a matrix $A=(a_{ij}) \in \RR^{m \times n}$.
We only consider the operator norm defined by $\|A\| := \sup_{\|x\|_\infty \le 1} \|Ax\|_\infty$ in this paper, which means that $\|A\|$ is the maximum 1-norm of the rows of $A$, i.e.,
\[
\|A\| = \max_{1 \le i \le m} \sum_{j=1}^n |a_{ij}|.
\]
For a neural network of the form (\ref{definenet}) with depth $L=1$ and  output dimension $m=1$, we can write it as
\[
\sum_{i=1}^W a_i \sigma(\omega_i\cdot x + b_i),
\]
where $a_i, b_i\in\mathbb{R}$, $\omega_i\in\mathbb{R}^d$,
i.e., a shallow ReLU  neural network with  width $W$.
Moreover, the set of shallow ReLU  neural networks is
\[
\Sigma_W(\mathbb{R}_{1}^d) := \left\{ \sum_{i=1}^W a_i \sigma(\omega_i \cdot x + b_i) :  a_i, b_i \in \mathbb{R}, \omega_i \in \mathbb{R}^d \right\}.
\]
For $x\in\mathbb{B}^d$, we introduce the dictionary
\begin{equation}\label{definenet5}
\PP_1^d := \{\sigma(\omega \cdot x + b) : \omega \in \mathbb{S}^{d-1}, b \in [-1,1]\},
\end{equation}
which consists of all possible outputs of a single neuron with bounded inner weights.
We therefore define the associated  set
\[
\Sigma_W(\PP_1^d) := \left\{ \sum_{i=1}^W a_i \eta_i : a_i \in \mathbb{R}, \eta_i \in \PP_1^d \right\},
\]
which is a subset  of the set of shallow ReLU  neural networks.
Consequently, the path norm constraint on the weights in (\ref{definenet4}) becomes $\sum_{i=1}^W |a_i| \le K$ (up to an absolute constant).
We denote by $\Sigma_W^{K}(\PP_1^d)$ the set
\[
\left\{ \sum_{i=1}^W a_i \eta_i : a_i \in \mathbb{R}, \eta_i \in \PP_1^d,\ \sum_{i=1}^W |a_i| \le K \right\}
\]
with a constraint $K$ on the output weights.
Thus, $\Sigma_W^{K}(\PP_1^d)$ is the shallow special case of $\mathcal{NN}(W,L,K)$ with depth $L=1$ and output dimension $1$.

We now recall the definition of Sobolev spaces, which are central objects in analysis and the theory of PDEs (see, e.g., \citep{adams2003sobolev,evans2010partial,maz2013sobolev}).

\begin{definition}\label{Def:Sobolevspaces1}
Assume that $\Omega$ is an open subset of $\mathbb{R}^d$, and let $n\in\mathbb{N}$, $1\le p\le\infty$. The Sobolev space $W^{n,p}(\Omega)$ consists of functions $f\in L^p(\Omega)$ such that for every multi-index $\alpha\in\mathbb{N}_0^d$ with $\|\alpha\|_1\le n$, the weak derivative $D^\alpha f$ exists and belongs to $L^p(\Omega)$. Thus
\[
W^{n,p}(\Omega) := \{ f\in L^p(\Omega) : D^\alpha f\in L^p(\Omega) \text{ for all } \alpha\in\mathbb{N}_0^d  \text{ with } \|\alpha\|_1\le n \}.
\]
For $f\in W^{n,p}(\Omega)$ and $1\le p<\infty$, we define the norm
\begin{equation}\label{eq:sobolevdefi1}
\|f\|_{W^{n,p}(\Omega)} := \left( \sum_{0\le\|\alpha\|_1\le n} \|D^\alpha f\|_{L^p(\Omega)}^p \right)^{1/p},
\end{equation}
and
\begin{equation}\label{eq:sobolevdefi2}
\|f\|_{W^{n,\infty}(\Omega)} := \max_{0\le\|\alpha\|_1\le n} \|D^\alpha f\|_{L^\infty(\Omega)}.
\end{equation}
Additionally, for $0\leq k\leq n$, on $W^{n,p}(\Omega)$ we introduce the family of semi-norms
$$|f|_{ W^{k,p}(\Omega)} := \left(\sum_{\|\alpha\|_{1}=k}\|D^{\alpha}f\|_{ L^{p}(\Omega)}^{p}\right)^{1/p}\quad\hbox{and}\quad
|f|_{ W^{k,\infty}(\Omega)} :=\max_{\|\alpha\|_{1}=k}\|D^{\alpha}f\|_{ L^{\infty}(\Omega)},$$
respectively.
\end{definition}

We remark that (fractional) Sobolev spaces can be defined for non-integral $\alpha$ (see \citep{di2012hitchhiker's}).
For fractional Sobolev spaces we will only need the Hilbert space case $p=2$. In this case, it is well known that if the domain is all of $\mathbb{R}^d$, the Sobolev norm can be characterized via the Fourier transform: for any $s\in\mathbb{R}$,
\[
\| f \|_{W^{s,2}(\mathbb{R}^d)}^2 \asymp \int_{\mathbb{R}^d} (1+\|\xi\|_{2})^{2s} |\widehat{f}(\xi)|^2 d\xi,
\]
with seminorm
\[
| f |_{W^{s,2}(\mathbb{R}^d)}^2 \asymp \int_{\mathbb{R}^d} \|\xi\|_{2}^{2s} |\widehat{f}(\xi)|^2 d\xi,
\]
where $\widehat{f}(\xi) :=\int_{\mathbb{R}^d} e^{-i\xi\cdot x}f(x)\,dx$ is the Fourier transform of $f$. For a bounded domain $\Omega\subset\mathbb{R}^d$, we define the fractional Sobolev norm by restriction:
\begin{equation}\label{eq:sobolevextention}
\| f \|_{W^{s,2}(\Omega)} := \inf\bigl\{ \| \widetilde{f} \|_{W^{s,2}(\mathbb{R}^d)} : \widetilde{f}(x)=f(x) \text{ on } \Omega \bigr\}.
\end{equation}
It is known that the definition in (\ref{eq:sobolevextention}) is equivalent to other definitions of the fractional Sobolev spaces \citep{di2012hitchhiker's}.
In this paper, we will avoid these technicalities and simply adopt (\ref{eq:sobolevextention}) as our definition for the case $p=2$.
Note that by the well-known Sobolev extension theory (see, e.g., \citep{adams2003sobolev,evans2010partial,maz2013sobolev, stein1970singular}), this definition coincides with the Sobolev norm defined in (\ref{eq:sobolevdefi1}) when $s$ is an integer and $p=2$.

\begin{remark}\label{Rem: extentheorem}
Let $\Omega$ be $\BB^{d}$ or $(0,1)^{d}$. By the standard Sobolev extension theorems (see, e.g., \citep{adams2003sobolev, evans2010partial, maz2013sobolev,stein1970singular}), there exists an extension operator $E : W^{n,p}(\Omega) \to W^{n,p}(\mathbb{R}^d)$. For any $f\in W^{n,p}(\Omega)$, set $\widetilde{f} := Ef$,
and the following estimate holds:
\begin{equation}\label{eq:extentheorem}
\|\widetilde{f}\|_{W^{n,p}(\mathbb{R}^d)} \le C \|f\|_{W^{n,p}(\Omega)},
\end{equation}
where $C = C(d, n, p)$ is the norm of the extension operator.
\end{remark}

We are interested in approximating functions in the unit ball of the Sobolev space $W^{n,p}$ with smoothness index $n\in\mathbb{N}_{\ge 2}$ by both shallow and deep norm constrained neural networks, with the error measured in the $W^{1,p}$-norm.
The unit ball is defined as
\begin{equation}\label{Sobolevuni1}
\mathcal{F}_{d,n,p} := \{f\in W^{n,p}(\Omega) : \|f\|_{W^{n,p}(\Omega)}\leq 1\}.
\end{equation}
where $\Omega$ is $\BB^{d}$ or $(0,1)^{d}$.

Our main results can be summarized in the following theorems.
To approximate functions in $\mathcal{F}_{d,n,p}$ by shallow norm constrained  neural networks,
we choose $\Omega=\BB^{d}$ in (\ref{Sobolevuni1}) and obtain the following approximate rates.
\begin{theorem}\label{Soboappmainresult}
Let $d,n\in\mathbb{N}_{\ge 2}$, $s = (d + 3)/2$ with $n< s$, and $2\le p\le\infty$.
Then there exists a constant $C=C(d,n,p)>0$
such that
\begin{equation}\label{mainresult}
\sup_{f\in\mathcal{F}_{d,n,p}}\inf_{f_W \in \Sigma_W^{K}(\PP_1^{d})} \| f - f_W \|_{W^{1,p}(\BB^{d})} \le C\max\left\{W^{-\frac{n-1}{d}},K^{-\frac{n-1}{s-n}}\right\}.
\end{equation}
Thus, the rate $\mathcal{O}(W^{-(n-1)/d})$ holds when $K \ge C W^{(s-n)/d}$.
\end{theorem}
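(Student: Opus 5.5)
We plan to follow the route indicated in the introduction: a localized smoothing of $f$ (after \cite{mao2026approximation}), whose smoothed part lies in a variation space of the dictionary $\PP_1^d$ and is then discretized by a shallow network with controlled $\ell^1$ output weights. First we extend $f\in\mathcal{F}_{d,n,p}$ to $\widetilde f=Ef\in W^{n,p}(\RR^d)$ via Remark \ref{Rem: extentheorem}, and multiply by a fixed cutoff $\chi\in C_c^\infty(\mathbb{B}_2^d)$ with $\chi\equiv1$ on $\BB^d$. This gives a compactly supported function $g=\chi\widetilde f$ with $\|g\|_{W^{n,p}(\RR^d)}\lesssim 1$. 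Since $p\ge2$ and the support is bounded, we also get $\|g\|_{W^{n,2}}\lesssim1$. This is where the restriction $p\ge 2$ enters: the variation-norm estimates are Hilbertian.

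Next we introduce a scale parameter $\delta\in(0,1]$ and a mollifier (or higher-order convolution kernel with vanishing moments up to order $n-1$) $\rho_\delta$, and set $g_\delta=g*\rho_\delta$. Standard moment arguments give
\[
\|g-g_\delta\|_{W^{1,p}(\BB^d)}\lesssim \delta^{\,n-1}|g|_{W^{n,p}}\lesssim \delta^{n-1}.
\]
The key estimate is a bound on the variation (Barron-type) norm of $g_\delta$ with respect to $\PP_1^d$, measured so that the approximation error is in $W^{1,p}$. Using the Fourier characterization of the fractional Sobolev norm and the known embedding $W^{s,2}\hookrightarrow \mathcal{K}_1(\PP_1^d)$ with $s=(d+3)/2$ (as in \cite{siegel2025optimal,yang2025optimal}), we get
\[
\|g_\delta\|_{\mathcal{K}_1(\PP_1^d)}\lesssim \|g_\delta\|_{W^{s,2}}\lesssim \delta^{-(s-n)}\|g\|_{W^{n,2}}\lesssim \delta^{-(s-n)}.
\]
The middle inequality holds because the Fourier multiplier $\widehat{\rho_\delta}$ costs $\delta^{-(s-n)}$ when passing from $n$ to $s$ derivatives, and it uses $n<s$.

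We then need a discretization (Maurey / stratified sampling) result in the $W^{1,p}$ norm: any $h$ in the closed convex hull of $\pm M\PP_1^d$ can be approximated by $h_W\in\Sigma_W^{CM}(\PP_1^d)$ with $\|h-h_W\|_{W^{1,p}(\BB^d)}\lesssim M W^{-1/2-\epsilon}$. We cannot simply use the rate $W^{-1/2}$. Instead we apply the sharp, dimension-dependent rates for ReLU dictionaries in the Sobolev norm $H^1$/$W^{1,\infty}$ from \cite{siegel2025optimal,mao2026approximation}. These give a rate of order $W^{-\frac12-\frac{3-2}{2d}}=W^{-(s-1)/d}$ for the first-derivative error, uniformly for $p\in[2,\infty]$ by interpolation. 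We take this as the main obstacle: either quote this result or reprove it using a partition of the sphere $\mathbb{S}^{d-1}\times[-1,1]$ into $W$ cells of diameter $W^{-1/d}$, with the derivative $\mathbbm{1}\{\omega\cdot x+b>0\}\omega$ handled by a discrepancy argument.

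Finally we balance the terms. With $M=\delta^{-(s-n)}$ we have two cases.
\begin{itemize}
\item If $K\ge W^{(s-n)/d}$, choose $\delta=W^{-1/d}$. The error is then $\delta^{n-1}+\delta^{-(s-n)}W^{-(s-1)/d}\asymp W^{-(n-1)/d}$, and the path norm is $M\lesssim K$.
\item Otherwise choose $\delta=K^{-1/(s-n)}$. The norm bound then holds, and the error is $\delta^{n-1}+K W^{-(s-1)/d}\lesssim K^{-(n-1)/(s-n)}$, since $W^{(s-1)/d}\ge K^{(s-1)/(s-n)}$ in this regime.
\end{itemize}
Rescaling $K$ by absolute constants gives (\ref{mainresult}).
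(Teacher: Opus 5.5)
Your proposal is correct and follows essentially the same route as the paper: a higher-order smoothing of a compactly supported extension at scale $\epsilon$, followed by the Fourier bound $\|\cdot\|_{W^{s,2}}\lesssim \epsilon^{-(s-n)}\|\cdot\|_{W^{n,2}}$ (this is where $p\ge 2$ is used). The remaining steps also match: the embedding $W^{s,2}(\BB^d)\subset\mathcal{K}_1(\PP_1^d)$, which the paper takes from \citet{mao2026approximation}; Siegel's $W^{1,\infty}$ discretization rate $W^{-1/2-1/(2d)}=W^{-(s-1)/d}$ with $\ell^1$ control of the outer weights; and the same two-case balancing.

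The differences are cosmetic. The paper uses the specific kernel $\sum_{j=1}^{n-1}\binom{n-1}{j}(-1)^{j-1}\phi_{j\epsilon}$ together with an auxiliary mollification $\phi_\delta$, $\delta\to 0$; that kernel is a special case of your single vanishing-moment kernel. Also, no interpolation in $p$ is needed, since the $W^{1,\infty}$ bound already controls the $W^{1,p}(\BB^d)$ error on the bounded domain.
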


For approximating functions in $\mathcal{F}_{d,n,p}$ by deep norm constrained  neural networks,
we choose $\Omega=(0,1)^{d}$ in (\ref{Sobolevuni1}) and obtain the following approximate rate.
\begin{theorem}\label{upperboundapp}
Let $d\in\NN, n\in\NN_{\geq 2}$, $1\leq p\leq\infty$. Then there exists a constant $C=C(d,n,p)>0$ such that for any $K\geq C$,
any $W\geq CK^{(2d+n)/(d+d/p+1)}$ and $L\geq\lceil \log_{2}(d+n-1)\rceil+1,$
\begin{equation}\label{mainresult2}
\sup_{f\in\mathcal{F}_{d,n,p}}\inf_{\phi\in\mathcal{NN}(W,L,K)}
\left\|f-\phi\right\|_{W^{1,p}((0,1)^{d})}
\leq CK^{-\frac{n-1}{d+d/p+1}}.
\end{equation}
\end{theorem}

We now compare the approximation error bounds $\mathcal{E}_{W,K}$ and $\mathcal{E}_{K}$ provided by Theorem \ref{Soboappmainresult} and
Theorem \ref{upperboundapp}, respectively.

First, Theorem \ref{Soboappmainresult} applies to shallow norm constrained networks of the form $\Sigma_W^{K}(\PP_1^d)$, whereas Theorem \ref{upperboundapp} deals with deep norm constrained networks $\mathcal{NN}(W,L,K)$. The former provides the following error bound:
\[
\mathcal{E}_{W,K} \lesssim \max\left\{W^{-\frac{n-1}{d}}, K^{-\frac{n-1}{s-n}}\right\}.
\]
In contrast, the latter gives a error bound depending only on $K$:
\[
\mathcal{E}_{K} \lesssim K^{-\frac{n-1}{d+d/p+1}},
\]
provided the width $W$ and depth $L$ are sufficiently large ($W$ needs to grow with $K$). This makes Theorem \ref{upperboundapp} particularly suitable for over-parameterized deep networks, where the width can be chosen sufficiently large without affecting the rate.

Second, the two results differ in both their parameter ranges and their dependence on the path norm constraint $K$. Theorem \ref{Soboappmainresult} is restricted to the regime $n < s = (d+3)/2$ and $2\le p\le\infty$, while Theorem \ref{upperboundapp} applies to the broader class $n\ge 2$ and $1\le p\le\infty$. By contrast, the shallow-network error bound $\mathcal{E}_{W,K}$ in $K$, namely $\mathcal{O}(K^{-(n-1)/(s-n)})$, is stronger than the deep-network error bound $\mathcal{E}_{K}$, namely $\mathcal{O}(K^{-(n-1)/(d+d/p+1)})$, since $s-n < d + d/p + 1$ for all admissible parameters.

\begin{remark}\label{Theoremimple}
Theorem \ref{Soboappmainresult} requires \(d\ge 2\) and \(p\ge 2\).
The condition $d\ge 2$ is necessary: if $d=1$, then $s=2$, and no $n\in\NN_{\ge 2}$ satisfies $n<s$.
The restriction $p\ge 2$ is also essential. Indeed, for $1\le p<2$, the proof of Theorem \ref{Soboappmainresult} in Section \ref{Proofofresult1} would require the embedding
\[
W^{s,1}(\mathbb{B}^d) \subset \mathcal{K}_{1}(\PP^{d}_{1})
\]
with $s=(d+3)/2$ (for the variation space $\mathcal{K}_{1}(\PP^{d}_{1})$, see (\ref{eq:convexhullsobolev}), (\ref{variationspace}), and (\ref{variationspace1})). However, this embedding cannot hold. Since $\mathcal{K}_{1}(\PP^{d}_{1}) \subset W^{1,\infty}(\mathbb{B}^d)$, it would imply
\[
W^{s,1}(\mathbb{B}^d) \subset W^{1,\infty}(\mathbb{B}^d),
\]
which, by Sobolev embedding theory, would require $s\ge d+1$. This contradicts $s=(d+3)/2$, as $d+1 > (d+3)/2$ for all $d\ge 2$. Hence, for
$1\le p<2$, the present method cannot yield the desired rates.
We note that the exponent $s=(d+3)/2$ in Lemma \ref{Sobolevembed} is sharp in the sense of metric entropy (see \citep[Theorem 1.1]{mao2026approximation}). Consequently, Theorem \ref{Soboappmainresult} only covers the regime $n < (d+3)/2$.
Extending the results for shallow norm constrained  networks to the cases $n \ge (d+3)/2$ or $1\le p<2$
and whether deep norm constrained networks can achieve even better approximation rates remain interesting open problems.
\end{remark}

\section{Proof of Theorem \ref{Soboappmainresult}}\label{Proofofresult1}

This section is devoted to the proof of Theorem \ref{Soboappmainresult}.
We adopt the strategy  in the proof of \citep[Corollary 1.3]{mao2026approximation}.
Following their construction, we first construct, via localized convolution sums, an approximating function $\widetilde{f}_{\epsilon,\delta}$ of $f$, which is then approximated by shallow norm constrained neural networks. Our approach to the proof is also motivated by the ideas in \citep{yang2025optimal,siegel2025optimal}.

The following lemma establishes the rate of approximation of functions in $W^{n,p}(\mathbb{B}^d)$ by $\widetilde{f}_{\epsilon,\delta}$ in the $W^{1,p}$-norm. It is a key ingredient in the proof of Theorem \ref{Soboappmainresult}.

\begin{lemma}\label{Le:SmooFuncAppro}
Let $d\in\mathbb{N}$, $n\in\mathbb{N}_{\ge 2}$, $1\le p\le\infty$, and let $\phi$ be a normalized smooth radially symmetric bump function supported on $\overline{\BB^d}$ as in Definition~\ref{Def:smooradsymbu}.
For any $f\in W^{n,p}(\BB^d)$, let $\widetilde{f}\in W^{n,p}(\mathbb{R}^d)$ be an extension of $f$ with $\operatorname{supp}\widetilde{f}\subseteq\overline{\BB_2^d}$ (or any other fixed domain containing $\overline{\BB^d}$). For any $\delta>0$, set $\phi_\delta(x)=\delta^{-d}\phi(x/\delta)$ for  $x\in\mathbb{R}^d$. Fix $\epsilon\in(0,1]$ and let
\begin{equation}\label{approxfunc}
\widetilde{f}_{\epsilon,\delta}(x) :=\sum_{j=1}^{n-1}\binom{n-1}{j}(-1)^{j-1}\int_{\mathbb{R}^d}\phi_\epsilon(\theta)(\phi_\delta*\widetilde{f})(x-j\theta)d\theta,\qquad x\in\mathbb{R}^d.
\end{equation}
Then there exists a constant $C=C(d,n,p)>0$
such that for all sufficiently small $\delta>0$  (depending on
$\epsilon$ and $f$),
\[
\|f - \widetilde{f}_{\epsilon,\delta}|_{\mathbb{B}^d}\|_{W^{1,p}(\BB^d)} \le C\epsilon^{n-1}\|f\|_{W^{n,p}(\mathbb{B}^d)},
\]
where $\widetilde{f}_{\epsilon,\delta}|_{\mathbb{B}^d}$ denotes the restriction of $\widetilde{f}_{\epsilon,\delta}$ on $\mathbb{B}^d$.
\end{lemma}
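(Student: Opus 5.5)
The plan is to recognize the sum in (\ref{approxfunc}) as a finite difference of order $n-1$ and to use the normalization $\int\phi_\epsilon=1$. Set $g:=\phi_\delta*\widetilde{f}$, and for $h\in\RR^d$ let $\Delta_h u(x):=u(x+h)-u(x)$. The binomial expansion gives $\Delta_{-\theta}^{n-1}g(x)=\sum_{j=0}^{n-1}\binom{n-1}{j}(-1)^{n-1-j}g(x-j\theta)$. Up to the harmless overall sign $(-1)^{n-1}$ this reads $\sum_{j=0}^{n-1}\binom{n-1}{j}(-1)^{j}g(x-j\theta)$, and therefore
\[
\sum_{j=1}^{n-1}\binom{n-1}{j}(-1)^{j-1}g(x-j\theta)=g(x)\mp\Delta_{-\theta}^{n-1}g(x).
\]
Integrating against $\phi_\epsilon(\theta)$ and using $\int_{\RR^d}\phi_\epsilon=1$ (since $\phi$ is normalized) gives
\[
g(x)-\widetilde{f}_{\epsilon,\delta}(x)=\pm\int_{\RR^d}\phi_\epsilon(\theta)\,\Delta_{-\theta}^{n-1}g(x)\,d\theta .
\]
Then I split the error as
\[
\|f-\widetilde{f}_{\epsilon,\delta}\|_{W^{1,p}(\BB^d)}\le\|\widetilde{f}-g\|_{W^{1,p}(\RR^d)}+\|g-\widetilde{f}_{\epsilon,\delta}\|_{W^{1,p}(\RR^d)} .
\]

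For the second term, differentiation commutes with the $\theta$-integral and with $\Delta_{-\theta}$, so $D^\alpha(g-\widetilde{f}_{\epsilon,\delta})=\pm\int\phi_\epsilon(\theta)\Delta^{n-1}_{-\theta}D^\alpha g\,d\theta$ for $|\alpha|\le1$. I will use the standard representation
\[
\Delta_h^{m}u(x)=\int_{[0,1]^m}D^m u\bigl(x+h(t_1+\cdots+t_m)\bigr)[h,\dots,h]\,dt ,
\]
which is valid for smooth $u$, and $g$ is smooth. Combined with Minkowski's integral inequality, it yields $\|\Delta^{m}_h u\|_{L^p}\le C|h|^m|u|_{W^{m,p}}$. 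Taking $m=n-1$ and $u=D^\alpha g$, and using $|\theta|\le\epsilon$ on $\operatorname{supp}\phi_\epsilon$ together with $\int|\phi_\epsilon|=1$, another application of Minkowski gives
\[
\|g-\widetilde{f}_{\epsilon,\delta}\|_{W^{1,p}(\RR^d)}\le C\epsilon^{n-1}\|g\|_{W^{n,p}(\RR^d)}\le C\epsilon^{n-1}\|\widetilde{f}\|_{W^{n,p}(\RR^d)} .
\]
The last step holds because convolution with the probability density $\phi_\delta$ is a contraction on every $L^p$, hence on $W^{n,p}$. By Remark~\ref{Rem: extentheorem}, this is at most $C\epsilon^{n-1}\|f\|_{W^{n,p}(\BB^d)}$.

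The first term is where the step ``for all sufficiently small $\delta$'' enters, and it is the only delicate point.
\begin{itemize}
\item For $1\le p<\infty$, standard mollifier theory gives $\|\widetilde{f}-\phi_\delta*\widetilde{f}\|_{W^{1,p}}\to0$ as $\delta\to0$. So for $\delta$ small (depending on $f$ and $\epsilon$) this term is at most $\epsilon^{n-1}\|f\|_{W^{n,p}(\BB^d)}$; if $f=0$ there is nothing to prove.
\item For $p=\infty$, mollification does not converge in $W^{1,\infty}$ in general, so I use $n\ge2$ instead. Both $\widetilde{f}$ and $\nabla\widetilde{f}$ lie in $W^{1,\infty}$, hence are Lipschitz with constant bounded by $\|\widetilde{f}\|_{W^{2,\infty}}$. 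This gives $\|D^\alpha\widetilde{f}-\phi_\delta*D^\alpha\widetilde{f}\|_{L^\infty}\le\delta\|\widetilde{f}\|_{W^{2,\infty}}$ for $|\alpha|\le1$. Choosing $\delta\le\epsilon^{n-1}$ again bounds the term by $C\epsilon^{n-1}\|f\|_{W^{n,\infty}(\BB^d)}$.
\end{itemize}
The same Lipschitz argument works uniformly for all $p$ if one prefers a $\delta$ independent of $f$.

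Combining the two terms proves the lemma. I expect the main care to be needed in checking the finite-difference identity (the signs and the $j=0$ term) and in handling $p=\infty$ as above.
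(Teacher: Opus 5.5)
Your proposal is correct and follows the paper's proof essentially step for step. It uses the same split through $g=\phi_\delta*\widetilde{f}$ and the same identification of $g-\widetilde{f}_{\epsilon,\delta}$ with $\int\phi_\epsilon(\theta)\,\Delta_\theta^{n-1}g\,d\theta$; your $\Delta_{-\theta}$ versus the paper's backward difference is only a sign convention. It also uses the same integral representation of the $(n-1)$-th difference, the same contraction $\|g\|_{W^{n,p}}\le\|\widetilde{f}\|_{W^{n,p}}$, and the same extension bound. The only variation is in the first term, where your quantitative estimate $\|D^\alpha(\widetilde{f}-\phi_\delta*\widetilde{f})\|_{L^p}\le C\delta\|\widetilde{f}\|_{W^{2,p}}$ (valid since $n\ge 2$) replaces the paper's qualitative continuity-of-translation and uniform-continuity argument, with the small advantage that $\delta\le\epsilon^{n-1}$ becomes an explicit choice independent of $f$.
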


\begin{proof}
By the definition of $\phi$, for any $\delta>0$, the function $\phi_\delta(x) = \delta^{-d} \phi(x/\delta)$
satisfies $\phi_{\delta}\in C_{c}^{\infty}(\RR^{d})$, $\phi_{\delta}\ge 0$, $\operatorname{supp} \phi_{\delta} = \overline{\BB_{\delta}^{d}}$ and \[
\|\phi_{\delta}\|_{L^{1}(\RR^{d})}=\int_{\RR^{d}} \phi_{\delta}(x)dx=\int_{\RR^{d}} \phi(x)dx=1.
\]
since  $\phi_{\delta}\in C_{c}^{\infty}(\RR^{d})$ and $\widetilde{f}\in W^{n,p}(\mathbb{R}^d)$ with $\operatorname{supp} \widetilde{f} \subseteq\overline{\BB_{2}^d}$,
 by  the properties of convolution,
we get that $\phi_{\delta}\ast \widetilde{f}\in C_{c}^{\infty}(\RR^{d})$,
and for any multi-index $\alpha\in\NN_{0}^{d}$ with $\|\alpha\|_{1} \le n$,
\[
D^{\alpha}(\phi_{\delta}\ast \widetilde{f}) = \phi_{\delta} \ast D^{\alpha}\widetilde{f}.
\]
Hence,
for any $1\leq p\leq\infty$ and  any $\alpha\in\NN_{0}^{d}$ with $\|\alpha\|_{1}\leq 1$, we have
\begin{align}\label{smoothLpappsob}
&\bigl\|D^{\alpha}(\phi_{\delta}\ast\widetilde{f}-\widetilde{f})\bigr\|_{L^{p}(\RR^{d})}\nonumber\\
&\quad=\bigl\|\phi_{\delta}\ast D^{\alpha}\widetilde{f} - D^{\alpha}\widetilde{f}\bigr\|_{L^{p}(\RR^{d})}\nonumber\\
&\quad = \left\|\int_{\RR^{d}} \phi_{\delta}(y) \bigl( D^{\alpha}\widetilde{f}(x-y) - D^{\alpha}\widetilde{f}(x) \bigr)dy\right\|_{L^{p}(\RR^{d};dx)}\nonumber\\
&\quad\leq \int_{\RR^{d}} \phi_{\delta}(y) \bigl\|D^{\alpha}\widetilde{f}(x-y) - D^{\alpha}\widetilde{f}(x)\bigr\|_{L^{p}(\RR^{d};dx)}dy\nonumber\\
&\quad= \int_{\overline{\BB_{\delta}^{d}}} \phi_{\delta}(y) \bigl\|D^{\alpha}\widetilde{f}(x-y) - D^{\alpha}\widetilde{f}(x)\bigr\|_{L^{p}(\RR^{d};dx)}dy,
\end{align}
where  the second step uses $\int_{\RR^{d}} \phi_{\delta}(y)dy=1$,
the third step uses  Minkowski's integral inequality, and the last step uses the fact that $\operatorname{supp} \phi_{\delta} = \overline{\BB_{\delta}^{d}}$. For any $1\leq p< \infty$ and any $\alpha\in\NN_{0}^{d}$ with $\|\alpha\|_{1}\leq 1$,
we have $D^{\alpha}\widetilde{f}\in L^{p}(\RR^{d})$, and since the translation operator is continuous in $L^{p}(\RR^{d})$,
\begin{equation}\label{smoottracon}
\lim_{\|y\|_{2} \rightarrow 0} \bigl\| D^{\alpha}\widetilde{f}(x - y) - D^{\alpha}\widetilde{f}(x) \bigr\|_{L^p(\RR^{d};dx)} = 0.
\end{equation}
Using (\ref{smoothLpappsob}) and (\ref{smoottracon}) together with $\int_{\overline{\BB_{\delta}^{d}}} \phi_{\delta}(y)dy=1$,
we get
\begin{equation}\label{smoothLpapp}
\lim_{\delta\rightarrow 0}\bigl\|D^{\alpha}(\phi_{\delta}\ast \widetilde{f} - \widetilde{f})\bigr\|_{L^{p}(\RR^{d})}=0.
\end{equation}
Since (\ref{smoothLpapp}) holds for all $\alpha\in\NN_{0}^{d}$ with $\|\alpha\|_{1}\leq 1$, the definition of  the $W^{1,p}$-norm yields
\begin{equation}\label{smoothLpapp1}
\lim_{\delta\rightarrow 0}\bigl\|\phi_{\delta}\ast \widetilde{f} - \widetilde{f}\bigr\|_{W^{1,p}(\RR^{d})}=0.
\end{equation}
When $p=\infty$,
since $\widetilde{f}\in W^{n,\infty}(\RR^{d})$ with $n\geq 2$, both $\widetilde{f}$ and its first-order partial derivatives
$D^{\alpha}\widetilde{f}$ ($\|\alpha\|_{1}=1$) are uniformly continuous.
Hence, for any $\alpha\in\NN_{0}^{d}$ with $\|\alpha\|_{1}\leq 1$,
\begin{equation}\label{smoothLpappinf}
\lim_{\|y\|_{2} \rightarrow 0} \bigl\| D^{\alpha}\widetilde{f}(x - y) - D^{\alpha}\widetilde{f}(x) \bigr\|_{L^\infty(\RR^{d};dx)} = 0.
\end{equation}
Applying  (\ref{smoothLpappsob}) and (\ref{smoothLpappinf}) together with $\int_{\BB_{\delta}^{d}} \phi_{\delta}(y)dy=1$, we obtain
\begin{equation}\label{smoothLpapp2}
\lim_{\delta\rightarrow 0}\bigl\|D^{\alpha}(\phi_{\delta}\ast \widetilde{f} -\widetilde{f})\bigr\|_{L^{\infty}(\RR^{d})}=0.
\end{equation}
Since (\ref{smoothLpapp2}) holds for all $\alpha\in\NN_{0}^{d}$ with $\|\alpha\|_{1}\leq 1$,
the definition of the $W^{1,\infty}$-norm gives
\begin{equation}\label{smoothLpapp3}
\lim_{\delta\rightarrow 0}\bigl\|\phi_{\delta}\ast \widetilde{f} - \widetilde{f}\bigr\|_{W^{1,\infty}(\RR^{d})}=0.
\end{equation}
Furthermore, for any $1\leq p\leq\infty$ and any $\alpha\in\NN_{0}^{d}$ with $\|\alpha\|_{1}\leq n$,
by Young's inequality,  we have
\[
\bigl\|D^\alpha (\phi_\delta \ast \widetilde{f})\bigr\|_{L^{p}(\RR^{d})} = \bigl\|\phi_\delta \ast (D^\alpha \widetilde{f})\bigr\|_{L^{p}(\RR^{d})} \leq \bigl\|D^\alpha \widetilde{f}\bigr\|_{L^{p}(\RR^{d})}\|\phi_{\delta}\|_{L^{1}(\RR^{d})} = \bigl\|D^\alpha \widetilde{f}\bigr\|_{L^{p}(\RR^{d})}.
\]
Therefore, for any $1\leq p\leq\infty$ and any $\delta>0$, by the definition of the $W^{n,p}$-norm, we get
\begin{equation}\label{smoothLpapp4}
\bigl\|\phi_\delta \ast\widetilde{f}\bigr\|_{W^{n,p}(\RR^{d})} \leq \|\widetilde{f}\|_{W^{n,p}(\RR^{d})}.
\end{equation}

Now, fix $\epsilon\in (0,1]$, for any $\delta>0$, since $\phi_\epsilon$ and $\phi_\delta \ast \widetilde{f}$ are both in $C_c^\infty(\mathbb{R}^d)$, each term $\int_{\RR^{d}} \phi_\epsilon(\theta) (\phi_\delta \ast \widetilde{f})(x - j\theta)d\theta$ is a $C_c^\infty$ function of
$x$ (smoothness follows by differentiation under the integral, and compact support is inherited from the integrand). Thus, $\widetilde{f}_{\epsilon,\delta} \in C_c^\infty(\mathbb{R}^d)$. Furthermore,
for any $1\leq p\leq\infty$ and  any $\alpha\in\NN_{0}^{d}$ with $\|\alpha\|_{1}\leq 1$, we get
\begin{align}\label{smoothappint}
&\bigl\|D^{\alpha}(\phi_\delta \ast \widetilde{f} - \widetilde{f}_{\epsilon,\delta}) \bigr\|_{L^p(\RR^{d})}\nonumber\\
&\quad=\left\|\int_{\RR^{d}} \phi_{\epsilon}(\theta)\left(D^{\alpha}(\phi_\delta \ast \widetilde{f})(x)-\sum_{j=1}^{n-1} \binom{n-1}{j} (-1)^{j-1}D^{\alpha}(\phi_\delta \ast \widetilde{f})(x - j\theta)\right)d\theta\right\|_{L^p(\RR^{d};dx)}\nonumber\\
&\quad=\left\|\int_{\RR^{d}} \phi_{\epsilon}(\theta)\left(D^{\alpha}(\phi_\delta \ast \widetilde{f})(x)+\sum_{j=1}^{n-1} \binom{n-1}{j} (-1)^{j}D^{\alpha}(\phi_\delta \ast \widetilde{f})(x - j\theta)\right)d\theta\right\|_{L^{p}(\RR^{d};dx)}\nonumber\\
&\quad=\left\|\int_{\RR^{d}} \phi_\epsilon(\theta)\left(\sum_{j=0}^{n-1} \binom{n-1}{j} (-1)^{j}D^{\alpha}(\phi_\delta \ast \widetilde{f})(x - j\theta)\right)d\theta\right\|_{L^p(\RR^{d};dx)}\nonumber\\
&\quad\le \int_{\RR^{d}} \phi_\epsilon(\theta)
\left\| \sum_{j=0}^{n-1} \binom{n-1}{j} (-1)^j D^{\alpha}(\phi_\delta \ast \widetilde{f})(x - j\theta) \right\|_{L^p(\RR^{d};dx)}d\theta,
\end{align}
where  the first step uses $\int_{\RR^{d}} \phi_{\epsilon}(\theta)d\theta=1$ and
the last step uses  Minkowski's integral inequality.
Next, fix a $\theta \in \mathbb{R}^d$. For any function $h: \RR^{d}\to\RR$, define the first-order finite difference operator $\Delta_\theta$ by $\Delta_\theta h(x)=h(x)-h(x-\theta)$, and let $\Delta_\theta^{k+1}=\Delta_\theta\circ\Delta_\theta^{k}$ for $k\in\NN$. Then the $(n-1)$-th order difference of $h$ can be expressed as
\begin{equation}\label{funcfinidffidef}
\Delta_\theta^{n-1} h(x) = \sum_{j=0}^{n-1} \binom{n-1}{j} (-1)^j h(x-j\theta).
\end{equation}
Applying (\ref{funcfinidffidef}) to $h = D^{\alpha}(\phi_\delta \ast \widetilde{f})$, we obtain
\begin{equation}\label{finidffidef}
\left\| \sum_{j=0}^{n-1} \binom{n-1}{j} (-1)^j D^{\alpha}(\phi_\delta \ast \widetilde{f})(x - j\theta) \right\|_{L^p(\mathbb{R}^d;dx)} = \bigl\| \Delta_\theta^{n-1} D^{\alpha}(\phi_\delta \ast \widetilde{f}) \bigr\|_{L^p(\mathbb{R}^d)}.
\end{equation}
By the fundamental theorem of calculus, we get
\begin{equation}\label{diffopeinte}
\Delta_\theta  D^{\alpha}(\phi_\delta \ast \widetilde{f})(x)= -\int_0^1 \frac{d}{dt}D^{\alpha}(\phi_\delta \ast \widetilde{f})(x-t\theta)dt = \int_0^1 \nabla D^{\alpha}(\phi_\delta \ast \widetilde{f})(x-t\theta) \cdot \theta dt.
\end{equation}
Since the operator $\Delta_\theta$ (applied to a function of $x$) commutes with integration over $t$, we may apply (\ref{diffopeinte}) repeatedly. For $n=2$, using linearity and the fact that $\Delta_\theta$ acts on the variable $x$, we have
\[
\begin{aligned}
&\Delta_\theta^2  D^{\alpha}(\phi_\delta \ast \widetilde{f})(x) = \Delta_\theta(\Delta_\theta  D^{\alpha}(\phi_\delta \ast \widetilde{f}))(x)\\
&\quad = \Delta_\theta\left(\int_0^1 \nabla  D^{\alpha}(\phi_\delta \ast \widetilde{f})(x - t_1 \theta) \cdot \theta dt_1\right)\\
&\quad=\int_0^1 \Delta_\theta\bigl(\nabla D^{\alpha}(\phi_\delta \ast \widetilde{f})(x - t_1 \theta) \cdot \theta\bigr) dt_1.
\end{aligned}
\]
Now apply (\ref{diffopeinte}) to the function $(\phi_\delta \ast \widetilde{f})_{t_1}(x) = \nabla  D^{\alpha}(\phi_\delta \ast \widetilde{f})(x - t_1 \theta) \cdot \theta$. Because
$\Delta_\theta (\phi_\delta \ast \widetilde{f})_{t_1}(x) = \int_0^1 \nabla (\phi_\delta \ast \widetilde{f})_{t_1}(x - t_2 \theta) \cdot \theta dt_2$
and $\nabla (\phi_\delta \ast \widetilde{f})_{t_1}(x) = D^2  D^{\alpha}(\phi_\delta \ast \widetilde{f})(x - t_1 \theta) \cdot \theta$, we obtain
\[
\Delta_\theta^2  D^{\alpha}(\phi_\delta \ast \widetilde{f})(x) =\int_0^1 \int_0^1 D^2  D^{\alpha}(\phi_\delta \ast \widetilde{f})\bigl(x - (t_1 + t_2) \theta\bigr) \cdot (\theta \otimes \theta) dt_1 dt_2.
\]
Proceeding inductively yields the general formula
\begin{align}\label{smoothappintmul}
\Delta_\theta^{n-1} D^{\alpha}(\phi_\delta \ast \widetilde{f})(x) &= \int_{[0,1]^{n-1}} D^{n-1} D^{\alpha}(\phi_\delta \ast \widetilde{f})\left(x - \left(\sum_{i=1}^{n-1} t_i\right) \theta\right) \cdot \theta^{\otimes (n-1)}dt_1 \cdots dt_{n-1}\nonumber\\
&=\int_{[0,1]^{n-1}} D^{n-1} D^{\alpha}(\phi_\delta \ast \widetilde{f})\bigl(x - (\boldsymbol{1}^{\mathsf{T}} t) \theta\bigr) \cdot \theta^{\otimes (n-1)}dt,
\end{align}
where $\boldsymbol{1}^{\mathsf{T}} t = t_1 + \cdots + t_{n-1}$, $\theta^{\otimes (n-1)} = \theta \otimes \cdots \otimes \theta$ ($(n-1)$ times), and $D^{n-1} D^{\alpha}(\phi_\delta \ast \widetilde{f}) \cdot \theta^{\otimes (n-1)}$ denotes the full contraction of the $(n-1)$-th derivative tensor
of $D^{\alpha}(\phi_\delta \ast \widetilde{f})$ with $\theta^{\otimes (n-1)}$
(this coincides with the
$(n-1)$-th derivative of $D^{\alpha}(\phi_\delta \ast \widetilde{f})$
in the direction $\theta$).
Taking absolute values in (\ref{smoothappintmul}) and applying the triangle inequality gives
\begin{align}\label{smoothappinequ}
\bigl|\Delta_\theta^{n-1} D^{\alpha}(\phi_\delta \ast \widetilde{f})(x)\bigr|
&\leq\int_{[0,1]^{n-1}} \bigl|D^{n-1} D^{\alpha}(\phi_\delta \ast \widetilde{f})\bigl(x - (\boldsymbol{1}^{\mathsf{T}} t) \theta\bigr) \cdot \theta^{\otimes (n-1)}\bigr|dt\nonumber\\
&\leq \|\theta\|_{1}^{n-1}\int_{[0,1]^{n-1}} \bigl\|D^{n-1} D^{\alpha}(\phi_\delta \ast \widetilde{f})\bigl(x - (\boldsymbol{1}^{\mathsf{T}} t) \theta\bigr) \bigr\|_{1}dt,
\end{align}
where $\bigl\|D^{n-1} D^{\alpha}(\phi_\delta \ast \widetilde{f})\bigl(x - (\boldsymbol{1}^{\mathsf{T}} t) \theta\bigr)\bigr\|_{1}$
denotes the $\ell_{1}$-norm of the tensor $D^{n-1} D^{\alpha}(\phi_\delta \ast \widetilde{f})\bigl(x - (\boldsymbol{1}^{\mathsf{T}} t) \theta\bigr)$
(viewed as a vector in $\RR^{d^{n-1}}$).
For $ p=\infty$ and any  $\alpha\in\NN_{0}^{d}$ with $\|\alpha\|_{1}\leq 1$,
(\ref{smoothappinequ}) already implies that
\begin{equation}\label{fismoothappinfi}
\| \Delta_\theta^{n-1} D^{\alpha}(\phi_\delta \ast \widetilde{f}) \|_{L^\infty(\RR^d)} \le C\|\theta\|_{1}^{n-1} \|\phi_\delta \ast \widetilde{f}\|_{W^{n,\infty}(\RR^d)},
\end{equation}
where $C=C(d,n)>0$ is a constant.
When $1\leq p<\infty$ and $\alpha\in\NN_{0}^{d}$ with $\|\alpha\|_{1}\leq 1$, we first apply Jensen's inequality to (\ref{smoothappinequ}). This gives the pointwise estimate
\begin{equation}\label{fismoothappinequ}
\bigl|\Delta_\theta^{n-1} D^{\alpha}(\phi_\delta \ast \widetilde{f})(x)\bigr|^{p}\leq \|\theta\|_{1}^{(n-1)p}\int_{[0,1]^{n-1}} \bigl\|D^{n-1} D^{\alpha}(\phi_\delta \ast \widetilde{f})\bigl(x - (\boldsymbol{1}^{\mathsf{T}} t) \theta\bigr) \bigr\|_{1}^{p}dt.
\end{equation}
Integrating both sides with respect to
$x\in\RR^{d}$ in
(\ref{fismoothappinequ}) and using Fubini's theorem to exchange the order of integration, we obtain
\begin{align}\label{fismoothappinequ1}
\bigl\|\Delta_\theta^{n-1} D^{\alpha}(\phi_\delta \ast \widetilde{f})\bigr\|_{L^{p}(\RR^{d})}^{p}
&\leq \|\theta\|_{1}^{(n-1)p}\int_{[0,1]^{n-1}} \int_{\RR^{d}}\bigl\|D^{n-1} D^{\alpha}(\phi_\delta \ast \widetilde{f})\bigl(x - (\boldsymbol{1}^{\mathsf{T}} t) \theta\bigr) \bigr\|_{1}^{p}dxdt\nonumber\\
&\leq C\|\theta\|_{1}^{(n-1)p}\bigl\|\phi_\delta \ast \widetilde{f}\bigr\|_{W^{n,p}(\RR^{d})}^{p},
\end{align}
where $C=C(d,n,p)>0$ is a  constant.
Therefore, applying (\ref{fismoothappinfi}) and (\ref{fismoothappinequ1}), we obtain the bound
\begin{equation}\label{fismoothappinequ2}
\bigl\| \Delta_\theta^{n-1} D^{\alpha}(\phi_\delta \ast \widetilde{f}) \bigr\|_{L^p(\RR^d)} \le C\|\theta\|_{1}^{n-1} \|\phi_\delta \ast \widetilde{f}\|_{W^{n,p}(\RR^d)},
\end{equation}
for any $1\leq p\leq\infty$ and  any $\alpha\in\NN_{0}^{d}$ with $\|\alpha\|_{1}\leq 1$.
Combining (\ref{smoothappint}) with (\ref{finidffidef}) and (\ref{fismoothappinequ2}), we get
\[
\bigl\|D^{\alpha}(\phi_\delta \ast \widetilde{f} - \widetilde{f}_{\epsilon,\delta}) \bigr\|_{L^p(\RR^{d})}
\le \int_{\RR^{d}} \phi_\epsilon(\theta)
\bigl\| \Delta_\theta^{n-1} D^{\alpha}(\phi_\delta \ast \widetilde{f}) \bigr\|_{L^p(\RR^d)}d\theta
\leq C\epsilon^{n-1} \|\phi_\delta \ast \widetilde{f} \|_{W^{n,p}(\RR^d)},
\]
where the last step uses the facts that $\operatorname{supp} \phi_{\epsilon} = \overline{\BB_{\epsilon}^{d}}$, $\int_{\overline{\BB_{\epsilon}^{d}}}\phi_{\epsilon}(\theta)d\theta=1$,
and that $\|\theta\|_{1}\leq \sqrt{d}\|\theta\|_{2}$.
Hence, by the definition of the $W^{1,p}$-norm, we have
\begin{equation}\label{sobonormapp}
\bigl\|\phi_\delta \ast \widetilde{f} - \widetilde{f}_{\epsilon,\delta}\bigr\|_{W^{1,p}(\RR^{d})}
\leq C\epsilon^{n-1} \bigl\|\phi_\delta \ast \widetilde{f} \bigr\|_{W^{n,p}(\RR^d)},
\end{equation}
for any $1\leq p\leq\infty$.
Since  $\epsilon\in(0,1]$ is fixed, by the limit results (\ref{smoothLpapp1}) and (\ref{smoothLpapp3}),
for $\eta =\epsilon^{n-1}\|f\|_{W^{n,p}(\mathbb{B}^d)}>0$, there exists $\delta_0 = \delta_0(\epsilon,f)>0$ such that for any $\delta$ with $0<\delta<\delta_0$,
\begin{equation}\label{somoonormapp}
\bigl\|\widetilde{f} - \phi_\delta \ast \widetilde{f}\bigr\|_{W^{1,p}(\mathbb{R}^d)} \le \eta = \epsilon^{n-1}\|f\|_{W^{n,p}(\mathbb{B}^d)}.
\end{equation}
Now choose $\delta$ with $0<\delta<\delta_0$. Applying the triangle inequality together with
(\ref{eq:extentheorem}), (\ref{smoothLpapp4}), (\ref{sobonormapp}) and (\ref{somoonormapp}),
 we obtain
\[
\begin{aligned}
\bigl\|f - \widetilde{f}_{\epsilon,\delta}|_{\mathbb{B}^d}\bigr\|_{W^{1,p}(\BB^{d})}
&\le \bigl\|\widetilde{f} - \widetilde{f}_{\epsilon,\delta}\bigr\|_{W^{1,p}(\mathbb{R}^{d})} \\
&\le \bigl\|\widetilde{f} - \phi_\delta \ast \widetilde{f}\bigr\|_{W^{1,p}(\mathbb{R}^{d})}
   + \bigl\|\phi_\delta \ast \widetilde{f} - \widetilde{f}_{\epsilon,\delta}\bigr\|_{W^{1,p}(\mathbb{R}^{d})} \\
&\le \epsilon^{n-1}\|f\|_{W^{n,p}(\mathbb{B}^d)}
   + C\epsilon^{n-1}\|\phi_\delta \ast \widetilde{f}\|_{W^{n,p}(\mathbb{R}^d)} \\
&\le \epsilon^{n-1}\|f\|_{W^{n,p}(\mathbb{B}^d)}+C\epsilon^{n-1}\|\widetilde{f}\|_{W^{n,p}(\mathbb{R}^d)}\\
&\le C\epsilon^{n-1} \|f\|_{W^{n,p}(\BB^d)},
\end{aligned}
\]
where
 $\widetilde{f}_{\epsilon,\delta}|_{\mathbb{B}^d}$ denotes the restriction of $\widetilde{f}_{\epsilon,\delta}$ on $\mathbb{B}^d$
and $C=C(d,n,p)>0$ is a suitable constant. This completes the proof.
\end{proof}

Having established the approximation error in the $W^{1,p}$-norm in Lemma~\ref{Le:SmooFuncAppro}, we now turn to an estimate of the approximating function itself. The following lemma bounds the $W^{s,2}$-norm of $\widetilde{f}_{\epsilon,\delta}|_{\mathbb{B}^d}$ by the $W^{n,p}$-norm of $f$. This estimate will be used later to control the smoothness of the approximant in a higher-order Sobolev space.

\begin{lemma}\label{sobolevL2esti}
Under the same notation and assumptions on $d,n,\phi, f,\widetilde{f},\phi_\delta,\epsilon,\widetilde{f}_{\epsilon,\delta}$ as in
Lemma \ref{Le:SmooFuncAppro}, assume further that $2\le p\le\infty$, $s=(d+3)/2$ and $n\le s$.
Then there exists a constant $C=C(d,n,p)>0$ such that for any $\delta>0$,
\begin{equation}\label{smoothgenacontrol}
\|\widetilde{f}_{\epsilon,\delta}|_{\mathbb{B}^d}\|_{W^{s,2}(\BB^d)} \le C\epsilon^{-(s-n)}\|f\|_{W^{n,p}(\mathbb{B}^d)},
\end{equation}
where
 $\widetilde{f}_{\epsilon,\delta}|_{\mathbb{B}^d}$ denotes the restriction of $\widetilde{f}_{\epsilon,\delta}$ on $\mathbb{B}^d$.
\end{lemma}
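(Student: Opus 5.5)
Since the $W^{s,2}(\BB^d)$-norm is defined in (\ref{eq:sobolevextention}) as an infimum over extensions, it suffices to bound $\|\widetilde{f}_{\epsilon,\delta}\|_{W^{s,2}(\RR^d)}$ itself. The whole argument then runs on the Fourier side.

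\emph{Step 1: Fourier transform of the approximant.} Set $g:=\phi_\delta*\widetilde{f}\in C_c^\infty(\RR^d)$. The substitution $\theta\mapsto\theta/j$ shows that each term $\int\phi_\epsilon(\theta)g(x-j\theta)\,d\theta$ equals $(\phi_{j\epsilon}*g)(x)$. Hence
\[
\widehat{\widetilde{f}_{\epsilon,\delta}}(\xi)=\widehat g(\xi)\,m_\epsilon(\xi),\qquad m_\epsilon(\xi):=\sum_{j=1}^{n-1}\binom{n-1}{j}(-1)^{j-1}\widehat\phi(j\epsilon\xi).
\]
Because $\widehat\phi$ is a Schwartz function, $|m_\epsilon(\xi)|\le C_N(1+\epsilon\|\xi\|_2)^{-N}$ for every $N$. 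The multiplier is also uniformly bounded by $2^{n-1}\|\phi\|_{L^1}$.

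\emph{Step 2: weighted $L^2$ bound.} Write
\[
\|\widetilde{f}_{\epsilon,\delta}\|_{W^{s,2}}^2\asymp\int(1+\|\xi\|_2)^{2s}|m_\epsilon(\xi)|^2|\widehat g(\xi)|^2\,d\xi .
\]
Split the weight as $(1+\|\xi\|_2)^{2s}=(1+\|\xi\|_2)^{2n}(1+\|\xi\|_2)^{2(s-n)}$. Using $n\le s$ and $\epsilon\le1$, we have
\[
(1+\|\xi\|_2)^{s-n}|m_\epsilon(\xi)|\le \epsilon^{-(s-n)}(1+\epsilon\|\xi\|_2)^{s-n}|m_\epsilon(\xi)|\le C\epsilon^{-(s-n)}.
\]
This gives $\|\widetilde{f}_{\epsilon,\delta}\|_{W^{s,2}}\le C\epsilon^{-(s-n)}\|g\|_{W^{n,2}(\RR^d)}$.

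\emph{Step 3: from $W^{n,2}$ back to $W^{n,p}$.} By (\ref{smoothLpapp4}) with $p=2$, $\|g\|_{W^{n,2}}\le\|\widetilde f\|_{W^{n,2}(\RR^d)}$. Since $\operatorname{supp}\widetilde f\subseteq\overline{\BB_2^d}$ is bounded and $p\ge2$, Hölder's inequality applied to each $D^\alpha\widetilde f$ gives $\|\widetilde f\|_{W^{n,2}}\le C\|\widetilde f\|_{W^{n,p}(\RR^d)}$. The extension estimate (\ref{eq:extentheorem}) then bounds this by $C\|f\|_{W^{n,p}(\BB^d)}$. Note that no smallness of $\delta$ is needed, which matches the "for any $\delta>0$" in the statement.

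\emph{Main obstacle.} The delicate point is Step 3. The argument requires the fixed extension $\widetilde f$ to be both compactly supported in $\overline{\BB_2^d}$ and bounded in $W^{n,p}$ by $\|f\|_{W^{n,p}(\BB^d)}$, so that Hölder's inequality on a bounded set can convert $p\ge2$ into $L^2$. I would take the extension operator of Remark~\ref{Rem: extentheorem} and multiply it by a fixed smooth cutoff equal to $1$ on $\overline{\BB^d}$; this yields such an extension with constant $C(d,n,p)$. Everything else is the Fourier decay of $\widehat\phi$ and the bookkeeping in Step 2.
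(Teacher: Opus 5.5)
Your proposal is correct and follows essentially the paper's route: write each term as $\phi_{j\epsilon}*(\phi_\delta*\widetilde{f})$, use $\widehat{\phi_{j\epsilon}}(\xi)=\widehat{\phi}(j\epsilon\xi)$ and the decay of $\widehat{\phi}$ to trade $s-n$ derivatives for a factor $\epsilon^{-(s-n)}$, then pass from $W^{n,2}$ to $W^{n,p}$ using compact support and the extension bound. The paper treats each $j$ separately and splits the seminorm from the $L^2$ part, whereas you handle the combined multiplier with the inhomogeneous weight in one step. Your ordering in Step 3 also makes the constant uniform in $\delta>0$, as the statement claims. You apply Young's inequality in $W^{n,2}$ first and use H\"older only on the fixed support $\overline{\BB_2^d}$ of $\widetilde{f}$. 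The paper instead applies the compact-support embedding to $\phi_\delta*\widetilde{f}$, whose support $\overline{\BB_{2+\delta}^d}$ grows with $\delta$.
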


\begin{proof}
From the proof of Lemma \ref{Le:SmooFuncAppro},
we have already concluded that for any $\delta>0$, the functions
$\phi_\delta \ast \widetilde{f}$,  $\int_{\mathbb{R}^d} \phi_\epsilon(\theta) (\phi_\delta \ast \widetilde{f})(x-j\theta)d\theta$ and $\widetilde{f}_{\epsilon,\delta}$
all belong to $C_{c}^{\infty}(\RR^{d})$. Moreover,
 for any $n\in\NN_{\geq2}$ with $n\leq s$ and any $2\leq p\leq\infty$, we also have
\begin{equation}\label{smoothL2app4}
\bigl\|\phi_\delta \ast\widetilde{f}\bigr\|_{W^{n,p}(\RR^{d})} \leq \|\widetilde{f}\|_{W^{n,p}(\RR^{d})}.
\end{equation}
Define $\widetilde{f}_{\epsilon,\delta,j}(x)=\int_{\RR^d} \phi_\epsilon(\theta) (\phi_\delta \ast \widetilde{f})(x-j\theta)d\theta$.
Then \begin{equation}\label{approxfuncequ}
\widetilde{f}_{\epsilon,\delta}(x)=\sum_{j=1}^{n-1}\binom{n-1}{j}(-1)^{j-1}\widetilde{f}_{\epsilon,\delta,j}(x),\qquad x\in\mathbb{R}^d,
\end{equation}
and both $\widetilde{f}_{\epsilon,\delta,j}$ and $\widetilde{f}_{\epsilon,\delta}$ belong to $W^{s,2}(\RR^d)$.

Observe that since $n-1$ is fixed, it suffices to bound
$\| \widetilde{f}_{\epsilon,\delta,j}\|_{W^{s,2}(\RR^d)}$
for each fixed integer $j \ge 1$.
To proceed, we change variables. Let $z =j\theta$; then $d\theta = dz / j^d$ and
\[
\begin{aligned}
\widetilde{f}_{\epsilon,\delta,j}(x) &= \int_{\RR^d} \phi_{\epsilon}(\theta) (\phi_\delta \ast \widetilde{f})(x-j\theta)d\theta\\
&= \frac{1}{j^d} \int_{\RR^d} \phi_{\epsilon}\left(\frac{z}{j}\right) (\phi_\delta \ast \widetilde{f})(x-z)dz\\
&= \int_{\RR^d}\phi_{j\epsilon}(z) (\phi_\delta \ast \widetilde{f})(x-z)dz\\
&=(\phi_{j\epsilon} \ast (\phi_\delta \ast \widetilde{f}))(x).
\end{aligned}
\]
Taking the Fourier transform of the convolution representation $\widetilde{f}_{\epsilon,\delta,j} = \phi_{j\epsilon}\ast (\phi_\delta \ast \widetilde{f})$ and applying the convolution theorem, we obtain
\[
\widehat{\widetilde{f}_{\epsilon,\delta,j}}(\xi) = \widehat{\phi_{j\epsilon}}(\xi)\widehat{(\phi_\delta \ast \widetilde{f})}(\xi).
\]
Using the scaling property of the Fourier transform, we have $\widehat{\phi_{j\epsilon}}(\xi) = \widehat{\phi}(j\epsilon\xi)$. Hence, we get
\[
\widehat{\widetilde{f}_{\epsilon,\delta,j}}(\xi) = \widehat{(\phi_\delta \ast \widetilde{f})}(\xi)\widehat{\phi}(j\epsilon\xi).
\]
We now bound the $W^{s,2}(\RR^d)$-seminorm of $\widehat{\widetilde{f}_{\epsilon,\delta,j}}$ as follows:
\begin{equation}\label{soboseminor}
|\widetilde{f}_{\epsilon,\delta,j}|_{W^{s,2}(\RR^d)}^2 \asymp \int_{\RR^d} \|\xi\|_{2}^{2s}\bigl|\widehat{\widetilde{f}_{\epsilon,\delta,j}}(\xi)\bigr|^2 d\xi = \int_{\RR^d} \|\xi\|_{2}^{2s} \bigl|\widehat{(\phi_\delta \ast \widetilde{f})}(\xi)\bigr|^2 |\widehat{\phi}(j\epsilon\xi)|^2 d\xi.
\end{equation}
From the definition of the $W^{n,2}$-norm and the fact that  $\phi_\delta \ast \widetilde{f}$ has compact support, and that for compactly supported functions the Sobolev space $W^{n,p}(\mathbb{R}^d)$ embeds continuously into $W^{n,2}(\mathbb{R}^d)$ when $p \ge 2$, it follows that
\begin{equation}\label{sobonormde}
\int_{\RR^{d}} \|\xi\|_{2}^{2n} \bigl|\widehat{(\phi_\delta \ast \widetilde{f})}(\xi)\bigr|^{2} d\xi\asymp |\phi_\delta \ast \widetilde{f}|_{W^{n,2}(\RR^d)}^{2}\leq \|\phi_\delta \ast \widetilde{f}\|_{W^{n,2}(\RR^d)}^{2}   \le C \|\phi_\delta \ast \widetilde{f}\|_{W^{n,p}(\RR^d)}^{2}.
\end{equation}
Therefore, by applying H\"{o}lder's inequality in (\ref{soboseminor}) together with (\ref{sobonormde}), we get
\begin{align}\label{sobosemisqu}
|\widetilde{f}_{\epsilon,\delta,j} |_{W^{s,2}(\mathbb{R}^d)}^2 &\le \left( \int_{\mathbb{R}^d} \|\xi\|_{2}^{2n} \bigl|\widehat{(\phi_\delta \ast \widetilde{f})}(\xi)\bigr|^2 d\xi \right) \left(\sup_{\xi \in
\mathbb{R}^d} \|\xi\|_{2}^{2(s-n)} |\widehat{\varphi}(j\epsilon\xi)|^{2}\right)\nonumber\\
&\le C \|\phi_\delta \ast \widetilde{f}\|_{W^{n,p}(\mathbb{R}^d)}^2 \left(\sup_{\xi \in \mathbb{R}^d} \|\xi\|_{2}^{2(s-n)} |\widehat{\varphi}(j\epsilon\xi)|^{2}\right).
\end{align}
By the change of variables $\eta = j\epsilon\xi$, we obtain
\begin{equation}\label{Schwartzpro}
\sup_{\xi\in\mathbb{R}^d} \|\xi\|_{2}^{2(s-n)} |\widehat{\phi}(j\epsilon\xi)|^2 = (j\epsilon)^{-2(s-n)} \sup_{\eta\in\mathbb{R}^d} \|\eta\|_{2}^{2(s-n)} |\widehat{\phi}(\eta)|^2 \le C \epsilon^{-2(s-n)},
\end{equation}
since $\sup_{\eta\in\RR^{d}} \|\eta\|_{2}^{2(s-n)} |\widehat{\phi}(\eta)|^2$ is finite
(because $\phi \in C_c^\infty(\mathbb{R}^d)$ implies that its Fourier transform $\widehat{\phi}$ is a Schwartz function).
Therefore, using (\ref{sobosemisqu}) and (\ref{Schwartzpro}), we get
\begin{equation}\label{seminorinque}
|\widetilde{f}_{\epsilon,\delta,j}|_{W^{s,2}(\mathbb{R}^d)} \le C \epsilon^{-(s-n)}\|\phi_\delta \ast \widetilde{f}\|_{W^{n,p}(\mathbb{R}^d)}.
\end{equation}
Furthermore,  we have
\begin{align}\label{squnorinque}
\|\widetilde{f}_{\epsilon,\delta,j}\|_{L^{2}(\mathbb{R}^d)} &=\left \|\int_{\RR^d} \phi_{\epsilon}(\theta) (\phi_\delta \ast \widetilde{f})(x-j\theta)d\theta\right\|_{L^{2}(\mathbb{R}^d)}\nonumber\\
&\leq \int_{\RR^d} \phi_{\epsilon}(\theta)\|(\phi_\delta \ast \widetilde{f})(x-j\theta)\|_{L^{2}(\mathbb{R}^d;dx)}d\theta\nonumber\\
&=\|\phi_\delta \ast \widetilde{f}\|_{L^{2}(\mathbb{R}^d)}\leq \|\phi_\delta \ast \widetilde{f}\|_{W^{n,2}(\mathbb{R}^d)}\leq C\|\phi_\delta \ast \widetilde{f}\|_{W^{n,p}(\mathbb{R}^d)},
\end{align}
where the second step uses  Minkowski's integral inequality, and the third step  uses $\int_{\RR^{d}} \phi_{\epsilon}(\theta)d\theta=1$
and the translation invariance of the $L^{2}$-norm.
By the definition of the $W^{s,2}$-norm, we have
\begin{align}\label{normeque}
\| \widetilde{f}_{\epsilon,\delta,j}\|_{W^{s,2}(\RR^d)}^2 &\asymp \int_{\RR^d} (1+\|\xi\|_{2})^{2s} \bigl|\widehat{\widetilde{f}_{\epsilon,\delta,j}}(\xi)\bigr|^2 d\xi \nonumber\\
&\asymp  \int_{\RR^d}\bigl|\widehat{\widetilde{f}_{\epsilon,\delta,j}}(\xi)\bigr|^2 d\xi+\int_{\RR^d}  \|\xi\|_{2}^{2s}\bigl|\widehat{\widetilde{f}_{\epsilon,\delta,j}}(\xi)\bigr|^2 d\xi\nonumber\\
&\asymp \|\widetilde{f}_{\epsilon,\delta,j}\|_{L^{2}(\mathbb{R}^d)}^{2}+|\widetilde{f}_{\epsilon,\delta,j}|_{W^{s,2}(\mathbb{R}^d)}^{2},
\end{align}
where the second step uses the fact that $(1+\|\xi\|_{2})^{2s} \asymp 1 + \|\xi\|_{2}^{2s}$ for all $\xi \in \mathbb{R}^d$.
Consequently, combining (\ref{seminorinque}) with (\ref{squnorinque}) and (\ref{normeque}), we obtain
\begin{align}\label{normineque1}
\| \widetilde{f}_{\epsilon,\delta,j}\|_{W^{s,2}(\RR^d)}
&\leq C\left(\|\widetilde{f}_{\epsilon,\delta,j}\|_{L^{2}(\mathbb{R}^d)}^{2}+|\widetilde{f}_{\epsilon,\delta,j}|_{W^{s,2}(\mathbb{R}^d)}^{2}\right)^{1/2}\nonumber\\
&\leq C\left(1+\epsilon^{-2(s-n)}\right)^{1/2}\|\phi_\delta \ast \widetilde{f}\|_{W^{n,p}(\mathbb{R}^d)}\nonumber\\
&\leq C\epsilon^{-(s-n)}\|\phi_\delta \ast \widetilde{f}\|_{W^{n,p}(\mathbb{R}^d)},
\end{align}
where the last step uses that $\epsilon \in (0,1]$ is fixed.
Finally, by the triangle inequality together with (\ref{eq:extentheorem}), (\ref{smoothL2app4}), (\ref{approxfuncequ}) and (\ref{normineque1}),
we obtain the following estimate for the $W^{s,2}(\BB^d)$-norm of  the restriction function $\widetilde{f}_{\epsilon,\delta}|_{\mathbb{B}^d}$:
\[
\begin{aligned}
\|\widetilde{f}_{\epsilon,\delta}|_{\mathbb{B}^d}\|_{W^{s,2}(\BB^d)}
&\leq\|\widetilde{f}_{\epsilon,\delta}\|_{W^{s,2}(\RR^d)}\\
&\leq \sum_{j=1}^{n-1}\binom{n-1}{j}\|\widetilde{f}_{\epsilon,\delta,j}\|_{W^{s,2}(\mathbb{R}^d)}\\
&\leq C\epsilon^{-(s-n)}\|\phi_\delta \ast \widetilde{f}\|_{W^{n,p}(\mathbb{R}^d)}\\
&\leq C\epsilon^{-(s-n)}\|\widetilde{f}\|_{W^{n,p}(\mathbb{R}^d)}\\
&\leq C\epsilon^{-(s-n)}\|f\|_{W^{n,p}(\mathbb{B}^d)},
\end{aligned}
\]
where $C=C(d,n,p)>0$ is a suitable constant.
This completes the proof.
\end{proof}

Now, we briefly recall the definition of variation spaces corresponding to  ReLU neural networks.
For details and the intuition behind this definition, we refer the reader to \citep{siegel2024sharp,siegel2023characterization}.
The unit ball of the variation space, also known as the Barron space, is the closed symmetric convex hull of the dictionary $\PP_1^d$ defined as in (\ref{definenet5}), i.e.,
\begin{equation}\label{eq:convexhullsobolev}
B_1(\PP_1^d) := \overline{\left\{ \sum_{i=1}^W a_i \eta_i : a_i \in \mathbb{R}, \eta_i\in\PP_1^d,\ \sum_{i=1}^W |a_i|\le 1 \right\}},
\end{equation}
where the closure is taken in the $W^{1,\infty}$-norm.
The variation space is defined as
\begin{equation}\label{variationspace}
\mathcal{K}_1(\PP_1^d) := \{ f\in W^{1,\infty}(\mathbb{B}^d) : \|f\|_{\mathcal{K}_1(\PP_1^d)} < \infty \},
\end{equation}
where the norm is given by
\begin{equation}\label{variationspace1}
\|f\|_{\mathcal{K}_1(\PP_1^d)} = \inf\{ c > 0 : f \in c B_1(\PP_1^d) \}.
\end{equation}
 It is known that the closure in (\ref{eq:convexhullsobolev}) is the same when taken in different norms, such as the weaker $L^2$-norm (see \citep{mao2026approximation,siegel2025optimal,yang2024nonparametric})
and thus the variation space remains the same.
We remark that the variation space can be defined for a general dictionary $\DD$, that is, a bounded set of functions
(see, e.g., \citep{DeVore1998nonlinear,mhaskar2004tractability,mhaskar2024tractability,siegel2024sharp}).
The variation space is of great importance in nonlinear dictionary approximation and in the convergence theory of greedy algorithms
\citep{deVore1996advances,siegel2023optimal,temlyakov2011greedy,temlyakov2008greedy}.
Moreover, the variation spaces (the Barron spaces)
play a key  role in the theory of shallow neural networks, and they have been extensively studied in various forms  in recent years \citep{bach2017breaking,weinan2019priori,weinan2022barron,parhi2021banach,parhi2022what,
siegel2023characterization,siegel2024sharp,siegel2025optimal,yang2025optimal,mao2026approximation}.

The next lemma follows from \citep[Theorem 3] {siegel2025optimal}, which showed that the (nearly)
optimal  approximation rate for shallow ReLU
neural networks on the variation space $\mathcal{K}_1(\PP_1^{d})$  holds up to logarithmic factors.

\begin{lemma}\label{varisobolevesti}
Let $d\in\mathbb{N}$. For any $f\in\mathcal{K}_1(\PP_1^d)$,  there exists a neural network $f_W\in\Sigma_W(\PP_1^d)$ with the weights $a_i$ satisfying
\[
\sum_{i=1}^W |a_i| \le \|f\|_{\mathcal{K}_1(\PP_1^d)}
\]
such that
\[
\| f - f_W \|_{W^{1,\infty}(\mathbb{B}^d)} \le C \| f \|_{\mathcal{K}_1(\PP_1^d)} W^{-\frac12-\frac{1}{2d}},
\]
where $C=C(d)>0$ is a constant.
\end{lemma}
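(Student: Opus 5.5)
This lemma is essentially a restatement of \citep[Theorem 3]{siegel2025optimal}, so the plan is to reduce to it by normalization and then check that the output-weight bound survives. First I dispose of the trivial case $\|f\|_{\mathcal{K}_1(\PP_1^d)}=0$, where $f=0$ and $f_W=0$ works. Otherwise I set $g=f/\|f\|_{\mathcal{K}_1(\PP_1^d)}$. By (\ref{variationspace1}) and the closedness of $B_1(\PP_1^d)$ in the $W^{1,\infty}$-norm, the infimum is attained, so $g\in B_1(\PP_1^d)$. Because the target bound and the constraint $\sum|a_i|\le\|f\|_{\mathcal{K}_1}$ are both homogeneous of degree one in $f$, it then suffices to prove the following: for every $g\in B_1(\PP_1^d)$ there exists $g_W=\sum_{i=1}^W a_i\eta_i$ with $\eta_i\in\PP_1^d$, $\sum|a_i|\le 1$, and $\|g-g_W\|_{W^{1,\infty}(\BB^d)}\le CW^{-1/2-1/(2d)}$.

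Next I invoke \citep[Theorem 3]{siegel2025optimal}. It gives the rate $W^{-1/2-1/(2d)}$, possibly up to logarithmic factors, for approximating elements of the unit ball of the ReLU variation space in Sobolev norms up to $W^{1,\infty}$ on the unit ball. Its construction is a stratified or discrepancy-based sampling of the representing measure $\mu$ in $g(x)=\int_{\mathbb{S}^{d-1}\times[-1,1]}\sigma(\omega\cdot x+b)\,d\mu(\omega,b)$, with $\|\mu\|_{TV}\le 1$. The coefficients produced are masses of pieces of $\mu$ on a partition of parameter space. Their absolute values therefore sum to at most $\|\mu\|_{TV}\le 1$, and the neurons used are $\sigma(\omega_i\cdot x+b_i)$ with $(\omega_i,b_i)$ in the parameter set, hence in $\PP_1^d$. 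Two small points need checking. One is that the representation by a measure of total variation at most $1$ exists for $g\in B_1$; this follows by weak-$*$ compactness of measures, since the dictionary is continuous in parameters. The other is that the statement of Theorem 3 in the cited work includes, or can be tracked to include, the $\ell^1$ control of the outer coefficients. If the cited rate carries a logarithmic factor, I absorb it as the paper's phrase ``(nearly) optimal'' suggests, or note that the version used is the log-free one. Undoing the normalization, with $a_i\mapsto\|f\|_{\mathcal{K}_1}a_i$, gives the claim.

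The main obstacle is the bookkeeping in the second paragraph. The cited theorem must be used in exactly the form stated, with the same dictionary $\PP_1^d$ (unit-sphere directions, biases in $[-1,1]$), the $W^{1,\infty}$ error, and the output-weight bound $\sum|a_i|\le\|f\|_{\mathcal{K}_1}$ rather than merely $\lesssim$. If the constructed coefficients only satisfy $\sum|a_i|\le C\|f\|_{\mathcal{K}_1}$, I would fix this by rescaling them by $1/C'$ and paying an error $(1-1/C')$. That error, however, is not small, so the better route is to argue directly that the sampling construction preserves total mass. Partitioning the support of $\mu$ and assigning each cell its signed mass does exactly this.
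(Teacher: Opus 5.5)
Your proposal is correct and follows the same route as the paper, which gives no argument beyond invoking \citep[Theorem 3]{siegel2025optimal}, i.e.\ the $W^{1,\infty}$ rate $W^{-\frac12-\frac{1}{2d}}$ with output weights controlled by $\|f\|_{\mathcal{K}_1(\PP_1^d)}$. Your normalization to $B_1(\PP_1^d)$ and the measure representation via weak-$*$ compactness are just the routine bookkeeping behind that citation, except that a logarithmic factor cannot be ``absorbed'' into $C(d)$, so you must rely on the log-free form of the cited theorem, as the lemma does.
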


The following  lemma follows from \citep[Theorem 1.1]{mao2026approximation}, which showed
 that the $L^{2}$-Sobolev space with a certain amount of smoothness embeds into the variation space
$\mathcal{K}_1(\PP_1^{d})$ using a Radon space
characterization of the variation space. This embedding is sharp in the sense of metric entropy and plays a key role in our research.

\begin{lemma}\label{Sobolevembed}
Let $d\in\NN$ and $s = (d + 3)/2$. Then we have the embedding
\[
W^{s,2}(\BB^{d}) \subset \mathcal{K}_1(\PP^{d}_{1}).
\]
\end{lemma}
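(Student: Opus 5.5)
The statement is cited from \citep[Theorem 1.1]{mao2026approximation}, so the simplest route is to reduce it to that result. I will, however, sketch how the embedding $W^{s,2}(\BB^{d})\subset\mathcal{K}_1(\PP_1^d)$ would be proved directly, via an integral representation of a Sobolev extension of $f$ as a superposition of ReLU ridge functions.

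\textbf{Extension and cutoff.} Given $f\in W^{s,2}(\BB^d)$, I first take an extension $\widetilde f\in W^{s,2}(\RR^d)$ with $\|\widetilde f\|_{W^{s,2}(\RR^d)}\le 2\|f\|_{W^{s,2}(\BB^d)}$, as in (\ref{eq:sobolevextention}). Multiplying by a fixed smooth cutoff equal to $1$ on $\overline{\BB^d}$ and supported in $\BB^d_{3/2}$ keeps the norm bounded and leaves the values on $\BB^d$ unchanged.

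\textbf{Radon inversion.} Next I use the Radon-domain (ridgelet) inversion formula. For $x\in\BB^d$ and suitable $\widetilde f$, one can write
\[
\widetilde f(x)=\int_{\mathbb{S}^{d-1}}\int_{\RR} g(\omega,b)\,\sigma(\omega\cdot x-b)\,db\,d\omega+\ell(x),
\]
where $\ell$ is an affine function and $g=c_d\,\partial_b^{2}\Lambda^{d-1}\mathcal{R}\widetilde f$. Here $\mathcal{R}$ is the Radon transform and $\Lambda^{d-1}$ is the ramp filter in $b$, namely multiplication by $|\tau|^{d-1}$ on the Fourier side. Two reductions are then needed:
\begin{itemize}
\item Only $|b|\le 1$ matters on the ball. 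For $b>1$ the neuron $\sigma(\omega\cdot x-b)$ vanishes on $\BB^d$, and for $b<-1$ it is affine there, so that part can be absorbed into $\ell$.
\item Affine functions lie in $\mathcal{K}_1(\PP_1^d)$, with norm controlled by their coefficients, since $x_i=\sigma(x_i)-\sigma(-x_i)$ and constants can be realized similarly.
\end{itemize}
It therefore suffices to show $\|g\|_{L^1(\mathbb{S}^{d-1}\times[-1,1])}\lesssim\|\widetilde f\|_{W^{s,2}}$. Given that bound, the representation exhibits $f$ as an element of a multiple of the closed convex hull, with the closure in $L^2$, which is allowed by the remark following (\ref{variationspace1}).

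\textbf{Main obstacle: the $L^1$ bound on $g$.} This is the crux of the argument, and it is exactly where the exponent $s=(d+3)/2$ enters. On the bounded set $\mathbb{S}^{d-1}\times[-1,1]$, Cauchy--Schwarz gives $\|g\|_{L^1}\lesssim\|g\|_{L^2}$. By the Fourier slice theorem, the one-dimensional Fourier transform of $\mathcal{R}\widetilde f(\omega,\cdot)$ equals $\widehat{\widetilde f}(\tau\omega)$. Hence, with $\tau\in\RR$ the frequency variable dual to $b$,
\[
\|g\|_{L^2}^2\asymp\int_{\mathbb{S}^{d-1}}\int_{\RR}|\tau|^{2(d+1)}\,\bigl|\widehat{\widetilde f}(\tau\omega)\bigr|^2\,d\tau\,d\omega .
\]
Passing to polar coordinates $\xi=\tau\omega$ uses $d\xi=|\tau|^{d-1}d\tau\,d\omega$ (the integral over $\tau\in\RR$ counts each $\xi$ twice, which only affects constants), so this equals, up to constants,
\[
\int_{\RR^d}\|\xi\|_2^{\,2(d+1)-(d-1)}\,|\widehat{\widetilde f}(\xi)|^2\,d\xi=\int_{\RR^d}\|\xi\|_2^{\,d+3}\,|\widehat{\widetilde f}(\xi)|^2\,d\xi=|\widetilde f|_{W^{s,2}}^2 .
\]
This holds precisely because $2s=d+3$.

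Two technical points remain. First, the inversion formula must be justified for non-smooth $\widetilde f$; I would prove it for Schwartz functions and then pass to the limit, using the closedness of $B_1(\PP_1^d)$. Second, the closure in (\ref{eq:convexhullsobolev}) is taken in $W^{1,\infty}$, so one must confirm that closure in this norm and in $L^2$ yield the same space; here I rely on the cited equivalence. If these details prove too delicate, I fall back on quoting \citep[Theorem 1.1]{mao2026approximation} directly.
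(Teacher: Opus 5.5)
Your proposal is correct and matches the paper, which proves this lemma simply by invoking \citep[Theorem 1.1]{mao2026approximation}; your Radon-transform sketch is essentially the argument behind that cited result (Fourier slice theorem and polar coordinates giving the weight $\|\xi\|_2^{2(d+1)-(d-1)}=\|\xi\|_2^{2s}$, then Cauchy--Schwarz on $\mathbb{S}^{d-1}\times[-1,1]$). The one detail you leave implicit is the bound on the affine remainder $\ell$ by $\|f\|_{W^{s,2}(\BB^d)}$: on $\BB^d$, $\ell$ equals $f$ minus the $|b|\le 1$ integral, and all norms on the finite-dimensional space of affine functions are equivalent, so its coefficients are controlled.
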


By Lemma \ref{varisobolevesti} and Lemma \ref{Sobolevembed}, we obtain the following approximation result.
\begin{lemma}\label{Soboapplinear}
Let $d\in\NN$,$s = (d + 3)/2$ and $1 \le p \le \infty$. For any $f\in C_{c}^{\infty}(\mathbb{R}^d)$,
there exists a neural network $f_W\in\Sigma_W(\PP_1^d)$ with the weights $a_i$ satisfying
\[
\sum_{i=1}^W |a_i| \le  C \| f|_{\mathbb{B}^d} \|_{W^{s,2}(\BB^{d})}
\]
such that
\[
\| f|_{\mathbb{B}^d}- f_W \|_{W^{1,p}(\BB^{d})} \le C \| f|_{\mathbb{B}^d} \|_{W^{s,2}(\BB^{d})} W^{-\frac{s-1}{d}},
\]
where $f|_{\mathbb{B}^d}$ denotes the restriction of $f$ on $\mathbb{B}^d$, and $C = C(d,p)>0$ is a constant.
\end{lemma}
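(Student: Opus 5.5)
The plan is to chain Lemma \ref{Sobolevembed} with Lemma \ref{varisobolevesti}, and then pass from the $W^{1,\infty}$-norm to the $W^{1,p}$-norm using that $\BB^d$ has finite measure.

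First, set $g := f|_{\BB^d}$. Since $f\in C_c^\infty(\RR^d)$, the function $f$ is itself an extension of $g$ lying in $W^{s,2}(\RR^d)$. Hence, by definition (\ref{eq:sobolevextention}), $g\in W^{s,2}(\BB^d)$ with $\|g\|_{W^{s,2}(\BB^d)}<\infty$.

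Second, we upgrade the set inclusion of Lemma \ref{Sobolevembed} to a bounded embedding
\[
\|g\|_{\mathcal{K}_1(\PP_1^d)}\le C\|g\|_{W^{s,2}(\BB^d)}, \qquad C=C(d).
\]
This quantitative form is what \citep[Theorem 1.1]{mao2026approximation} actually provides. If only the inclusion is taken as given, the norm bound follows from the closed graph theorem. Both spaces are Banach spaces continuously embedded in $L^2(\BB^d)$: for $\mathcal{K}_1$ this holds because $\PP_1^d$ is bounded in $L^\infty$, and $\mathcal{K}_1$ is complete as the gauge space of a closed bounded convex set. The inclusion map therefore has closed graph and is bounded. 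We expect this to be the only delicate point, and it is essentially bookkeeping.

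Third, apply Lemma \ref{varisobolevesti} to $g$. This produces $f_W\in\Sigma_W(\PP_1^d)$ with
\[
\sum_{i=1}^W|a_i|\le \|g\|_{\mathcal{K}_1(\PP_1^d)}\le C\|g\|_{W^{s,2}(\BB^d)}
\]
and
\[
\|g-f_W\|_{W^{1,\infty}(\BB^d)}\le C\|g\|_{W^{s,2}(\BB^d)}W^{-\frac12-\frac1{2d}}.
\]
The exponent matches the claim, since $\frac{s-1}{d}=\frac{(d+1)/2}{d}=\frac12+\frac1{2d}$.

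Finally, for $1\le p<\infty$ and every $\|\alpha\|_1\le1$ we have $\|D^\alpha h\|_{L^p(\BB^d)}\le|\BB^d|^{1/p}\|D^\alpha h\|_{L^\infty(\BB^d)}$. Summing over the $d+1$ multi-indices gives
\[
\|h\|_{W^{1,p}(\BB^d)}\le C(d,p)\|h\|_{W^{1,\infty}(\BB^d)},
\]
and we apply this with $h=g-f_W$. For $p=\infty$ the estimate is immediate. This yields both stated bounds with $C=C(d,p)$.
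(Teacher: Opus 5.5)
Your proposal is correct and follows the same route as the paper: embed $f|_{\BB^d}$ into $\mathcal{K}_1(\PP_1^d)$ via Lemma \ref{Sobolevembed}, apply Lemma \ref{varisobolevesti}, and pass from the $W^{1,\infty}$- to the $W^{1,p}$-norm on the bounded domain $\BB^d$. You also make explicit two steps the paper uses only implicitly: the quantitative bound $\|g\|_{\mathcal{K}_1(\PP_1^d)}\le C\|g\|_{W^{s,2}(\BB^d)}$ (via the closed graph theorem or the quantitative form of the cited result) and the exponent identity $\frac{s-1}{d}=\frac12+\frac1{2d}$.
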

\begin{proof}
For any $1 \le p \le \infty$, we know that the Sobolev space $W^{1,\infty}(\mathbb{B}^d)$ embeds continuously into $W^{1,p}(\mathbb{B}^d)$.
Since $f\in C_{c}^{\infty}(\mathbb{R}^d)$, its restriction $f|_{\mathbb{B}^d}$ belongs to $C^\infty(\mathbb{B}^d)$,
and hence to $W^{1,p}(\mathbb{B}^d)$ and $W^{s,2}(\mathbb{B}^d)$.
Furthermore, together with
Lemma \ref{Sobolevembed},
this implies that  $f|_{\mathbb{B}^d}\in\mathcal{K}_1(\PP_1^d)$.
Therefore,  by Lemma \ref{varisobolevesti} and Lemma \ref{Sobolevembed}, there exists a network $f_W\in\Sigma_W(\PP_1^d)$ with the weights $a_i$ satisfying
\[
\sum_{i=1}^W |a_i| \leq\| f|_{\mathbb{B}^d} \|_{\mathcal{K}_1(\PP_1^{d})}\le  C \| f|_{\mathbb{B}^d} \|_{W^{s,2}(\BB^{d})}
\]
such that
\[
\begin{aligned}
 \| f|_{\mathbb{B}^d}- f_W \|_{W^{1,p}(\BB^{d})}
&\leq C \| f|_{\mathbb{B}^d} - f_W \|_{W^{1,\infty}(\BB^{d})} \\
&\leq C \| f|_{\mathbb{B}^d} \|_{\mathcal{K}_1(\PP_1^{d})} W^{-\frac{s-1}{d}} \\
&\leq C \| f|_{\mathbb{B}^d} \|_{W^{s,2}(\BB^{d})} W^{-\frac{s-1}{d}},
\end{aligned}
\]
where
$C = C(d,p)>0$ is a suitable constant.
\end{proof}

 We are now ready to prove Theorem \ref{Soboappmainresult}.

\begin{proof}[Proof of  Theorem \ref{Soboappmainresult}]
We divide the proof into two steps. First, we construct a smooth approximant $\widetilde{f}_{\epsilon,\delta}$ of $f$ via localized convolution sums. Second, we approximate $\widetilde{f}_{\epsilon,\delta}$ by a shallow norm  constrained neural network.

For the first step, we fix $\epsilon \in (0,1]$ to be chosen later and construct $\widetilde{f}_{\epsilon,\delta}$ as in (\ref{approxfunc}) from Lemma \ref{Le:SmooFuncAppro}. The function $\widetilde{f}_{\epsilon,\delta}$ belongs to $C_c^\infty(\mathbb{R}^d)$; its smoothness and compact support were established in the proof of Lemma \ref{Le:SmooFuncAppro}.
 By Lemma \ref{Le:SmooFuncAppro}, there exists a sufficiently small $\delta > 0$ (depending on $\epsilon$ and $f$) such that
\begin{equation}\label{smoothsoboapp1}
\|f - \widetilde{f}_{\epsilon,\delta}|_{\mathbb{B}^d}\|_{W^{1,p}(\BB^d)} \le C\|f\|_{W^{n,p}(\BB^d)}\epsilon^{n-1},
\end{equation}
where $\widetilde{f}_{\epsilon,\delta}|_{\mathbb{B}^d}$ denotes the restriction of $\widetilde{f}_{\epsilon,\delta}$ on $\mathbb{B}^d$.

In the second step,
by  Lemma \ref{sobolevL2esti} and Lemma \ref{Soboapplinear},
there exists a network $f_W\in\Sigma_W(\PP_1^d)$ with the weights $a_i$ satisfying
\[
\sum_{i=1}^W |a_i| \le  C \| \widetilde{f}_{\epsilon,\delta}|_{\mathbb{B}^d} \|_{W^{s,2}(\BB^{d})}\le C\|f\|_{W^{n,p}(\BB^d)}\epsilon^{-(s-n)}
\]
such that
\begin{align}\label{smoothsoboapp2}
&\| \widetilde{f}_{\epsilon,\delta}|_{\mathbb{B}^d} - f_W \|_{W^{1,p}(\mathbb{B}^d)}\nonumber\\
&\quad \le C \| \widetilde{f}_{\epsilon,\delta}|_{\mathbb{B}^d}\|_{W^{s,2}(\mathbb{B}^d)} W^{-\frac{s-1}{d}}\nonumber\\
&\quad\le C\|f\|_{W^{n,p}(\BB^d)}\epsilon^{-(s-n)} W^{-\frac{s-1}{d}}.
\end{align}
Applying the triangle inequality together with (\ref{smoothsoboapp1}) and (\ref{smoothsoboapp2}), for any $f\in \mathcal{F}_{d,n,p}$, we get
\begin{align}\label{smoothsoboapp3}
&\| f - f_W \|_{W^{1,p}(\BB^{d})}\nonumber\\
&\quad\leq \|f-  \widetilde{f}_{\epsilon,\delta}|_{\mathbb{B}^d}  \|_{W^{1,p}(\BB^{d})}
+\|  \widetilde{f}_{\epsilon,\delta}|_{\mathbb{B}^d} - f_W \|_{W^{1,p}(\BB^{d})}\nonumber\\
&\quad \le C\epsilon^{n-1}
+ C\epsilon^{-(s-n)}W^{-\frac{s-1}{d}}.
\end{align}
Now we consider two cases depending on the size of $K$ relative to $W$.

{\bf Case 1:} $K \ge C W^{\frac{s-n}{d}}$. We choose $\epsilon = W^{-1/d}$ in (\ref{smoothsoboapp3}). Then
\[
\| f - f_W \|_{W^{1,p}(\mathbb{B}^d)} \le C W^{-\frac{n-1}{d}},
\]
and the weights satisfy
\[
\sum_{i=1}^W |a_i| \le C\epsilon^{-(s-n)} = C W^{\frac{s-n}{d}} \le K,
\]
so $f_W\in\Sigma_W^{K}(\PP_1^d)$.

{\bf Case 2:} $K \le C W^{\frac{s-n}{d}}$. Then we have $W^{-1} \le C K^{-\frac{d}{s-n}}$ and $W^{-\frac{s-1}{d}} \le C K^{-\frac{s-1}{s-n}}$. Choose $\epsilon$ such that $C\epsilon^{-(s-n)} = K$, i.e., $\epsilon = (C/K)^{1/(s-n)}$. Substituting this into (\ref{smoothsoboapp3}) gives
\[
\| f - f_W \|_{W^{1,p}(\mathbb{B}^d)} \le C K^{-\frac{n-1}{s-n}},
\]
and the weights satisfy $\sum_{i=1}^W |a_i| \le C\epsilon^{-(s-n)} = K$, i.e., $f_W\in\Sigma_W^{K}(\PP_1^d)$.
Combining the two cases yields the desired result, where $C = C(d,n,p)>0$ is a constant. This completes the proof.
\end{proof}

\section{Proof of Theorem \ref{upperboundapp}}\label{ApproximadeepReLU}
This section is devoted to the proof of Theorem \ref{upperboundapp}.
The main idea is based on the common strategy of approximating
$f$ by localized polynomials, which in turn are approximated by deep norm  constrained neural networks.
Following the constructions in \citep{jiao2023approximation, guhring2020error,guhring2021approximation,
lu2021deep, hon2022simultaneous,jiao2025drm,jiao2024convergence},
we first consider the approximation
of the  function $x\mapsto x^{2}$ and  then extend the approximation to monomials.
Finally, using a localized Taylor expansion and partitions of unity, we obtain the desired approximation for $f$.

\begin{lemma}\label{quadNNapp}
For any $k\in\NN,$ there exists a neural network $\phi_{k}\in\mathcal{NN}(k,1,3)$ such that
\[
\|x^{2}-\phi_{k}(x)\|_{W^{1,\infty}([0,1])}\leq\frac{1}{k}.
\]
\end{lemma}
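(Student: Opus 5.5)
We will take $\phi_k$ to be the piecewise linear interpolant of $x^2$ on the uniform grid $\{i/k\}_{i=0}^{k}$, realized as a shallow network with exactly $k$ neurons. The weights will be rescaled neuron by neuron so that the first-layer rows have norm $1$ and the path norm \eqref{definenet4} stays below $3$.

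\textbf{Step 1 (interpolant).} On $[i/k,(i+1)/k]$ the interpolant $g_k$ of $x^2$ has slope $(2i+1)/k$. Consecutive slopes differ by $2/k$, and the initial slope is $1/k$. Hence on $[0,1]$
\[
g_k(x)=\frac1k\,\sigma(x)+\sum_{i=1}^{k-1}\frac{2}{k}\,\sigma\Bigl(x-\frac ik\Bigr),
\]
which uses $k$ neurons, so the width is $k$ and the depth is $1$. For the error on each subinterval:
\begin{itemize}[label=$\circ$]
\item The function error is at most $\max_{t\in[0,1/k]} t(1/k-t)=1/(4k^2)\le 1/k$, by the standard interpolation bound for $x^2$.
\item The derivative error is $|2x-(2i+1)/k|\le 1/k$ for $x\in[i/k,(i+1)/k]$.
\end{itemize}
Since $g_k$ is Lipschitz, its weak derivative is the piecewise constant slope. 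This gives $\|x^2-g_k\|_{W^{1,\infty}([0,1])}\le 1/k$.

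\textbf{Step 2 (norm control; the main point).} The naive parametrization $A_0=(1,\dots,1)^{\mathsf T}$, $b_0=(-i/k)_i$ has row norms up to $2-1/k$. Combined with output weights summing to about $2$, it only gives a path norm near $4$. The fix is to use positive homogeneity of ReLU, $c\,\sigma(x-i/k)=\frac{c}{a}\sigma(ax-ai/k)$ for $a>0$, with $a_i=k/(k+i)$. Then every row $(a_i,-a_ii/k)$ has $\ell_1$-norm exactly $1$, so $\max\{\|(A_0,b_0)\|,1\}=1$. The output weights become $c_i(k+i)/k$, and therefore
\[
\|A_1\|=\frac1k+\sum_{i=1}^{k-1}\frac{2(k+i)}{k^2}=\frac1k+\frac{2(k-1)}{k}+\frac{k-1}{k}=\frac{3k-2}{k}\le 3 .
\]
So $\phi_k:=g_k$, written in this form, lies in $\mathcal{NN}(k,1,3)$. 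The function is unchanged, so the error bound from Step 1 still holds.

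The only delicate step is this choice of per-neuron rescaling, which gets the constant down to $3$. The rest is the elementary interpolation estimate.
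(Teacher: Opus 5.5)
Your proof is correct, and its overall strategy matches the paper's. Both use an explicit width-$k$, depth-$1$ piecewise-linear ReLU approximant of $x^2$ with uniformly spaced knots, and check the value and slope errors interval by interval. The differences are in the network and in how the norm bound is obtained.

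The paper uses the network $\phi_k(x)=\frac{2}{k}\sum_{i=1}^{k}\sigma\bigl(x-\frac{2i-1}{2k}\bigr)$ from \citet{jiao2023approximation}. Its pieces are the tangent lines of $x^2$ at the nodes $i/k$, so its knots sit at the midpoints. The paper cites that reference for both the $L^\infty$ bound and the membership $\phi_k\in\mathcal{NN}(k,1,3)$, and computes only the slopes itself. You instead use the secant interpolant with knots at $i/k$ and verify the path-norm bound from scratch.

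Your explicit per-neuron rescaling is worth keeping, because the same step is hidden in the paper's citation. In the naive parametrization, the paper's network has path norm $2\bigl(2-\frac{1}{2k}\bigr)=4-\frac1k$. Only after normalizing each inner row to $\ell_1$-norm $1$ does it become $\sum_{i=1}^{k}\frac{2}{k}\bigl(1+\frac{2i-1}{2k}\bigr)=3$. Yours gives the slightly smaller $(3k-2)/k$.

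Both constructions have value error at most $1/(4k^2)$ and slope error at most $1/k$, so either one yields the lemma.
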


\begin{proof}
It is shown in the proof of \citep[Lemma 3.3]{jiao2023approximation} that for any $k\in\NN$ there exists
$\phi_{k}\in\mathcal{NN}(k,1,3)$ with
\begin{equation}\label{Def:quadNN}
\phi_{k}(x)=\frac{1}{k}\sum_{i=1}^{k}2\sigma\left(x-\frac{2i-1}{2k}\right)
\end{equation}
such that
\[
\|x^{2}-\phi_{k}(x)\|_{L^{\infty}([0,1])}\leq\frac{1}{2k^{2}}.
\]
For $k=1$, by (\ref{Def:quadNN}), the derivative of $\phi_k$ is given by
\[
\phi_k'(x)=
\begin{cases}
0, & x\in [0,\frac{1}{2}),\\
2, & x\in(\frac{1}{2},1].
\end{cases}
\]
For $k\ge 2$, the derivative of $\phi_k$ is given by
\[
\phi_k'(x)=
\begin{cases}
0, & x\in[0,\tfrac{1}{2k}),\\
\frac{2i}{k}, & x\in\left(\frac{2i-1}{2k},\frac{2i+1}{2k}\right), i=1,\dots,k-1,\\
2, & x\in\left(\frac{2k-1}{2k},1\right].
\end{cases}
\]
Therefore,  for $k=1$, we get
\[
|x^{2}-\phi_{k}(x)|_{W^{1,\infty}([0,\frac{1}{2}))}=\|2x-\phi_k'(x)\|_{L^{\infty}([0,\frac{1}{2}))}
=\|2x\|_{L^{\infty}([0,\frac{1}{2}))}\leq 1
\]
and
\[
|x^{2}-\phi_{k}(x)|_{W^{1,\infty}((\frac{1}{2},1])}=\|2x-\phi_k'(x)\|_{L^{\infty}((\frac{1}{2},1])}
=\|2x-2\|_{L^{\infty}((\frac{1}{2},1])}\leq 1.
\]
When $k\geq 2$, we get
\[
|x^{2}-\phi_{k}(x)|_{W^{1,\infty}([0,\frac{1}{2k}))}=\|2x-\phi_k'(x)\|_{L^{\infty}([0,\frac{1}{2k}))}
=\|2x\|_{L^{\infty}([0,\frac{1}{2k}))}\leq \frac{1}{k}
\]
and for each $i=1,\ldots, k-1$,
\[
\begin{aligned}
&|x^{2}-\phi_{k}(x)|_{W^{1,\infty}\left(\left(\frac{2i-1}{2k},\frac{2i+1}{2k}\right)\right)}\\
&=\left\|2x-\phi_k'(x)\right\|_{L^{\infty}\left(\left(\frac{2i-1}{2k},\frac{2i+1}{2k}\right)\right)}\\
&=\left\|2x-\frac{2i}{k}\right\|_{L^{\infty}\left(\left(\frac{2i-1}{2k},\frac{2i+1}{2k}\right)\right)}\\
&\leq\max\left\{\left|2\cdot \frac{2i-1}{2k}-\frac{2i}{k}\right|, \left|2\cdot\frac{2i+1}{2k}-\frac{2i}{k}\right|\right\}\\
&=\frac{1}{k}
\end{aligned}
\]
and
\[
\begin{aligned}
|x^{2}-\phi_{k}(x)|_{W^{1,\infty}\left(\left(\frac{2k-1}{2k},1\right]\right)}&=\left\|2x-\phi_k'(x)\right\|_{L^{\infty}
\left(\left(\frac{2k-1}{2k},1\right]\right)}\\
&=\left\|2x-2\right\|_{L^{\infty}\left(\left(\frac{2k-1}{2k},1\right]\right)}\leq 2-\frac{2k-1}{k}=\frac{1}{k}.
\end{aligned}
\]
Combining these estimates yields
\[
|x^{2}-\phi_{k}(x)|_{W^{1,\infty}([0,1])}\leq \frac{1}{k}.
\]
Finally, for any  $k\in\NN,$  we get
\[
\begin{aligned}
\|x^{2}-\phi_{k}(x)\|_{W^{1,\infty}([0,1])}&=\max\left\{\|x^{2}-\phi_{k}(x)\|_{L^{\infty}([0,1])},|x^{2}-\phi_{k}(x)|_{W^{1,\infty}([0,1])} \right\}\\
&\leq\max\left\{\frac{1}{2k^{2}}, \frac{1}{k}\right\}
= \frac{1}{k}.
\end{aligned}
\]
This completes the proof.
\end{proof}

Using the polarization identity
\[
xy=2\left(\left(\frac{x+y}{2}\right)^{2}-\left(\frac{x}{2}\right)^{2}-\left(\frac{y}{2}\right)^{2}\right)
\quad\text{for } x,y\in\RR,
\]
we can approximate the product function by neural networks.

\begin{lemma}\label{productNNapp}
For any $k\in\NN,$ there exists a neural network $\psi_{k}\in\mathcal{NN}(3k,1,18)$ such that
\[
\|xy-\psi_{k}(x,y)\|_{W^{1,\infty}([0,1]^{2})}\leq\frac{6}{k}.
\]
\end{lemma}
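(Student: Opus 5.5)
The plan is to define
\[
\psi_k(x,y):=2\phi_k\!\left(\tfrac{x+y}{2}\right)-2\phi_k\!\left(\tfrac{x}{2}\right)-2\phi_k\!\left(\tfrac{y}{2}\right),
\]
where $\phi_k\in\mathcal{NN}(k,1,3)$ is the network from Lemma \ref{quadNNapp}, given explicitly by (\ref{Def:quadNN}). For $(x,y)\in[0,1]^2$ the three arguments $(x+y)/2$, $x/2$, $y/2$ all lie in $[0,1]$, which is exactly where Lemma \ref{quadNNapp} gives its bound. By the polarization identity,
\[
xy-\psi_k(x,y)=2e\!\left(\tfrac{x+y}{2}\right)-2e\!\left(\tfrac{x}{2}\right)-2e\!\left(\tfrac{y}{2}\right),
\qquad e(t):=t^2-\phi_k(t).
\]

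\textbf{Error bound.} Lemma \ref{quadNNapp} gives $\|e\|_{L^\infty([0,1])}\le 1/k$ and $\|e'\|_{L^\infty([0,1])}\le 1/k$.
\begin{itemize}
\item For the $L^\infty$ part, the triangle inequality gives at most $3\cdot 2\cdot\frac1k=\frac6k$.
\item For the first-order part, consider $\partial_x$ (and symmetrically $\partial_y$). The chain rule, valid a.e.\ since everything is piecewise linear or smooth, gives
\[
\partial_x(xy-\psi_k)=e'\!\left(\tfrac{x+y}{2}\right)-e'\!\left(\tfrac{x}{2}\right),
\]
because each inner argument has partial derivative $1/2$ and the factor $2$ cancels it. This term is bounded by $2/k\le 6/k$.
\end{itemize}
Hence $\|xy-\psi_k\|_{W^{1,\infty}([0,1]^2)}\le 6/k$.

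\textbf{Architecture.} Writing out $\phi_k$, we get
\[
\psi_k(x,y)=\sum_{i=1}^k\frac{4}{k}\Bigl[\sigma\bigl(\tfrac{x+y}{2}-b_i\bigr)-\sigma\bigl(\tfrac{x}{2}-b_i\bigr)-\sigma\bigl(\tfrac{y}{2}-b_i\bigr)\Bigr],
\qquad b_i=\tfrac{2i-1}{2k}.
\]
This is a single hidden layer of width $3k$, so $\psi_k\in\mathcal{NN}(3k,1)$.

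\textbf{Path norm.} The hidden-layer rows are $(1/2,1/2,-b_i)$, $(1/2,0,-b_i)$ and $(0,1/2,-b_i)$. Each has $\ell_1$-norm at most $1+b_i<2$, so $\max\{\|(A_0,b_0)\|,1\}\le 2$. The output row has $\ell_1$-norm $3k\cdot\frac4k=12$. The path norm is therefore at most $24$.

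This is where the main obstacle lies: the naive count exceeds the claimed $18$. I expect the intended accounting to mirror the one in \citep{jiao2023approximation}, where $\phi_k\in\mathcal{NN}(k,1,3)$ with output norm $2$ and inner norm up to $3/2$. The paper's constant $18$ presumably comes from $\|A_1\|\le 3\cdot 2\cdot 2=12$ together with an inner-layer factor of $3/2$. If that bound is not achievable directly, I would rescale: move the factor $1/2$ from the inputs into the outer weights via positive homogeneity, $\sigma(t/2-b)=\tfrac12\sigma(t-2b)$. With this scaling the inner row norms are at most $1+2b_i$ or $2+2b_i$, so a bookkeeping choice is needed. I would check that some balance gives a product of at most $18$, for instance inner rows with norm at most $3/2$ by restricting biases via $b_i\le 1/2$ where possible, and output norm at most $12$. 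The remaining work is routine verification of this bookkeeping.
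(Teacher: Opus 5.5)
\textbf{The path-norm claim is not proved.} Your error estimate and width count are fine; your derivative computation is actually sharper than needed. The gap is the membership $\psi_k\in\mathcal{NN}(3k,1,18)$, and the remedies you sketch would not close it.

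In your explicit parameterization the product is $12\bigl(2-\frac{1}{2k}\bigr)=24-\frac6k$, which exceeds $18$ for every $k\ge 2$. A uniform rescaling such as $\sigma(t/2-b)=\frac12\sigma(t-2b)$ leaves this product unchanged. You also cannot restrict the biases to $b_i\le 1/2$, because the breakpoints $b_i=\frac{2i-1}{2k}$ must cover $[0,1]$, which is the range of $(x+y)/2$. Your guess about where the constant $3$ in $\mathcal{NN}(k,1,3)$ comes from is also off. In the parameterization (\ref{Def:quadNN}) the inner rows have norm up to $2-\frac{1}{2k}$, not $3/2$. So even that bound holds only after a rebalancing you did not identify.

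\textbf{The missing idea: rescale each neuron separately.} For a depth-one network the constraint needs only neuron-wise rescaling. Replace each term $a_i\sigma(w_i\cdot z+c_i)$ by $a_i\|(w_i,c_i)\|_1\,\sigma\bigl((w_i\cdot z+c_i)/\|(w_i,c_i)\|_1\bigr)$. Then every inner row has norm $1$, and the constraint reduces to $\sum_i |a_i|\,\|(w_i,c_i)\|_1$. For your $\psi_k$, using $\sum_i b_i=k/2$, this gives
\[
\frac4k\sum_{i=1}^k(1+b_i)+2\cdot\frac4k\sum_{i=1}^k\Bigl(\frac12+b_i\Bigr)=6+4+4=14\le 18 .
\]

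\textbf{How the paper does it.} The paper never unpacks $\phi_k$. It takes $\phi_k\in\mathcal{NN}(k,1,3)$ as a black box and notes that the maps $(x,y)\mapsto\frac{x+y}{2},\frac x2,\frac y2$ have $\|(A,b)\|\le 1$. By Lemma \ref{NN_Com-con}(2), each composed term therefore stays in $\mathcal{NN}(k,1,3)$. The linear-combination rule (4) with coefficients $2,-2,-2$ then gives $\mathcal{NN}(3k,1,18)$ directly.
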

\begin{proof}	
By Lemma \ref{quadNNapp}, there exists a network $\phi_{k}\in\mathcal{NN}(k,1,3)$ such that
\[
\|x^{2}-\phi_{k}(x)\|_{W^{1,\infty}([0,1])}\leq\frac{1}{k}.
\]
Using the fact that
\[
xy=2\left(\left(\frac{x+y}{2}\right)^{2}-\left(\frac{x}{2}\right)^{2}-\left(\frac{y}{2}\right)^{2}\right),
\]
we consider the function
\[
\psi_{k}(x,y)=2\phi_{k}\left(\frac{1}{2}x+\frac{1}{2}y\right)-2\phi_{k}\left(\frac{1}{2}x\right)
-2\phi_{k}\left(\frac{1}{2}y\right).
\]
By the triangle inequality together with Lemma \ref{quadNNapp}, for any $x,y\in [0,1]$, we get
\[
\begin{aligned}
\|xy-\psi_{k}(x,y)\|_{W^{1,\infty}([0,1]^{2})}
&\leq 2\left\|\left(\frac{x+y}{2}\right)^{2}-\phi_{k}\left(\frac{x+y}{2}\right)\right\|_{W^{1,\infty}([0,1]^{2})}\\
&\quad+2\left\|\left(\frac{x}{2}\right)^{2}-\phi_{k}\left(\frac{x}{2}\right)\right\|_{W^{1,\infty}([0,1]^{2})}\\
&\quad+2\left\|\left(\frac{y}{2}\right)^{2}-\phi_{k}\left(\frac{y}{2}\right)\right\|_{W^{1,\infty}([0,1]^{2})}\\
&\leq \frac{6}{k}.
\end{aligned}
\]
By Lemma \ref{NN_Com-con}, $\psi_{k}\in\mathcal{NN}(3k,1,18)$. This completes the proof.

\end{proof}

By rescaling, we have the following modification of Lemma \ref{productNNapp}.

\begin{lemma}\label{RescaproductNNapp}
For any $k\in\NN$ and $a,b\in\RR$ with $a<b$, there exists a neural network
\[\psi_{k}\in \mathcal{NN}\left(3k+2, 1, 18(b-a)^{2}\cdot\max\left\{\frac{1+|a|}{b-a}, 1\right\}+4|a|+2a^{2}\right)\]
 such that
\[
\|xy-\psi_{k}(x,y)\|_{W^{1,\infty}([a,b]^{2})}
\leq (b-a)^{2}\max\left\{1, \frac{1}{b-a}\right\}\cdot\frac{6}{k}.
\]
\end{lemma}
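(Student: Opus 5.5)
The plan is to reduce to Lemma \ref{productNNapp} by an affine change of variables that maps $[a,b]$ onto $[0,1]$, and then to add two extra neurons that carry the lower-order terms exactly. Set $u=(x-a)/(b-a)$ and $v=(y-a)/(b-a)$, so that $(u,v)\in[0,1]^2$ whenever $(x,y)\in[a,b]^2$. Expanding $x=a+(b-a)u$ and $y=a+(b-a)v$ gives
\[
xy=(b-a)^2\,uv+a(x+y)-a^2 .
\]
Accordingly, I will define
\[
\psi_k(x,y):=(b-a)^2\,\widetilde\psi_k\!\left(\tfrac{x-a}{b-a},\tfrac{y-a}{b-a}\right)+a\,\sigma(x+y-2a)+a^2\,\sigma(1),
\]
where $\widetilde\psi_k\in\mathcal{NN}(3k,1,18)$ is the network from Lemma \ref{productNNapp}. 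On $[a,b]^2$ we have $x+y-2a\ge 0$, so $a\,\sigma(x+y-2a)+a^2=a(x+y)-a^2$ holds exactly there. The two added neurons, $\sigma(x+y-2a)$ and $\sigma(0\cdot x+0\cdot y+1)$, account for the width $3k+2$.

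\textbf{Error estimate.} On $[a,b]^2$ the error is
\[
xy-\psi_k(x,y)=(b-a)^2\bigl(uv-\widetilde\psi_k(u,v)\bigr).
\]
Its sup norm is at most $(b-a)^2\cdot 6/k$. By the chain rule, each first-order derivative in $x$ or $y$ picks up an extra factor $1/(b-a)$, so these derivatives are bounded by $(b-a)\cdot 6/k$. Taking the maximum of the two bounds gives $(b-a)^2\max\{1,1/(b-a)\}\cdot 6/k$, which is the claimed bound.

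\textbf{Path norm.} Since $\widetilde\psi_k$ is shallow, I absorb the affine change of variables into its first layer. Each hidden row $(w_1,w_2,c)$ acting on $(u,v,1)$ becomes
\[
\left(\tfrac{w_1}{b-a},\ \tfrac{w_2}{b-a},\ c-\tfrac{a(w_1+w_2)}{b-a}\right)
\]
acting on $(x,y,1)$. Its $\ell^1$ norm is therefore at most $\max\{(1+|a|)/(b-a),1\}$ times the original row norm, so $\max\{\|(A_0,b_0)\|,1\}$ is multiplied by at most this factor. The output layer is multiplied by $(b-a)^2$, so the first component contributes at most $18(b-a)^2\max\{(1+|a|)/(b-a),1\}$.

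For the two added neurons, the row of $\sigma(x+y-2a)$ has norm $2+2|a|$ and output weight $a$. The row of $\sigma(1)$ has norm $1$ and output weight $a^2$. The task is to combine these into the stated total $18(b-a)^2\max\{\cdot\}+4|a|+2a^2$ by invoking the parallelization/combination Lemma \ref{NN_Com-con} (already used in Lemma \ref{productNNapp}). To keep the accounting within that total, I rescale the hidden rows so that each has row norm at most $1$ (possible by the positive homogeneity of $\sigma$), pushing the scale into the output weights. For example, I write $a\,\sigma(x+y-2a)=a(2+2|a|)\,\sigma\!\left(\tfrac{x+y-2a}{2+2|a|}\right)$, so the row norm is $1$ and the output weight has size $|a|(2+2|a|)=2|a|+2a^2$.

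I expect this bookkeeping to be the main obstacle: making the sum of the component path norms land exactly on $18(b-a)^2\max\{\cdot\}+4|a|+2a^2$ rather than merely a comparable bound. The error estimate and the construction itself are routine.
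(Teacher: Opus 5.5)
Your change of variables, the identity $xy=(b-a)^2uv+a(x+y)-a^2$, the error estimate, and the bound $18(b-a)^2\max\{\frac{1+|a|}{b-a},1\}$ for the rescaled $\widetilde\psi_k$ are all correct and match the paper. The gap is exactly the bookkeeping you flagged, and with your choice of extra neurons it cannot be closed.

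For a shallow network, replacing a neuron $c\,\sigma(w\cdot x+\beta)$ by $(c/t)\,\sigma(t w\cdot x+t\beta)$ with $t>0$ leaves $|c|\,\|(w,\beta)\|_1$ unchanged. Moreover, the path norm $\|A_1\|\max\{\|(A_0,b_0)\|,1\}=\sum_i|c_i|\cdot\max\{\max_j\|(w_j,\beta_j)\|_1,1\}$ is always at least $\sum_i |c_i|\,\|(w_i,\beta_i)\|_1$. So, however you normalize, the neurons $a\,\sigma(x+y-2a)$ and $a^2\sigma(1)$ contribute at least $|a|(2+2|a|)+a^2=2|a|+3a^2$. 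The statement only allows $4|a|+2a^2$ beyond the first term. The difference $a^2-2|a|$ is positive once $|a|>2$, and the first term leaves no room to absorb it.

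Concretely, for $a=10$, $b=10.01$, your two extra neurons alone force a path norm of at least $320$. The whole stated bound is only $18\cdot 10^{-4}\cdot 1100+40+200\approx 242$. Thus your argument proves the lemma only for $|a|\le 2$, not for all $a<b$. That range does cover its only use, $a=-0.1$, $b=1.1$, in Lemma \ref{mulproductNNapp}.

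The missing idea is to encode the affine part as the paper does: $\sigma(ax+ay-a^2)-\sigma(-ax-ay+a^2)=ax+ay-a^2$ on all of $\RR^2$. This needs no separate constant neuron and no bias $-2a$. Each of these two neurons has row $\pm(a,a,-a^2)$ of $\ell^1$-norm $2|a|+a^2$ and output weight $\pm1$. After your row normalization, each costs exactly $2|a|+a^2$; that normalization is in fact needed when $2|a|+a^2<1$, because of the $\max\{\cdot,1\}$ in the path norm. Lemma \ref{NN_Com-con}(4) then yields the stated constant $18(b-a)^2\max\{\frac{1+|a|}{b-a},1\}+4|a|+2a^2$ with width $3k+2$.

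Your encoding is actually cheaper for $|a|<2$, since it pays $2|a|$ rather than $4|a|$ for the linear part. What breaks the stated bound for large $|a|$ is the quadratic cost: $2a^2$ from the bias $-2a$ plus $a^2$ from the constant neuron.
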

\begin{proof}	
By Lemma \ref{productNNapp},  there exists $\widetilde{\psi}_{k}\in\mathcal{NN}(3k,1,18)$ such that
\[
\left\|\widetilde{x}\widetilde{y}-\widetilde{\psi}_{k}(\widetilde{x},\widetilde{y})\right\|_{W^{1,\infty}([0,1]^{2})}\leq\frac{6}{k}.
\]
Setting $\widetilde{x}=\frac{x-a}{b-a}$ and $\widetilde{y}=\frac{y-a}{b-a}$ for any $x,y\in [a,b]$, we have
$\widetilde{x}, \widetilde{y}\in [0,1]$.
By a change of variables and the chain rule, we obtain the estimate
\[
\left\|\widetilde{\psi}_{k}\left(\frac{x-a}{b-a},\frac{y-a}{b-a}\right)-\left(\frac{x-a}{b-a}\cdot\frac{y-a}{b-a}\right)\right\|_{W^{1,\infty}([a,b]^{2})}
\leq \max\left\{1, \frac{1}{b-a}\right\}\cdot\frac{6}{k}.
\]
It follows that
\[
\left\|(b-a)^{2}\widetilde{\psi}_{k}\left(\frac{x-a}{b-a},\frac{y-a}{b-a}\right)+a(x+y)-a^{2}-xy\right\|_{W^{1,\infty}([a,b]^{2})}
\leq (b-a)^{2}\max\left\{1, \frac{1}{b-a}\right\}\cdot\frac{6}{k}.
\]
We define the following network for any $x,y\in [a,b]$,
\[
\psi_{k}(x,y)=(b-a)^{2}\widetilde{\psi}_{k}\left(\frac{x-a}{b-a},\frac{y-a}{b-a}\right)+\sigma(ax+ay-a^{2})
-\sigma(-ax-ay+a^{2}).
\]
Using the fact that $\sigma(t)-\sigma(-t)=t$ for all $t\in\mathbb{R}$, it follows that $\sigma(ax+ay-a^{2})-\sigma(-ax-ay+a^{2})=ax+ay-a^{2}$ for any $x,y\in[a,b]$. Consequently,
\[
\psi_{k}(x,y)=(b-a)^{2}\widetilde{\psi}_{k}\left(\frac{x-a}{b-a},\frac{y-a}{b-a}\right)+ax+ay-a^{2}.
\]
Therefore, we have
\[
\begin{aligned}
&\|\psi_{k}(x,y)-xy\|_{W^{1,\infty}([a,b]^{2})}\\
&=\left\|(b-a)^{2}\widetilde{\psi}_{k}\left(\frac{x-a}{b-a},\frac{y-a}{b-a}\right)+a(x+y)-a^{2}-xy\right\|_{W^{1,\infty}([a,b]^{2})}\\
&\leq (b-a)^{2}\max\left\{1, \frac{1}{b-a}\right\}\cdot\frac{6}{k}.
\end{aligned}
\]
Furthermore, by Lemma \ref{NN_Com-con}, we get
\[\widetilde{\psi}_{k}\left(\frac{x-a}{b-a},\frac{y-a}{b-a}\right)\in \mathcal{NN}
\left(3k, 1, 18\cdot\max\left\{\frac{1+|a|}{b-a}, 1\right\}\right)\]
 and
\[
\sigma(ax+ay-a^{2}), \sigma(-ax-ay+a^{2})\in \mathcal{NN}(1,1,2|a|+a^{2}),
\]
and finally
\[
\psi_{k}(x,y)\in \mathcal{NN}\left(3k+2, 1, 18(b-a)^{2}\cdot\max\left\{\frac{1+|a|}{b-a}, 1\right\}+4|a|+2a^{2}\right).
\]
This completes the proof.
\end{proof}

To approximate general functions in Sobolev spaces, we first need to approximate monomials efficiently. We can then obtain the following lemma through induction.

\begin{lemma}\label{mulproductNNapp}
For any $d\in\NN_{\geq 2}$ and $k\in\NN$ with $k\geq 4^{\lceil \log_{2}d\rceil +1}$, there exists a neural network
$\phi\in  \mathcal{NN}(d(3k+2), \lceil \log_{2}d\rceil, 27^{\lceil \log_{2}d\rceil})$
such that
\[
\left\|x_{1}\cdots x_{d}-\phi(x)\right\|_{W^{1,\infty}\left([0,1]^{d}\right)}
\leq 6d^{2} k^{-1},\quad x=(x_{1},\ldots, x_{d})^{\mathsf{T}}\in [0,1]^{d}.
\]
\end{lemma}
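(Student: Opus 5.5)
We build $\phi$ by a binary tree of approximate multiplications, using Lemma \ref{RescaproductNNapp} at every node. Set $m=\lceil \log_2 d\rceil$. Pad the input to $2^m$ coordinates by appending constant ones; the constant $1$ can be produced without a hidden layer, since biases can be folded into the next affine map. At level $1$ we pair the coordinates and apply $\psi_k$ from Lemma \ref{productNNapp} on $[0,1]^2$. This gives at most $d$ product networks of width $3k$. At each later level $\ell=2,\dots,m$ we pair the outputs of level $\ell-1$ and apply a rescaled product network from Lemma \ref{RescaproductNNapp}. For this we need the level-$(\ell-1)$ outputs to lie in a fixed interval $[a,b]$ slightly larger than $[0,1]$, for instance $[-\eta_\ell,1+\eta_\ell]$ with $\eta_\ell$ of order $\ell\cdot 6/k$. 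The assumption $k\ge 4^{m+1}$ guarantees $\eta_\ell\le 1/4$ for all $\ell\le m$. The output layer of one level is merged with the input affine map of the next level, so the composite has depth $m$.

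The width is at most $d(3k+2)$, since each level uses at most $d/2$ blocks of width $3k+2$; the identity channels for unpaired factors are absorbed into the constant-one padding. For the path norm we use the composition rule Lemma \ref{NN_Com-con}. With $a=-1/4$ and $b=5/4$, the constant in Lemma \ref{RescaproductNNapp} is
\[
18\cdot\tfrac94\cdot\max\{\tfrac{5/4}{3/2},1\}+1+\tfrac18\le 42,
\]
which exceeds $27$. So the plan is to recount the norm per level directly rather than plug in that crude bound. The inner weights $\|(A,b)\|$ per layer are $O(1)$. The outer weights sum to about $2(b-a)^2\cdot 3\cdot\frac14$-type quantities. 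We need to show that the per-level multiplicative factor in the product $\|A_L\|\prod\max\{\|(A_\ell,b_\ell)\|,1\}$ is at most $27$. One natural option is to take the interval as $[0,1+\eta]$ and clip with $\min\{\max\{\cdot,0\},1\}$, which can be absorbed into a ReLU. The other is to bound $\|(A_\ell,b_\ell)\|\le 3/2$ and the outer norm by $18$.

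For the error, let $e_\ell$ be the $W^{1,\infty}$ error of a level-$\ell$ partial product $P$ (a product of $2^\ell$ variables) relative to its approximation $\widetilde P$. We write
\[
P_1P_2-\psi(\widetilde P_1,\widetilde P_2)=(P_1P_2-\widetilde P_1\widetilde P_2)+(\widetilde P_1\widetilde P_2-\psi(\widetilde P_1,\widetilde P_2)).
\]
The first term is bounded in $W^{1,\infty}$ by $C(e_{\ell-1}+\text{gradient terms})$ using the product rule. Here the partial products and their gradients are bounded by $1$ and by roughly $1+\eta$ and $2^{\ell}$ respectively. The second term is controlled by the chain rule: the gradient of the composed error equals $\nabla(\text{error})\cdot\nabla\widetilde P_i$, and $|\nabla\widetilde P_i|\lesssim 2^{\ell}$ on $[0,1]^d$, so it is bounded by $6(b-a)^2\max\{1,1/(b-a)\}k^{-1}\cdot 2^\ell$. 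Summing the recursion over $m$ levels gives $e_m\lesssim \sum_\ell 2^\ell\cdot 6/k\lesssim 6\,d\cdot 2/k$. Counting the number of partial derivatives gives at most $6d^2k^{-1}$.

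The main obstacle is the bookkeeping that keeps both constants within the stated bounds simultaneously. The path norm has to stay at $27^m$ rather than $42^m$. The derivative errors grow under composition through the factor $|\nabla \widetilde P|$, and they must still sum to at most $6d^2/k$.
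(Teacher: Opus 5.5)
Your architecture matches the paper's: a binary tree of product networks on $2^m$ inputs padded with ones, rescaled product networks from Lemma~\ref{RescaproductNNapp} on a slightly enlarged interval, and Lemma~\ref{NN_Com-con} for depth and path norm. But the content of the lemma is its explicit constants, and the proposal establishes neither of them.

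\textbf{The path norm $27^m$.} Your one concrete computation, with $[a,b]=[-1/4,5/4]$, gives about $41.6>27$, and both remedies are left unverified. The clipping remedy would actually fail, for two reasons.
\begin{itemize}
\item $\min\{\max\{t,0\},1\}=\sigma(t)-\sigma(t-1)$ costs an extra hidden layer per level, so the depth is no longer $m$.
\item Wherever $\widetilde P$ leaves $[0,1]$, the clipped function is locally constant. The target's partials there can be of size $1$ (e.g.\ $\partial_{x_1}(x_1\cdots x_{2^\ell})=x_2\cdots x_{2^\ell}$ near $x_1=0$), so you lose the $O(1/k)$ derivative bound.
\end{itemize}
The missing observation is that the hypothesis $k\ge 4^{m+1}$ already confines every intermediate output to $[-0.1,1.1]$. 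The level-$s$ error is at most $4^{s-1}\cdot 6k^{-1}$; it grows geometrically, not like $s\cdot 6/k$. For $s\le m-1$ this is at most $6/64<0.1$. With $a=-0.1$, $b=1.1$, Lemma~\ref{RescaproductNNapp} gives the constant $18\cdot 1.44+0.4+0.02\le 27$. So every node lies in $\mathcal{NN}(3k+2,1,27)$, and concatenation followed by composition gives $27^m$ directly, with no recounting.

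\textbf{The error bound $6d^2k^{-1}$.} The error bookkeeping is also not under control, in two directions.
\begin{itemize}
\item You overcount the chain-rule term. Here $\|\cdot\|_{W^{1,\infty}}$ is the maximum over individual first partials, and $\widetilde P_1,\widetilde P_2$ depend on disjoint variables. So $\partial_{x_h}$ of $E(\widetilde P_1,\widetilde P_2)$, with $E$ the product-network error, involves only one factor $|\partial_{x_h}\widetilde P_i|\le 1+e_{\ell-1}$. There is no $2^\ell$, and no factor arises from ``counting partial derivatives''.
\item You drop the real source of growth. The term $P_1P_2-\widetilde P_1\widetilde P_2$ contributes $e_{\ell-1}+(1+e_{\ell-1})e_{\ell-1}\approx 2e_{\ell-1}$. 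Hence the error roughly doubles at each level and cannot be summed as $\sum_\ell 2^\ell\cdot 6/k$ up to an unspecified constant.
\end{itemize}
The paper closes this with an explicit induction $e_\ell\le 4^{\ell-1}\cdot 6k^{-1}$. Each step uses $e_{\ell-1}<0.1$ to absorb the amplification $(2+e_{\ell-1})e_{\ell-1}$ and the new product error into the factor $4$. Then $2^{m-1}<d$ gives $4^{m-1}\cdot 6k^{-1}\le 6d^2k^{-1}$.

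Your choice of Lemma~\ref{productNNapp} directly on $[0,1]^2$ at the first level is fine: it has norm $18\le 27$ and gives the base case $e_1\le 6/k$ exactly.
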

\begin{proof}
We first consider the case $d=2^m$ for some $m\in\mathbb{N}$. Let $\phi_1$ be the neural network from Lemma \ref{RescaproductNNapp}. For any $x_i\in[0,1]$, $i=1,\dots,2^m$, we define $\phi_m$ via recursive construction by
\begin{equation}\label{induNN}
\phi_m(x_1,\dots,x_{2^m}) = \phi_1\bigl(\phi_{m-1}(x_1,\dots,x_{2^{m-1}}),\ \phi_{m-1}(x_{2^{m-1}+1},\dots,x_{2^m})\bigr),
\end{equation}
where $\phi_0(t)=t$ for $t\in[0,1]$.

Now, we inductively show that
$\phi_{m}$ satisfies
\begin{equation}\label{inductproduct}
\left\|x_{1}\cdots x_{2^{m}}-\phi_{m}(x_{1},\ldots, x_{2^{m}})\right\|_{W^{1,\infty}\left([0,1]^{2^{m}}\right)}
\leq 4^{m-1}\cdot 6k^{-1}.
\end{equation}
It is obvious that the assertion is true for $m=1$ by construction and Lemma \ref{RescaproductNNapp}. Assume that the assertion
is true for some $m-1\in\NN$. We will prove that it is true for $m$.
By the induction hypothesis (\ref{inductproduct}),
in particular, we have
\[
|x_1\cdots x_{2^{m-1}} - \phi_{m-1}(x_{1},\ldots, x_{2^{m-1}})| \le 4^{m-2}\cdot 6k^{-1},
\]
and for each $h=1,\dots,2^{m-1}$,
\[
\left|\prod_{i=1, i\ne h}^{2^{m-1}}x_i - \frac{\partial \phi_{m-1}(x_{1},\ldots, x_{2^{m-1}})}{\partial x_h}\right| \le 4^{m-2}\cdot 6k^{-1}.
\]
Since $0\le x_1\cdots x_{2^{m-1}}\le 1$ and
$0\le \prod_{i=1, i\ne h}^{2^{m-1}} x_i\leq 1$,
it follows that
\begin{equation}\label{inductproduresu1}
\phi_{m-1}(x_{1},\ldots, x_{2^{m-1}})
\in [-4^{m-2}\cdot 6k^{-1},\, 1+4^{m-2}\cdot 6k^{-1}]
\end{equation}
and
\begin{equation}\label{inductproduresu2}
\frac{\partial \phi_{m-1}(x_{1},\ldots, x_{2^{m-1}})}{\partial x_{h}}
\in [-4^{m-2}\cdot 6k^{-1}, 1+4^{m-2}\cdot 6k^{-1}], \quad h=1,\dots,2^{m-1}.
\end{equation}
Similarly, applying the same estimates from the induction hypothesis (\ref{inductproduct}) to the variables $x_{2^{m-1}+1},\dots,x_{2^{m}}$ yields,
\begin{equation}\label{inductproduresu11}
\phi_{m-1}(x_{2^{m-1}+1},\ldots, x_{2^{m}})
\in [-4^{m-2}\cdot 6k^{-1},\, 1+4^{m-2}\cdot 6k^{-1}].
\end{equation}
By  the construction of $\phi_{m}$, we have
\[
\begin{aligned}
&\left\|x_{1}\cdots x_{2^{m}}-\phi_{m}(x_{1},\ldots, x_{2^{m}})\right\|_{W^{1,\infty}\left([0,1]^{2^{m}}\right)}\\
&=\left\|x_{1}\cdots x_{2^{m-1}}\cdot x_{2^{m-1}+1}\cdots x_{2^{m}}\right.\\
&\qquad\qquad\qquad\left.-\phi_{1}(\phi_{m-1}(x_{1},\ldots, x_{2^{m-1}}), \phi_{m-1}(x_{2^{m-1}+1},\ldots, x_{2^{m}}))\right\|_{W^{1,\infty}\left([0,1]^{2^{m}}\right)}\\
&\leq\left\|x_{1}\cdots x_{2^{m-1}}\cdot (x_{2^{m-1}+1}\cdots x_{2^{m}})-\phi_{m-1}(x_{1},\ldots, x_{2^{m-1}})\cdot (x_{2^{m-1}+1}\cdots x_{2^{m}})\right\|_{W^{1,\infty}\left([0,1]^{2^{m}}\right)}\\
&\qquad+\left\|\phi_{m-1}(x_{1},\ldots, x_{2^{m-1}})\cdot (x_{2^{m-1}+1}\cdots x_{2^{m}})\right.\\
&\qquad\qquad\qquad\left.-\phi_{m-1}(x_{1},\ldots, x_{2^{m-1}})\cdot \phi_{m-1}(x_{2^{m-1}+1},\cdots, x_{2^{m}})\right\|_{W^{1,\infty}\left([0,1]^{2^{m}}\right)}\\
&\qquad+\left\|\phi_{m-1}(x_{1},\ldots, x_{2^{m-1}})\cdot \phi_{m-1}(x_{2^{m-1}+1},\ldots, x_{2^{m}})\right.\\
&\qquad\qquad\qquad\left.-\phi_{1}(\phi_{m-1}(x_{1},\ldots, x_{2^{m-1}}), \phi_{m-1}(x_{2^{m-1}+1},\ldots, x_{2^{m}}))\right\|_{W^{1,\infty}\left([0,1]^{2^{m}}\right)}\\
& \leq 4^{m-2}\cdot 6 k^{-1}+(1+4^{m-2}\cdot 6 k^{-1})\cdot 4^{m-2}\cdot 6 k^{-1}
+(1+2\cdot 4^{m-2}\cdot 6 k^{-1})^{2}\cdot 6 k^{-1}\\
&\leq  4^{m-1}\cdot 6 k^{-1}.
\end{aligned}
\]
where the second step follows from the triangle inequality, the third step uses the induction hypothesis (\ref{inductproduct})
and Lemma \ref{RescaproductNNapp} together with  (\ref{inductproduresu1}), (\ref{inductproduresu2}) and (\ref{inductproduresu11}), and the last step follows by elementary algebra when $k\geq 4^{\lceil \log_{2}d\rceil +1}= 4^{m+1}$, since then $4^{m-2}\cdot 6k^{-1}\le 6/64< 0.1$.
Hence, the assertion is true for $m$.

Since $k\geq  4^{m+1}$, for any $1\le s\le m-1$, we have
\[
4^{s-1}\cdot 6k^{-1} \le 4^{m-2}\cdot 6k^{-1} \le \frac{6}{64} < 0.1.
\]
For recursive construction of $\phi_m$ in (\ref{induNN}),
the outputs of each approximation stage $\phi_{s}$ become the inputs for the next stage.
By the induction hypothesis (which holds for every  recursion level $s\le m-1$), it follows that for all
$x_i\in[0,1]$, $i=1,\ldots, 2^{s+1}$,
\[
\phi_s(x_1,\dots,x_{2^s})\in [-4^{s-1}\cdot 6 k^{-1}, 1+4^{s-1}\cdot 6 k^{-1}]\subseteq [-0.1, 1.1]
\]
and
\[
\phi_s(x_{2^s+1},\dots,x_{2^{s+1}}) \in [-4^{s-1}\cdot 6 k^{-1}, 1+4^{s-1}\cdot 6 k^{-1}]\subseteq [-0.1, 1.1].
\]
Let $a=-0.1$ and $b=1.1$ in Lemma \ref{RescaproductNNapp}.
Thus, by Lemma \ref{RescaproductNNapp},
for any $x_1,x_2\in[-0.1,1.1]$, we have
\[
\phi_1(x_1,x_2)\in \mathcal{NN}(3k+2, 1, 27).
\]
Finally, by the recursive construction of $\phi_m$ and repeated application of  composition and concatenation in Lemma \ref{NN_Com-con}, we obtain
\[
\phi_m \in \mathcal{NN}\bigl(2^{m-1}(3k+2), m, 27^m\bigr).
\]

For general $d\geq 2$, we choose $m=\lceil \log_{2}d\rceil$, then $2^{m-1}<d\leq 2^{m}$. We define the target function $\phi$ by
\[
\phi(x):=\phi_{m}\left(
\begin{pmatrix}
\boldsymbol{I}_{d} \\
\boldsymbol{0}_{(2^{m}-d)\times d}
\end{pmatrix}x
+\begin{pmatrix}
\boldsymbol{0}_{d\times 1} \\
\boldsymbol{1}_{(2^{m}-d)\times 1}
\end{pmatrix}\right),
\]
where $\boldsymbol{I}_{d}$ is the $d\times d$ identity matrix, $\boldsymbol{0}_{p\times q}$ is the $p\times q$ zero matrix, and $\boldsymbol{1}_{(2^{m}-d)\times 1}$ is the all-ones vector. In this case, by Lemma  \ref{NN_Com-con},
$\phi\in \mathcal{NN}(2^{m-1}\cdot(3k+2), m, 27^{m})\subseteq  \mathcal{NN}\left(d(3k+2), \lceil \log_{2}d\rceil, 27^{\lceil \log_{2}d\rceil}\right)$.
Furthermore, the approximation error is
\[
\left\|x_{1}\cdots x_{d}-\phi(x)\right\|_{W^{1,\infty}\left([0,1]^{d}\right)}=\left\|x_{1}\cdots x_{2^{m}}-\phi(x)\right\|_{W^{1,\infty}\left([0,1]^{d}\right)}
\leq 4^{m-1}\cdot6 k^{-1}\leq 6d^{2} k^{-1},
\]
where the first step identifies $x_{d+1},\dots,x_{2^m}$ with $1$ in the product, consistent with the definition of $\phi$.
This completes the proof.
\end{proof}

In Lemma \ref{mulproductNNapp}, we construct neural networks to approximate monomials.
For the bump functions $\Psi_m$ as in Definition \ref{PU},
we now construct a neural network $\phi_{m,\alpha}$
to approximate $\Psi_{m}x^{\alpha}$.

\begin{lemma}\label{PUproductNNapp}
Let  $d, N,k\in\NN,$ $\alpha\in \NN_{0}^{d}$ with
$k\geq 4^{\lceil \log_{2}\left(d+\|\alpha\|_{1}\right)\rceil +1}$, and let
$\{\Psi_{m}: m\in\{0,\ldots, N\}^{d}\}$
be the partition of unity
as in Definition \ref{PU}.
Then  there exists a neural network
$
\phi_{m,\alpha}\in\mathcal{NN}((d+\|\alpha\|_{1})(3k+2), \lceil \log_{2}(d+\|\alpha\|_{1})\rceil+1, 60\cdot 27^{\lceil \log_{2}(d+\|\alpha\|_{1})\rceil}N)
$
such that
\[
\left\|\Psi_{m}x^{\alpha}-\phi_{m,\alpha}\right\|_{W^{1,\infty}\left([0,1]^{d}\right)}
\leq CN k^{-1},
\]
for all $m\in\{0,\ldots, N\}^{d}$,
where $C=C(d,\|\alpha\|_1)>0$ is a constant.
\end{lemma}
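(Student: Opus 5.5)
The plan is to write $\Psi_m x^\alpha$ as a product of $d+\|\alpha\|_1$ univariate factors, each realized exactly by a one-hidden-layer ReLU network with values in $[0,1]$, and then feed these factors into the multiplication network of Lemma \ref{mulproductNNapp}. I assume Definition \ref{PU} is the usual tensor-product partition of unity: $\Psi_m(x)=\prod_{i=1}^d \psi\bigl(3N(x_i-m_i/N)\bigr)$, where $\psi$ is the trapezoid equal to $1$ on $[-1,1]$, $0$ outside $(-2,2)$, and linear in between. Then
\[
\Psi_m x^\alpha=\prod_{i=1}^d \psi_{m_i}(x_i)\cdot\prod_{i=1}^d x_i^{\alpha_i},
\]
which is a product of $D:=d+\|\alpha\|_1$ factors, all taking values in $[0,1]$ on $[0,1]^d$.

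\textbf{First layer.} This layer computes all $D$ factors exactly.
\begin{itemize}
\item Each trapezoid is a combination of four ReLUs: $\psi(t)=\sigma(t+2)-\sigma(t+1)-\sigma(t-1)+\sigma(t-2)$.
\item With $t=3N(x_i-m_i/N)$, the inner weights are of size $O(N)$ and the output weights are $\pm1$. Each such unit has path norm $O(N)$.
\item Each coordinate factor $x_i$ is reproduced as $\sigma(x_i)$, with norm $1$.
\end{itemize}
This yields a map $x\mapsto y\in[0,1]^D$. Rather than building a separate layer, I merge its output matrix into the first affine map of the product network $\phi$ from Lemma \ref{mulproductNNapp} (applied in dimension $D$) via Lemma \ref{NN_Com-con}. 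The resulting depth is $\lceil\log_2 D\rceil+1$, and the width is dominated by $D(3k+2)$; the $4d$ trapezoid neurons are absorbed, possibly after adjusting constants, or noting that $3k+2\ge 4$ per factor. The path norm multiplies: the first-layer row norm $\max\{\|(A_0,b_0)\|,1\}\lesssim 3N\cdot(1+2)$-type bounds give roughly $12N$, possibly rescaled. The product network has norm at most $27^{\lceil\log_2 D\rceil}$, but composing with the $\pm1$ output combination of the trapezoids costs an extra factor up to $4$, which is where a constant like $60$ arises. I would track this carefully with Lemma \ref{NN_Com-con}, possibly using $\psi(t)=\sigma(t+2)-\sigma(t+1)-\sigma(t-1)+\sigma(t-2)$ scaled so that inner weights of order $3N$ appear only once.

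\textbf{Error estimate.} Set $\phi_{m,\alpha}(x)=\phi(y(x))$ and $P(y)=y_1\cdots y_D$. Since $y(x)\in[0,1]^D$ for every $x\in[0,1]^d$, Lemma \ref{mulproductNNapp} (valid because $k\ge 4^{\lceil\log_2 D\rceil+1}$) gives
\[
|P(y(x))-\phi(y(x))|\le 6D^2k^{-1}.
\]
For the gradient, the chain rule gives
\[
\nabla_x(P-\phi)(y(x))=\bigl(\nabla_y(P-\phi)\bigr)(y(x))\, Dy(x).
\]
The entries of $Dy$ are bounded by $3N$ for trapezoid factors and by $1$ for coordinate factors, and each row of $Dy$ has at most $D$ nonzero entries. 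Hence
\[
|\partial_{x_j}(\Psi_m x^\alpha-\phi_{m,\alpha})|\le D\cdot 3N\cdot 6D^2k^{-1},
\]
so the total error is at most $CNk^{-1}$ with $C=C(d,\|\alpha\|_1)$.

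\textbf{Main obstacle.} The chain rule for Lipschitz (piecewise-linear) maps is legitimate almost everywhere. However, $y(x)$ may map sets of positive measure into the boundary of $[0,1]^D$ or into lower-dimensional sets; for example, the trapezoids are constant on intervals. On such sets, an a.e. gradient bound for $P-\phi$ on $[0,1]^D$ does not automatically transfer. I would handle this by observing that $\phi$ and $P$ are piecewise polynomial, hence their gradients are bounded everywhere by the $W^{1,\infty}$ bound via one-sided limits. Alternatively, I would use the bound $\|g\circ y\|_{W^{1,\infty}}\le \|g\|_{W^{1,\infty}}\max(1,\|Dy\|)$ for Lipschitz $g$, using that $W^{1,\infty}$ on a convex set is equivalent to Lipschitz with constant equal to the gradient bound. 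This Lipschitz route avoids the measure-zero issue entirely. The other delicate point is the bookkeeping of the norm constant $60\cdot27^{\lceil\log_2 D\rceil}N$.
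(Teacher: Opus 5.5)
Your proposal is correct and follows the paper's proof essentially step for step: you realize the $d$ trapezoid factors $\psi(3N(x_l-m_l/N))$ and the $\|\alpha\|_1$ coordinate factors exactly in one ReLU layer, compose with the product network of Lemma \ref{mulproductNNapp} in dimension $d+\|\alpha\|_1$ via Lemma \ref{NN_Com-con}, and bound the error using the Lipschitz form of the chain rule (your second option, which is exactly Lemma \ref{chainandproduct rule}) together with $|\psi_{m,l}|_{W^{1,\infty}}\le 3N$. Two small loose ends remain: the degenerate case $d=1$, $\alpha=0$ is not covered by Lemma \ref{mulproductNNapp}, which needs at least two factors, and is handled by taking $\phi_{m,0}=\psi(3N(x-m/N))$ itself; and the norm constant closes as in the paper from $\psi\in\mathcal{NN}(4,1,10)$ and the affine map $x\mapsto 3Nx_l-3m_l$ of norm at most $6N$ (the bias $3m_l$ contributes up to $3N$, which your rough count omits), giving $60N\cdot 27^{\lceil\log_2(d+\|\alpha\|_1)\rceil}$.
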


\begin{proof}
From Definition \ref{PU}, we have the bump functions $\psi$ and $\Psi_m$.
Let $\psi_{m,l}:=\psi\left(3N\left(x_{l}-\frac{m_{l}}{N}\right)\right)$.
Then we have $0\leq\psi_{m,l}\leq 1$ for all $m\in\{0,\ldots, N\}^{d}$ and $x\in \RR^{d}$,
and hence $\Psi_{m}(x)x^{\alpha}=\prod_{l=1}^{d}\psi_{m,l}x^{\alpha}$.
Since $\psi(x)=\sigma(x+2)-\sigma(x+1)-\sigma(x-1)+\sigma(x-2)$ and $x=\sigma(x)-\sigma(-x)$ for any $x\in\RR$,
then $\psi\in\mathcal{NN}(4,1,10)$ and $x\in\mathcal{NN}(2,1,2)$.
By Lemma \ref{NN_Com-con}, we have $\psi_{m,l}\in \mathcal{NN}(4,1,60N)$ for all $m\in\{0,\ldots, N\}^{d}$ and $l=1,\ldots, d$.

Let $D := d+\|\alpha\|_1$ and let $\Phi_{D}\in  \mathcal{NN}(D(3k+2), \lceil \log_{2}D\rceil, 27^{\lceil \log_{2}D\rceil})$
be the $D$-product network constructed in Lemma \ref{mulproductNNapp}.
We define
\[
\phi_{m,\alpha}(x) := \Phi_{d+\|\alpha\|_1}\Bigl(
\psi_{m,1}, \ldots, \psi_{m,d},
\underbrace{x_1, \ldots, x_1}_{\alpha_1 \text{ times}},
\ldots,
\underbrace{x_d, \ldots, x_d}_{\alpha_d \text{ times}}
\Bigr),
\]
where the term $x_{l}$ appears in the input only when $\alpha_{l}\neq 0$ and it repeats  $\alpha_{l}$ times for $l=1,\ldots,d$.
(When $d=1$ and $\alpha=0$, we simply take
$\phi_{m,0}(x)=\psi(3N(x-N^{-1}m))$.)
By Lemma \ref{NN_Com-con},
we have
\[
\phi_{m,\alpha}\in\mathcal{NN}((d+\|\alpha\|_{1})(3k+2), \lceil \log_{2}(d+\|\alpha\|_{1})\rceil+1, 60\cdot 27^{\lceil \log_{2}(d+\|\alpha\|_{1})\rceil}N).
\]
For notational convenience, let
\[
P(y) := \prod_{i=1}^{d+\|\alpha\|_1} y_i, \quad\text{for }
 y=(y_1,\dots,y_{d+\|\alpha\|_1})
\]
and
\[
\psi_{m}^{\alpha}(x) :=\Bigl(
\psi_{m,1}, \ldots, \psi_{m,d},
\underbrace{x_1, \ldots, x_1}_{\alpha_1 \text{ times}},
\ldots,
\underbrace{x_d, \ldots, x_d}_{\alpha_d \text{ times}}
\Bigr).
\]
Applying the chain rule from Lemma \ref{chainandproduct rule} together with Lemma \ref{mulproductNNapp}, the approximation error satisfies
\[
\begin{aligned}
&\left\|\Psi_{m}(x)x^{\alpha}-\phi_{m,\alpha}(x)\right\|_{W^{1,\infty}([0,1]^{d})}\\
&\quad =\left\| P \circ \psi_{m}^{\alpha}(x) - \Phi_{d+\|\alpha\|_{1}} \circ  \psi_{m}^{\alpha}(x) \right\|_{W^{1,\infty}([0,1]^{d})} \\
&\quad= \left\| \left(P -\Phi_{d+\|\alpha\|_{1}}\right) \circ \psi_{m}^{\alpha}(x) \right\|_{W^{1,\infty}([0,1]^{d})} \\
&\quad\leq C \max \left\{ \|P - \Phi_{d+\|\alpha\|_{1}}\|_{L^\infty([0,1]^{d+\|\alpha\|_{1}})},\right.\\
&\qquad\qquad\qquad\left. |P - \Phi_{d+\|\alpha\|_{1}}|_{W^{1,\infty}([0,1]^{d+\|\alpha\|_{1}})} \cdot |\psi_{m}^{\alpha}|_{W^{1,\infty}([0,1]^{d}; [0,1]^{d+\|\alpha\|_{1}})} \right\} \\
&\quad\leq CNk^{-1}.
\end{aligned}
\]
where
the third step uses the fact that $|\psi_{m}^{\alpha}|_{W^{1,\infty}([0,1]^{d}; [0,1]^{d+\|\alpha\|_{1}})}\leq 3N$ since $|\psi_{m,l}|_{W^{1,\infty}([0,1])}
\leq 3N$ for all $m\in\{0,\ldots, N\}^{d}$ and $l=1,\ldots,d$, and
$C=C(d, \|\alpha\|_1)>0$ is a constant.
This completes the proof.
\end{proof}

To construct a local approximation of Sobolev functions using a  partition of unity and polynomials, we present the following lemma, which serves as the foundation for deep  neural network approximation developed in this paper. This lemma follows from \citep[Lemma C.4]{guhring2020error}, where a detailed proof is provided.

\begin{lemma}\label{Le:FuncApprotlocapolyno}
Let $d,N\in\NN, n\in\NN_{\geq 2}$,
$1\leq p\leq\infty$, and let
$\{\Psi_{m}: m\in\{0,\ldots, N\}^{d}\}$
be the partition of unity as in Definition \ref{PU}. Then there is a constant $C=C(d,n,p)>0$ such that the following holds:
For any $f\in W^{n,p}((0,1)^{d})$, there exists a function $f_{m,\alpha}$ defined as
\begin{equation}\label{localpoly}
f_{m,\alpha} :=\sum_{m\in\{0,\ldots, N\}^{d}}\sum_{\|\alpha\|_{1}\leq n-1}c_{f,m,\alpha}\Psi_{m}x^{\alpha}
\end{equation}
such that
$$\left\|f-f_{m,\alpha}\right\|_{W^{1,p}((0,1)^{d})}\leq C\left(\frac{1}{N}\right)^{n-1}\|f\|_{W^{n,p}((0,1)^{d})}.$$
In addition, the polynomial coefficients satisfy
$$|c_{f, m,\alpha}|\leq C \|\widetilde{f}\|_{W^{n,p}(\Omega_{m,N})}N^{d/p},$$
for all $\alpha\in\NN_{0}^{d}$ with $\|\alpha\|_{1}
\leq n-1$ and $m\in\{0,\ldots,N\}^{d}$, where
$\Omega_{m,N}:=B_{\frac{1}{N},\|\cdot\|_{\infty}}\left(\frac{m}{N}\right)$ and $\widetilde{f}\in  W^{n,p}(\RR^{d})$
is an extension of $f$.
\end{lemma}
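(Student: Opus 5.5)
The proof follows the standard Bramble--Hilbert / averaged Taylor polynomial argument with a partition of unity, as in \citep[Lemma C.4]{guhring2020error}. First I extend $f$ to $\widetilde{f}=Ef\in W^{n,p}(\RR^d)$ via Remark \ref{Rem: extentheorem}, so that $\|\widetilde f\|_{W^{n,p}(\RR^d)}\le C\|f\|_{W^{n,p}((0,1)^d)}$. For each $m\in\{0,\ldots,N\}^d$ I let $p_m$ be the averaged Taylor polynomial of degree $n-1$ of $\widetilde f$ over a ball $B_m$ of radius $\asymp 1/N$ centred at $m/N$. The ball is chosen so that $\Omega_{m,N}$ is star-shaped with respect to $B_m$, with a chunkiness parameter that is uniform in $m$ and $N$. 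Expanding $p_m$ in monomials gives $p_m(x)=\sum_{\|\alpha\|_1\le n-1}c_{f,m,\alpha}x^\alpha$, and I set $f_{m,\alpha}:=\sum_m\Psi_m p_m$.

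\textbf{Coefficient bound.} The coefficients of the averaged Taylor polynomial are averages over $B_m$ of $D^\beta\widetilde f$, $\|\beta\|_1\le n-1$, multiplied by powers of the centre $m/N$ and by mollifier terms. The centres lie in $[0,1]^d$ and the mollifier is normalized to the ball of radius $\asymp 1/N$. Hölder's inequality then gives each average as at most $|B_m|^{-1/p}\|D^\beta\widetilde f\|_{L^p(B_m)}\lesssim N^{d/p}\|\widetilde f\|_{W^{n,p}(\Omega_{m,N})}$. I have to be careful that the scaling of the mollifier's derivatives does not introduce extra powers of $N$. If the polynomial is written in the form $\sum_\beta \frac{1}{\beta!}\int D^\beta\widetilde f(y)(x-y)^\beta\varphi_m(y)\,dy$ with $\varphi_m\ge0$ of unit mass, then only $\int\varphi_m=1$ and $|y|\le C$ are used, and the bound holds.

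\textbf{Local error via Bramble--Hilbert.} For $k\in\{0,1\}$, the Bramble--Hilbert lemma gives
\[
|\widetilde f-p_m|_{W^{k,p}(\Omega_{m,N})}\le C N^{-(n-k)}|\widetilde f|_{W^{n,p}(\Omega_{m,N})},
\]
with $C$ independent of $m$ and $N$ by a scaling argument.

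\textbf{Global assembly.} Using $\sum_m\Psi_m=1$ on $(0,1)^d$, I write $f-f_{m,\alpha}=\sum_m\Psi_m(\widetilde f-p_m)$. For the first derivative, the product rule gives
\[
D(\Psi_m(\widetilde f-p_m))=(D\Psi_m)(\widetilde f-p_m)+\Psi_m D(\widetilde f-p_m).
\]
By Definition \ref{PU}, $\|\Psi_m\|_{L^\infty}\le1$, $\|D\Psi_m\|_{L^\infty}\lesssim N$, and $\operatorname{supp}\Psi_m\subset\Omega_{m,N}$. The local $W^{1,p}$ error is therefore at most $C(N\cdot N^{-n}+N^{-(n-1)})\|\widetilde f\|_{W^{n,p}(\Omega_{m,N})}\lesssim N^{-(n-1)}\|\widetilde f\|_{W^{n,p}(\Omega_{m,N})}$.

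To sum these local errors I use the bounded overlap of the sets $\Omega_{m,N}$: each point lies in at most $2^d$ of them. For $p<\infty$, on each point only $\le 2^d$ terms are nonzero, so $|\sum_m g_m|^p\le 2^{d(p-1)}\sum_m|g_m|^p$. Hence
\[
\|\cdot\|^p_{W^{1,p}}\lesssim N^{-(n-1)p}\sum_m\|\widetilde f\|^p_{W^{n,p}(\Omega_{m,N})}\lesssim N^{-(n-1)p}\|\widetilde f\|^p_{W^{n,p}(\RR^d)}.
\]
For $p=\infty$ one takes a maximum instead. Combined with the extension bound, this gives the result.

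\textbf{Main obstacle.} The delicate step is the uniformity of the Bramble--Hilbert constant and of the coefficient bound in $m$ and $N$. I would handle this by rescaling $\Omega_{m,N}$ to a reference cube $(-1,1)^d$ via $x\mapsto N(x-m/N)$, applying the fixed-domain Bramble--Hilbert lemma there, and tracking the Jacobian factors $N^{-d/p}$ and $N^{-k}$. An alternative is simply to invoke \citep[Lemma C.4]{guhring2020error} directly, as the paper indicates.
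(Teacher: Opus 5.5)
Your proposal is correct and is essentially the argument behind \citep[Lemma C.4]{guhring2020error}, which the paper cites instead of giving a proof. That argument uses averaged Taylor polynomials of $\widetilde f$ over balls of radius $\asymp 1/N$, a Bramble--Hilbert estimate on each $\Omega_{m,N}$ made uniform by scaling, the product rule with $\|\nabla\Psi_m\|_{L^\infty}\lesssim N$, and the bounded overlap of the $\Omega_{m,N}$ to sum the local errors.

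One small correction to your coefficient bound: you need $\|\varphi_m\|_{L^\infty}\lesssim N^{d}$ (i.e.\ $\varphi_m$ a rescaled fixed bump), not only $\int\varphi_m=1$ and $|y|\le C$. Unit mass alone would bound the integrals by $\sup_{B_m}|D^\beta\widetilde f|$ rather than by $N^{d/p}\|D^\beta\widetilde f\|_{L^p(B_m)}$.
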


Now, we can construct a  neural network $\phi$  to approximate $f_{m,\alpha}$.

\begin{lemma}\label{poltNNapp}
Let  $d, N,k\in\NN, n\in\NN_{\geq 2}$, $1\leq p\leq\infty$, with
$k\geq 4^{\lceil \log_{2}\left(d+n-1\right)\rceil +1}$.
Then there is a constant $C=C(d,n,p)>0$ such that the following holds:
For any $f\in \mathcal{F}_{d,n,p}$, let $f_{m,\alpha}$ be defined as in (\ref{localpoly}). Then there exists a neural network
$\phi\in\mathcal{NN}(W,L,K)$ with
\[
\begin{aligned}
W&=C(N+1)^{d}(3k+2),\\
L&=\lceil \log_{2}(d+n-1)\rceil+1,\\
K&=CN^{d/p+1}(N+1)^{d},
\end{aligned}
\]
such that
 \[
 \left\|f_{m,\alpha}-\phi\right\|_{W^{1,p}((0,1)^{d})}
 \leq CN^{d+1}k^{-1}.
\]
\end{lemma}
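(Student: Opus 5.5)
The network $\phi$ will be built by replacing each product $\Psi_m x^\alpha$ in (\ref{localpoly}) with the network $\phi_{m,\alpha}$ from Lemma~\ref{PUproductNNapp}, keeping the coefficients:
\[
\phi:=\sum_{m\in\{0,\ldots,N\}^d}\sum_{\|\alpha\|_1\le n-1}c_{f,m,\alpha}\,\phi_{m,\alpha}.
\]
The proof then has two parts: check that $\phi$ is admissible in $\mathcal{NN}(W,L,K)$, and bound the error by the triangle inequality.

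\textbf{Architecture.} For every $\|\alpha\|_1\le n-1$ we have $d+\|\alpha\|_1\le d+n-1$, so the hypothesis $k\ge 4^{\lceil\log_2(d+n-1)\rceil+1}$ is enough to apply Lemma~\ref{PUproductNNapp} to each $\alpha$. Each $\phi_{m,\alpha}$ has the following size:
\begin{itemize}
\item width at most $(d+n-1)(3k+2)$;
\item depth at most $\lceil\log_2(d+n-1)\rceil+1$;
\item path norm at most $60\cdot 27^{\lceil\log_2(d+n-1)\rceil}N=:C_0N$.
\end{itemize}
Networks of smaller depth are padded to the common depth $L$ with identity layers realized as $\sigma(t)$. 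The inputs lie in $[0,1]^d$, and the intermediate values, such as the outputs $\psi_{m,l}\ge 0$, are nonnegative or can be shifted to be so; hence this padding does not change the norm factor, which stays $\max\{1,\cdot\}=1$.

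We then take the parallel sum of these subnetworks using the concatenation and linear-combination rules of Lemma~\ref{NN_Com-con}. The number of summands is $(N+1)^d\binom{d+n-1}{d}$. This gives width at most $C(N+1)^d(3k+2)$. Because the operator norm of the stacked block matrices is the maximum row sum, the path norm is controlled by
\[
\sum_{m,\alpha}|c_{f,m,\alpha}|\,C_0N .
\]
Lemma~\ref{Le:FuncApprotlocapolyno} gives $|c_{f,m,\alpha}|\le C\|\widetilde f\|_{W^{n,p}(\Omega_{m,N})}N^{d/p}$. Since $f\in\mathcal F_{d,n,p}$, the extension bound (\ref{eq:extentheorem}) gives $\|\widetilde f\|_{W^{n,p}(\Omega_{m,N})}\le\|\widetilde f\|_{W^{n,p}(\RR^d)}\le C$. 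Summing over at most $C(N+1)^d$ pairs $(m,\alpha)$ yields
\[
K\le C N^{d/p+1}(N+1)^d .
\]

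\textbf{Error.} Using $W^{1,\infty}\hookrightarrow W^{1,p}$ on the bounded cube and the triangle inequality,
\[
\|f_{m,\alpha}-\phi\|_{W^{1,p}((0,1)^d)}\le C\sum_{m,\alpha}|c_{f,m,\alpha}|\,\|\Psi_mx^\alpha-\phi_{m,\alpha}\|_{W^{1,\infty}([0,1]^d)}.
\]
Lemma~\ref{PUproductNNapp} bounds each factor $\|\Psi_mx^\alpha-\phi_{m,\alpha}\|_{W^{1,\infty}}$ by $CNk^{-1}$. The coefficient bound gives $|c_{f,m,\alpha}|\le CN^{d/p}$, and there are $C(N+1)^d\le CN^d$ pairs. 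This produces $CN^{d+d/p+1}k^{-1}$, which is worse than the claimed $CN^{d+1}k^{-1}$.

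To remove the extra $N^{d/p}$ I would use locality instead of summing absolute values. The functions $\Psi_m x^\alpha$ are supported in $\Omega_{m,N}$. The functions $\phi_{m,\alpha}$ also vanish outside $\Omega_{m,N}$ up to the error of the product network, because one of the inputs $\psi_{m,l}$ is then $0$. So at each point only $O(3^d)$ indices are active. Using $\|\cdot\|_{L^p}\le\|\cdot\|_{L^\infty}$ on the unit cube, it then suffices to bound
\[
\sup_x\sum_{m,\alpha}|c_{f,m,\alpha}|\,\bigl|D^{\beta}(\Psi_mx^\alpha-\phi_{m,\alpha})(x)\bigr| .
\]
This requires $\phi_{m,\alpha}$ to be exactly $0$ whenever some input $\psi_{m,l}$ vanishes, i.e.\ that the product network maps $0\cdot y$ to $0$. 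That property may fail for the construction in Lemma~\ref{mulproductNNapp}, and it is the main obstacle.

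If locality cannot be secured, I would instead accept the crude bound $CN^{d+d/p+1}k^{-1}$ and absorb the extra $N^{d/p}$ factor when choosing $k$ later, since $k$ is free. Failing that, I would appeal to Lemma~\ref{Le:FuncApprotlocapolyno} for the sharper $\ell^p$ control $\bigl(\sum_m\|\widetilde f\|^p_{W^{n,p}(\Omega_{m,N})}\bigr)^{1/p}\le C$, which lets the coefficients be summed with the $N^{-d/p}$ volume factor.
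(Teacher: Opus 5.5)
Your construction of $\phi$, the depth padding and the path-norm count agree with the paper. The error estimate you actually carry out, however, does not prove the lemma. Once you replace $|c_{f,m,\alpha}|$ by the uniform bound $CN^{d/p}$ and then count $(N+1)^d$ cells, you get $CN^{d+d/p+1}k^{-1}$, which for $p<\infty$ misses the claim by $N^{d/p}$.

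Two of your three remedies do not close this gap.
\begin{enumerate}
\item The locality route fails for the network of Lemma~\ref{mulproductNNapp}. Its two-factor block is $\phi_1(x,y)=1.44\,\widetilde\psi_k\bigl(\tfrac{x+0.1}{1.2},\tfrac{y+0.1}{1.2}\bigr)-0.1(x+y)-0.01$, so an input $x=0$ is sent to $1/12$. Hence $\phi_1(0,y)=1.44\bigl(\widetilde\psi_k(\tfrac1{12},\widetilde y)-\tfrac{\widetilde y}{12}\bigr)$, which is just the error of $\widetilde\psi_k$ and is generally nonzero. So $\phi_{m,\alpha}$ does not vanish outside $\Omega_{m,N}$.
\item Absorbing $N^{d/p}$ into $k$ does not prove the lemma as stated. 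In Theorem~\ref{upperboundapp} it would force $k\asymp N^{d+d/p+n}$, and hence the width condition $W\gtrsim K^{(2d+d/p+n)/(d+d/p+1)}$ instead of $K^{(2d+n)/(d+d/p+1)}$.
\end{enumerate}

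Your last remedy is exactly the paper's argument, and it should be your main line rather than a fallback. Note that the $\ell^p$ control is not supplied by Lemma~\ref{Le:FuncApprotlocapolyno}. It comes from the bounded overlap of the cubes together with the extension estimate.

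Keep the local coefficient bound inside the sum; the sum over $\alpha$ only costs a constant. This gives
\[
\|f_{m,\alpha}-\phi\|_{W^{1,p}((0,1)^d)}\le CN^{d/p+1}k^{-1}\sum_{m}\|\widetilde f\|_{W^{n,p}(\Omega_{m,N})}.
\]
Then apply H\"older over the $(N+1)^d$ indices:
\[
\sum_{m}\|\widetilde f\|_{W^{n,p}(\Omega_{m,N})}\le (N+1)^{d(1-1/p)}\Bigl(\sum_{m}\|\widetilde f\|_{W^{n,p}(\Omega_{m,N})}^{p}\Bigr)^{1/p}\le CN^{d(1-1/p)}\|f\|_{W^{n,p}((0,1)^d)}.
\]
The last inequality holds because every point of $\RR^d$ lies in at most $2^d$ of the open cubes $\Omega_{m,N}$, combined with (\ref{eq:extentheorem}). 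The paper equivalently splits the index set into $2^d$ families of pairwise disjoint cubes. For $p=\infty$, read the middle term as $(N+1)^d\max_m\|\widetilde f\|_{W^{n,\infty}(\Omega_{m,N})}$. Multiplying gives $N^{d/p+1}\cdot N^{d-d/p}k^{-1}=N^{d+1}k^{-1}$.

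Incidentally, the same estimate applied to $\sum_{m,\alpha}|c_{f,m,\alpha}|$ in your path-norm count gives $K\le CN^{d+1}$. That is stronger than the lemma needs.
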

\begin{proof}
By Lemma \ref{PUproductNNapp}, for all $\alpha\in \NN_{0}^{d}$ with $\|\alpha\|_{1}\leq n-1$ and $m\in\{0,\ldots,N\}^{d}$, $\Psi_{m}x^{\alpha}$
 can be approximated by a neural network $
\phi_{m,\alpha}\in\mathcal{NN}((d+n-1)(3k+2), \lceil \log_{2}(d+n-1)\rceil+1, 60\cdot27^{\lceil \log_{2}(d+n-1)\rceil}N)$.
Now, we can approximate $f_{m,\alpha}$ by
\[
\phi=\sum_{m\in\{0,\ldots, N\}^{d}}\sum_{\|\alpha\|_{1}\leq n-1}c_{f,m,\alpha}
\phi_{m,\alpha}.
\]
The approximation error  is
\begin{align}\label{mixplyapp}
&\left\|f_{m,\alpha}(x)-\phi(x)\right\|_{W^{1,p}((0,1)^{d})}\nonumber\\
&\quad =\left\|\sum_{m\in\{0,\ldots, N\}^{d}}\sum_{\|\alpha\|_{1}\leq n-1}c_{f,m,\alpha}\Psi_{m}(x)x^{\alpha}-
\sum_{m\in\{0,\ldots, N\}^{d}}\sum_{\|\alpha\|_{1}\leq n-1}c_{f,m,\alpha}
\phi_{m,\alpha}(x)\right\|_{W^{1,p}((0,1)^{d})}\nonumber\\
&\quad\leq\sum_{m\in\{0,\ldots, N\}^{d}}\sum_{\|\alpha\|_{1}\leq n-1}|c_{f,m,\alpha}|\left\|\Psi_{m}(x)x^{\alpha}-
\phi_{m,\alpha}(x)\right\|_{W^{1,p}((0,1)^{d})}\nonumber\\
&\quad \leq  \sum_{m\in\{0,\ldots, N\}^{d}}\sum_{\|\alpha\|_{1}\leq n-1}C\|\widetilde{f}\|_{W^{n,p}(\Omega_{m,N})}N^{d/p}
\left\|\Psi_{m}(x)x^{\alpha}-
\phi_{m,\alpha}(x)\right\|_{W^{1,\infty}((0,1)^{d})}\nonumber\\
&\quad\leq CN^{d/p}N k^{-1}\sum_{m\in\{0,\ldots, N\}^{d}}\sum_{\|\alpha\|_{1}\leq n-1}\|\widetilde{f}\|_{W^{n,p}(\Omega_{m,N})}\nonumber\\
&\quad\leq CN^{d/p+1}k^{-1}\sum_{m\in\{0,\ldots, N\}^{d}}\|\widetilde{f}\|_{W^{n,p}(\Omega_{m,N})},
\end{align}
where steps $2$, $3$, and $4$ respectively  use the triangle inequality, Lemma \ref{Le:FuncApprotlocapolyno} together with $\|\cdot\|_{W^{1,p}((0,1)^{d})}
\leq C\|\cdot\|_{W^{1,\infty}((0,1)^{d})}$, and Lemma \ref{PUproductNNapp};
 the last step uses the fact that
$\sum_{\|\alpha\|_{1}\leq n-1}1=\sum_{j=0}^{n-1}\sum_{\|\alpha\|_{1}=j}1
\leq \sum_{j=0}^{n-1}d^{j}\leq nd^{n-1}$. Here
$\widetilde{f}\in  W^{n,p}(\RR^{d})$
is an extension of $f$, $\Omega_{m,N}=B_{\frac{1}{N},\|\cdot\|_{\infty}}\left(\frac{m}{N}\right)$, and $C=C(d,n, p)>0$ is a  constant.
Furthermore, we note that from the definition of $\Omega_{m,N}$ it
follows that there exist $2^{d}$ disjoint subsets ${\mathcal M_{i}}\subset\{0,\ldots, N\}^{d}$
such that $\bigcup_{i=1,\ldots,2^{d}}{\mathcal M_{i}}=\{0,\ldots, N\}^{d}$ and $\Omega_{m_{1},N}\cap\Omega_{m_{2},N}=\varnothing$
for all $m_{1},m_{2}\in{\mathcal M_{i}}$ with $m_{1}\neq m_{2}$ and all $i=1,\ldots,2^{d}$.
From this  we get
\begin{align}\label{multextenfun}
&\sum_{m\in\{0,\ldots, N\}^{d}}\|\widetilde{f}\|_{W^{n,p}(\Omega_{m,N})}\nonumber\\
&\qquad=\sum_{i=1}^{2^{d}}\sum_{m\in{\mathcal M_{i}}} \|\widetilde{f}\|_{W^{n,p}(\Omega_{m,N})}\nonumber\\
&\qquad\le \sum_{i=1}^{2^{d}}\left(\left\lceil \frac{N}{2} \right\rceil + 1\right)^{d(1-1/p)}
\left(\sum_{m\in{\mathcal M_{i}}} \|\widetilde{f}\|_{W^{n,p}(\Omega_{m,N})}^p\right)^{1/p}\nonumber\\
&\qquad \le C\sum_{i=1}^{2^{d}}N^{d(1-1/p)}
\|\widetilde{f}\|_{W^{n,p}\left(\bigcup_{m\in{\mathcal M_{i}}}\Omega_{m,N}\right)}\nonumber\\
&\qquad\le C\sum_{i=1}^{2^{d}}N^{d(1-1/p)}\|\widetilde{f}\|_{W^{n,p}\left(\bigcup_{m\in\{0,\ldots, N\}^{d}}\Omega_{m,N}\right)}\nonumber\\
&\qquad\leq CN^{d(1-1/p)}\|f\|_{W^{n,p}((0,1)^{d})},
\end{align}
where the second step follows from Hölder's inequality and the estimate $|\mathcal{M}_i| \le (\lceil N/2\rceil+1)^d$, the third step uses the additivity of the $p$-th power of the $W^{n,p}$-norm over the disjoint sets $\Omega_{m,N}$ within each $\mathcal{M}_i$, the fourth step uses the inclusion $\bigcup_{m\in\mathcal{M}_i}\Omega_{m,N} \subset \bigcup_{m\in\{0,\ldots,N\}^d}\Omega_{m,N}$, and the last step follows from Remark \ref{Rem: extentheorem}.
Inserting (\ref{multextenfun}) into (\ref{mixplyapp}),  we obtain
\[
 \left\|f_{m,\alpha}(x)-\phi(x)\right\|_{W^{1,p}((0,1)^{d})}
 \leq C\|f\|_{W^{n,p}((0,1)^{d})}N^{d/p+1}N^{d(1-1/p)}k^{-1}
 \leq CN^{d+1}k^{-1},
\]
where $C=C(d, n, p)>0$ is a suitable constant.

By Lemma \ref{Le:FuncApprotlocapolyno} together with Remark \ref{Rem: extentheorem},  we  have
\[
|c_{f, m,\alpha}|\leq C \|\widetilde{f}\|_{W^{n,p}(\Omega_{m,N})}N^{d/p}
\leq C\|f\|_{W^{n,p}((0,1)^{d})}N^{d/p}
\leq CN^{d/p}.
\]
Finally, by Lemma \ref{NN_Com-con}, for all $\alpha\in \NN_{0}^{d}$ with $\|\alpha\|_{1}\leq n-1$ and $m\in\{0,\ldots,N\}^{d}$, we get
$\phi\in\mathcal{NN}(C(N+1)^{d}(3k+2), \lceil \log_{2}(d+n-1)\rceil+1, CN^{d/p+1}(N+1)^{d})$,
where $C=C(d, n, p)>0$.
This completes the proof.

\end{proof}

By combining Lemma \ref{Le:FuncApprotlocapolyno} with Lemma \ref{poltNNapp},
we are now ready to prove Theorem \ref{upperboundapp}.

\begin{proof}[Proof of  Theorem \ref{upperboundapp}]
We divide the proof into two steps.  First, we approximate the function $f$ by a sum of
localized polynomials. Second, we  approximate this sum by a deep norm  constrained neural network.

For the first step, by Lemma \ref{Le:FuncApprotlocapolyno}, we choose
$N=\lceil k^{1/(d+n)}\rceil $. Then,  for any $m\in\{0,\ldots,N\}^{d}$,  there exists a polynomial
$p_{m}(x)=\sum_{\|\alpha\|_{1}\leq n-1}c_{f,m,\alpha}x^{\alpha}$ such that
\begin{align}\label{Appromonomia}
\left\|f-\sum_{m\in\{0,\ldots,N\}^{d}}\Psi_{m}p_{m}\right\|_{W^{1,p}((0,1)^{d})}
&=\left\|f-f_{m,\alpha}\right\|_{W^{1,p}((0,1)^{d})}\nonumber\\
&\leq C\left(\frac{1}{N}\right)^{n-1}
\leq Ck^{-\frac{n-1}{d+n}},
\end{align}
where $C=C(d,n,p)>0$ is a constant.

In the second step, by Lemma \ref{poltNNapp}, for any $k\geq 4^{\lceil \log_{2}\left(d+n-1\right)\rceil +1}$,
there exists a neural network $\phi\in\mathcal{NN}(W,L,K)$ with
\[
\begin{aligned}
W&=C(N+1)^{d}(3k+2)
\asymp k^{(2d+n)/(d+n)},\\
L&=\lceil \log_{2}(d+n-1)\rceil+1,\\
K&
=CN^{d/p+1}(N+1)^{d}
\asymp k^{(d+d/p+1)/(d+n)},
\end{aligned}
\]
 such that
\begin{align}\label{plyapp}
 \left\|\sum_{m\in\{0,\ldots,N\}^{d}}\Psi_{m}p_{m}-\phi\right\|_{W^{1,p}((0,1)^{d})}
 &= \left\|f_{m,\alpha}-\phi\right\|_{W^{1,p}((0,1)^{d})}\nonumber\\
 &\leq CN^{d+1}k^{-1}
 \leq  Ck^{-\frac{n-1}{d+n}},
\end{align}
where $C=C(d,n,p)>0$ is a constant.
Therefore, we have
\[
k\asymp K^{(d+n)/(d+d/p+1)},
\]
and consequently,
\[
W\asymp k^{(2d+n)/(d+n)}
\asymp K^{(2d+n)/(d+d/p+1)}.
\]
When $K\gtrsim 4^{(\lceil \log_{2}(d+n-1)\rceil +1)(d+d/p+1)/(d+n)}$, applying the triangle inequality  together with (\ref{Appromonomia}) and (\ref{plyapp}), we get
\[
\begin{aligned}
&\left\|f-\phi\right\|_{W^{1,p}((0,1)^{d})}\\
&\quad\leq\left\|f-\sum_{m\in\{0,\ldots,N\}^{d}}\Psi_{m}p_{m}\right\|_{W^{1,p}((0,1)^{d})}
+\left\|\sum_{m\in\{0,\ldots,N\}^{d}}\Psi_{m}p_{m}-\phi\right\|_{W^{1,p}((0,1)^{d})}\\
&\quad  \lesssim k^{-\frac{n-1}{d+n}}\\
&\quad\lesssim  K^{-\frac{n-1}{d+d/p+1}}.
\end{aligned}
\]
Since increasing $W$ and $L$ can only reduce the approximation error, the bound holds for any
$W\gtrsim K^{(2d+n)/(d+d/p+1)}$ and $L\geq\lceil \log_{2}(d+n-1)\rceil+1$.
This completes the proof.
\end{proof}

\section{Conclusions}\label{Conclusions}

This paper has established approximation error bounds for norm constrained neural networks in the $W^{1,p}$-norm. For shallow norm constrained networks, we showed that the error bound decays polynomially in both the width and the path norm constraint, provided the parameters satisfy $n<s=(d+3)/2$ and $2\le p\le\infty$. For deep norm constrained networks, we proved that the error bound decays polynomially in the path norm constraint, independent of the network width and depth, making the result particularly suitable for over-parameterized settings.

Several open questions remain. The present analysis is restricted to the regime $n<s=(d+3)/2$ and $2\le p\le\infty$ for shallow norm constrained  networks; extending the results to $n \ge (d+3)/2$ or $1\le p<2$ would be an interesting direction for future work. Moreover,
whether deep norm constrained networks can achieve even better approximation rates remains an open problem.
Finally, it would be worthwhile to investigate whether the norm constrained approximation framework can be further extended to other function classes.

\section*{Acknowledgments}
The work described in this paper was partially supported by National Natural Science Foundation of China under Grants 12501131 and 12526216.
\begin{appendix}
\section{Some useful definitions and lemmas}

In this appendix, we provide some definitions and lemmas, which will be
used in the proof of our main results.

We give the definition of the normalized smooth radially symmetric bump function, which will be used in Section \ref{Proofofresult1}.
\begin{definition}\label{Def:smooradsymbu}
A function $\phi : \mathbb{R}^d \to [0,\infty)$ is called a \emph{normalized smooth radially symmetric bump function supported on}  $\overline{\BB^d}$,  if it satisfies the following properties:
\begin{enumerate}[label=\textnormal{(\arabic*)}]
\item $\phi \in C_c^\infty(\mathbb{R}^d)$
and $\phi(x) \ge 0$ for all $x \in \mathbb{R}^d$;
\item $\operatorname{supp} \phi = \overline{\BB^d}$;
\item $\int_{\mathbb{R}^d} \phi(x)dx = 1$;
\item $\phi$ is radial, i.e., $\phi(x) = \phi(y)$ whenever $\|x\|_{2} = \|y\|_{2}$.
\end{enumerate}
\end{definition}
A normalized smooth radially symmetric bump function supported on  $\overline{\BB^d}$ always exists. A possible choice is
\[
\psi(x) = \begin{cases}
\exp\left( -\frac{1}{1 - \|x\|_2^2} \right), & \|x\|_2 < 1, \\
0, & \text{otherwise},
\end{cases}
\]
and then define $\phi(x) = \bigl( \int_{\mathbb{R}^d} \psi(y)dy \bigr)^{-1} \psi(x)$, so that $\operatorname{supp} \phi = \overline{\BB^d}$ and $\int_{\mathbb{R}^d} \phi(x)dx = 1$.

We introduce the notion of partition  of unity (in the same way as in \citep[Theorem 1]{yarotsky2017error} and \citep[Lemma C.3]{guhring2020error}) which can be defined as a tensor product of piecewise linear functions. This concept plays a crucial role in localizing function approximations.
\begin{definition}\label{PU}
Define the one-dimensional  bump function $\psi: \mathbb{R} \to \mathbb{R}$ by
\[
\psi(x) := \begin{cases}
1, & |x| < 1,\\
0, & 2 < |x|,\\
2 - |x|, & 1 \le |x| \le 2.
\end{cases}
\]
For $d, N \in \mathbb{N}$ and $m=(m_{1},\ldots,m_{d})\in \{0,1,\dots,N\}^d$, define the $d$-dimensional  bump function $\Psi_{m}: \mathbb{R}^d \to \mathbb{R}$ as a tensor product of scaled and shifted versions of $\psi$:
\begin{equation}\label{PU1}
\Psi_{m}(x) := \prod_{l=1}^d \psi\!\left(3N\Bigl(x_l - \frac{m_l}{N}\Bigr)\right), \qquad x = (x_1,\dots,x_d)^\top \in \mathbb{R}^d.
\end{equation}
The family $\{\Psi_{m} : m \in \{0,\dots,N\}^d\}$
forms a partition of unity on $[0,1]^d$ with the following properties:
\begin{enumerate}[label=\textnormal{(\arabic*)}]
    \item $0 \le \Psi_{m}(x) \le 1$ for all $m$ and all $x \in \mathbb{R}^d$;
    \item $\displaystyle\sum_{m \in \{0,\dots,N\}^d} \Psi_{m}(x) = 1$ for all $x \in [0,1]^d$;
    \item $\|\Psi_{m}\|_{W^{1,\infty}(\RR^{d})}\le 3N$ for all $m$;
    \item $\operatorname{supp} \Psi_{m} \subset B_{\frac1N,\|\cdot\|_\infty}\left(\frac{m}{N}\right)$ for all $m$.
\end{enumerate}
\end{definition}

The following lemma, which
follows from \citep[Corollary B.5]{guhring2020error},
establishes a chain rule  estimate for composite  functions in $W^{1,\infty}$.

\begin{lemma}\label{chainandproduct rule}
Let $d_1,d_2\in \mathbb{N}$, and let $\Omega_1\subset \mathbb{R}^{d_1}$, $\Omega_2\subset \mathbb{R}^{d_2}$ be open, bounded, and convex domains.
Then there exists  a constant $C=C(d_1,d_2)>0$  such that the following  estimate holds:
Let $f\in W^{1,\infty}(\Omega_1;\mathbb{R}^{d_2})$ and $g\in W^{1,\infty}(\Omega_2)$ be Lipschitz functions
with $\mathrm{range}(f)\subset \Omega_2$. Then $g\circ f\in W^{1,\infty}(\Omega_1)$, and
\begin{equation}\label{chainrule}
\|g\circ f\|_{W^{1,\infty}(\Omega_1)}\leq C\max\left\{\|g\|_{L^{\infty}(\Omega_2)},|g|_{W^{1,\infty}(\Omega_2)}\cdot |f|_{W^{1,\infty}(\Omega_1;\mathbb{R}^{d_2})}\right\}.
\end{equation}
\end{lemma}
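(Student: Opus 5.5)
The plan is to derive the estimate directly from the ordinary chain rule for Lipschitz maps, applied on convex domains. First we record the regularity. By Rademacher's theorem, together with the standard identification of $W^{1,\infty}$ on convex (hence Lipschitz) domains with Lipschitz functions, $g$ is Lipschitz on $\Omega_2$ with constant at most $C|g|_{W^{1,\infty}(\Omega_2)}$. Here convexity lets us integrate along segments, so the constant is controlled by the seminorm with a factor depending only on $d_2$. Likewise each component $f_i$ is Lipschitz on $\Omega_1$ with constant at most $C|f|_{W^{1,\infty}(\Omega_1;\mathbb{R}^{d_2})}$. Composing, $g\circ f$ is Lipschitz on $\Omega_1$ with constant at most $C|g|_{W^{1,\infty}}|f|_{W^{1,\infty}}$. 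Since $\Omega_1$ is bounded and $g\circ f$ is bounded by $\|g\|_{L^\infty(\Omega_2)}$, it follows that $g\circ f\in W^{1,\infty}(\Omega_1)$.

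For the $L^\infty$ part, I would note that $\mathrm{range}(f)\subset\Omega_2$. Taking $g$ to be its continuous (Lipschitz) representative, we get $|g(f(x))|\le \sup_{\Omega_2}|g|=\|g\|_{L^\infty(\Omega_2)}$ for all $x$. Hence $\|g\circ f\|_{L^\infty(\Omega_1)}\le\|g\|_{L^\infty(\Omega_2)}$.

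For the seminorm, the cleanest route avoids the pointwise chain rule, which is delicate when $f$ maps a positive-measure set into the null set where $g$ fails to be differentiable. Instead I would use the characterization that, on a convex domain, the $W^{1,\infty}$ seminorm of a Lipschitz function $h$ is comparable, with constants depending only on dimension, to its Lipschitz constant: $\sup_{x\neq y}|h(x)-h(y)|/\|x-y\|$. The bound $\mathrm{Lip}(h)\lesssim |h|_{W^{1,\infty}}$ follows by mollification and integration along segments, which is where convexity enters. The converse follows from difference quotients. Then
\[
|g(f(x))-g(f(y))|\le \mathrm{Lip}(g)\,\|f(x)-f(y)\|\le \mathrm{Lip}(g)\,\mathrm{Lip}(f)\,\|x-y\|,
\]
and converting back gives $|g\circ f|_{W^{1,\infty}(\Omega_1)}\le C|g|_{W^{1,\infty}(\Omega_2)}|f|_{W^{1,\infty}(\Omega_1;\mathbb{R}^{d_2})}$. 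The norm-equivalence constants between $\ell_1$, $\ell_2$ and $\ell_\infty$ on $\mathbb{R}^{d_1},\mathbb{R}^{d_2}$ are absorbed into $C=C(d_1,d_2)$. Taking the maximum of the two bounds yields \eqref{chainrule}.

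The main obstacle is justifying the equivalence between the Lipschitz constant and the $W^{1,\infty}$ seminorm on a convex domain. In particular, for $g$ we need the segment between $f(x)$ and $f(y)$ to stay inside $\Omega_2$, and this is exactly where convexity of $\Omega_2$ is essential. I would prove this by mollifying $g$ on slightly shrunken convex subdomains and applying the fundamental theorem of calculus along segments. Alternatively, I would simply cite the corresponding result, \citep[Corollary B.5]{guhring2020error}, which the statement follows.
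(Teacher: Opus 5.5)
Your proposal is correct and agrees with the paper. The paper gives no argument of its own and simply invokes \citep[Corollary B.5]{guhring2020error}, exactly as your fallback suggests. Your self-contained route is the standard proof behind that corollary:
\begin{itemize}
\item bound the Lipschitz constants of $g$ and of $f$ by their $W^{1,\infty}$ seminorms, using mollification and integration along segments (this is where convexity of $\Omega_1,\Omega_2$ enters);
\item compose the two Lipschitz estimates;
\item return to the seminorm of $g\circ f$ via difference quotients;
\item handle the $L^\infty$ part with the sup bound, using the continuous representative of $g$ and $\mathrm{range}(f)\subset\Omega_2$.
\end{itemize}
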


The next lemma, which follows from \citep[Proposition 2.5]{jiao2023approximation}, summarizes some basic operations on neural networks.  These operations will be useful for constructing neural networks in the study of approximation capacity in Section \ref{ApproximadeepReLU}.

\begin{lemma}\label{NN_Com-con}
Let $\phi_{1}\in  \mathcal{NN}_{d_{1},m_{1}}(W_{1},L_{1},K_{1})$ and $\phi_{2}\in  \mathcal{NN}_{d_{2},m_{2}}(W_{2},L_{2},K_{2})$.
\begin{enumerate}[label=\textnormal{(\arabic*)}]
\item If $d_{1}=d_{2}$, $m_{1}=m_{2}$, $W_{1}\leq W_{2}$, $L_{1}\leq L_{2}$ and $K_{1}\leq K_{2}$, then
$\mathcal{NN}_{d_{1},m_{1}}(W_{1},L_{1},K_{1})\subseteq \mathcal{NN}_{d_{2},m_{2}}(W_{2},L_{2},K_{2})$.
\item \textnormal{\textbf{(Composition)}} If $m_{1}=d_{2}$, then $\phi_{2}  \circ \phi_{1}
\in\mathcal{NN}_{d_{1},m_{2}}(\max\{W_{1},W_{2}\},L_{1}+L_{2}, K_{2}\max \{K_{1},1\})$.  Let $A\in \RR^{d_{2}\times d_{1}}$ and $b\in\RR^{d_{2}}$. Define the function
$\phi({x}):=\phi_{2}(A x+b)$ for $x \in \RR^{d_{1}}$, then $\phi \in \mathcal{NN}_{d_{1},m_{2}}(W_{2},L_{2},
K_{2}\max \{\|(A,b)\|,1\})$.
\item \textnormal{\textbf{(Concatenation)}}
If $d_{1}=d_{2}$, define  $\phi(x):=( \phi_{1}(x),  \phi_{2}(x))$,
then $\phi\in\mathcal{NN}_{d_{1},m_{1}+m_{2}}(W_{1}+W_{2},\max\{L_{1},L_{2}\}, \max \{K_{1},K_{2}\})$.
\item \textnormal{\textbf{(Linear Combination)}}
If $d_{1}=d_{2}$ and $m_{1}=m_{2}$, then, for any $c_{1},c_{2}\in\RR$,
 $c_{1}\phi_{1}+c_{2}\phi_{2}\in\mathcal{NN}_{d_{1},m_{1}}(W_{1}+W_{2},\max\{L_{1},L_{2}\}, |c_{1}|K_{1}+|c_{2}|K_{2})$.
\end{enumerate}
\end{lemma}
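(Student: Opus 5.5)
The plan is to prove all four items from one normalization step and then handle each item by an explicit matrix construction. The norm used here is the maximum row $\ell_1$-norm, and $\sigma$ is positively homogeneous.

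\textbf{Step 0 (normalization).} I would first show that every $\phi\in\mathcal{NN}(W,L,K)$ has a representation (\ref{definenet}) of the same width and depth with
\[
\|(A_\ell,b_\ell)\|\le 1 \quad (0\le \ell\le L-1), \qquad \|A_L\|\le K .
\]
This is done inductively from $\ell=0$ upward. Put $c_\ell:=1/\max\{\|(A_\ell,b_\ell)\|,1\}\le 1$ and replace $(A_\ell,b_\ell)$ by $c_\ell(A_\ell,b_\ell)$. By homogeneity of $\sigma$, this multiplies $\phi_{\ell+1}$ by $c_\ell$. To compensate, replace $A_{\ell+1}$ by $A_{\ell+1}/c_\ell$ and leave $b_{\ell+1}$ unchanged. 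Since $1/c_\ell\ge1$, we get
\[
\|(A_{\ell+1}/c_\ell,\,b_{\ell+1})\|\le c_\ell^{-1}\|(A_{\ell+1},b_{\ell+1})\| .
\]
So the factor $c_\ell^{-1}$ is pushed one layer forward. After processing all hidden layers, the output matrix has norm at most $\|A_L\|\prod_\ell \max\{\|(A_\ell,b_\ell)\|,1\}\le K$. For this normalized representation, the path-norm expression is exactly $\|A_L'\|\le K$.

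\textbf{Item (1).} Increasing the width means padding with zero neurons, which does not change any row sum. To increase the depth, insert layers $\sigma(I h+0)$ right after the last hidden layer. The input $h=\phi_L\ge0$, so such a layer acts as the identity, and $\max\{\|(I,0)\|,1\}=1$. Hence the path norm is unchanged.

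\textbf{Item (2), composition.} Normalize $\phi_1$ as in Step 0. Then merge the output matrix $A^1_{L_1}$ of $\phi_1$ into the first layer of $\phi_2$:
\[
\sigma\bigl(A^2_0A^1_{L_1}h+b^2_0\bigr).
\]
This gives depth $L_1+L_2$ and width $\max\{W_1,W_2\}$. The key row-wise estimate is
\[
\sum_j|(A^2_0A^1_{L_1})_{ij}|+|b^2_{0,i}|\le \max\{\|A^1_{L_1}\|,1\}\Bigl(\sum_k|a^2_{ik}|+|b^2_{0,i}|\Bigr),
\]
so
\[
\max\{\|(A^2_0A^1_{L_1},b^2_0)\|,1\}\le \max\{\|(A^2_0,b^2_0)\|,1\}\,\max\{K_1,1\}.
\]
The hidden layers of the normalized $\phi_1$ contribute factors equal to $1$. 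Therefore the total path norm is at most $K_2\max\{K_1,1\}$. The affine precomposition $x\mapsto\phi_2(Ax+b)$ works the same way: merge into the first layer, with
\[
\|(A^2_0A,\;A^2_0b+b^2_0)\|\le \|(A^2_0,b^2_0)\|\max\{\|(A,b)\|,1\}.
\]

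\textbf{Items (3) and (4).} Normalize both networks and pad the shallower one to equal depth using item (1); the identity layers keep the hidden norms at most $1$. Then build a parallel network:
\begin{itemize}
\item the first-layer matrices are stacked vertically, since the input is shared;
\item the later hidden layers are block diagonal;
\item all hidden row norms therefore remain at most $1$.
\end{itemize}
For concatenation, the output matrix is block diagonal, with norm $\max\{\|A^1_L\|,\|A^2_L\|\}\le\max\{K_1,K_2\}$. For a linear combination, the output row is $(c_1A^1_L,\ c_2A^2_L)$, with norm at most $|c_1|K_1+|c_2|K_2$. In both cases the width is $W_1+W_2$.

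The main obstacle is Step 0. Without rebalancing, factors $\max\{\cdot,1\}$ coming from hidden layers of small norm could exceed $\max\{K_1,1\}$ in the composition bound. The point to check carefully is that biases are \emph{not} rescaled when compensating in the next layer; this is exactly why the constraint uses $\max\{\cdot,1\}$.
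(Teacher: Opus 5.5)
Your proposal is correct and is essentially the standard argument behind \citep[Proposition 2.5]{jiao2023approximation}, which the paper cites without giving a proof of its own. That argument first rescales, using positive homogeneity of $\sigma$, so that every hidden layer satisfies $\|(A_\ell,b_\ell)\|\le 1$ and the output matrix carries the whole constant; it then merges the output matrix of $\phi_1$ (or the affine map) into the first layer of $\phi_2$ for composition, and uses stacked and block-diagonal weights for concatenation and linear combination. The one step worth writing out is in Step 0: each $c_\ell$ must be computed from the already-rescaled matrix, so that $c_\ell^{-1}\le c_{\ell-1}^{-1}\max\{\|(A_\ell,b_\ell)\|,1\}$, and this telescopes to the claimed bound $\|A_L\|\prod_{\ell}\max\{\|(A_\ell,b_\ell)\|,1\}$ on the new output matrix.
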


\end{appendix}

\bibliographystyle{myplainnat}
\bibliography{references}
\end{document}